%% file: main_paper.tex
\documentclass[twoside]{article}

\usepackage[accepted]{aistats2026}
\usepackage{natbib} 

\usepackage[utf8]{inputenc} 
\usepackage[T1]{fontenc}    
\usepackage{hyperref}       
\usepackage{url}            
\usepackage{booktabs}       
\usepackage{amsfonts}       
\usepackage{nicefrac}       
\usepackage{microtype}      
\usepackage{xcolor}         
\usepackage{amsthm}         
\usepackage{amssymb}        
\usepackage{graphicx}
\usepackage{enumitem}       
\usepackage{nicematrix} 
\usepackage{subcaption} 
\usepackage{multirow}   
\usepackage{dblfloatfix} 
\graphicspath{ {./Figures/} }
\input{macros}

\usepackage{wrapfig}

\begin{document}
\runningauthor{Maude Lizaire, Michael Rizvi-Martel, Éric Dupuis  Guillaume Rabusseau}

\twocolumn[

\aistatstitle{On the Role of Depth in the Expressivity of RNNs}

\aistatsauthor{Maude Lizaire \And Michael Rizvi-Martel \And Éric Dupuis \And Guillaume Rabusseau }

\aistatsaddress{ Mila \& DIRO \\ Université de Montréal \And  Mila \& DIRO \\ Université de Montréal \And Independent \And Mila \& DIRO, CIFAR AI Chair \\ Université de Montréal} ]

\begin{abstract}
The benefits of depth in feedforward neural networks are well known: composing multiple layers of linear transformations with nonlinear activations enables complex computations. While similar effects are expected in recurrent neural networks (RNNs), it remains unclear how depth interacts with recurrence to shape expressive power. Here, we formally show that depth increases RNNs’ memory capacity efficiently with respect to the number of parameters, thus enhancing expressivity both by enabling more complex input transformations and improving the retention of past information. We broaden our analysis to 2RNNs, a generalization of RNNs with multiplicative interactions between inputs and hidden states. Unlike RNNs, which remain linear without nonlinear activations, 2RNNs perform polynomial transformations whose maximal degree grows with depth. We further show that multiplicative interactions cannot, in general, be replaced by layerwise nonlinearities. Finally, we validate these insights empirically on synthetic and real-world tasks.
\end{abstract}

\begin{figure}[h]
\centering
\includegraphics[width=0.47\textwidth]{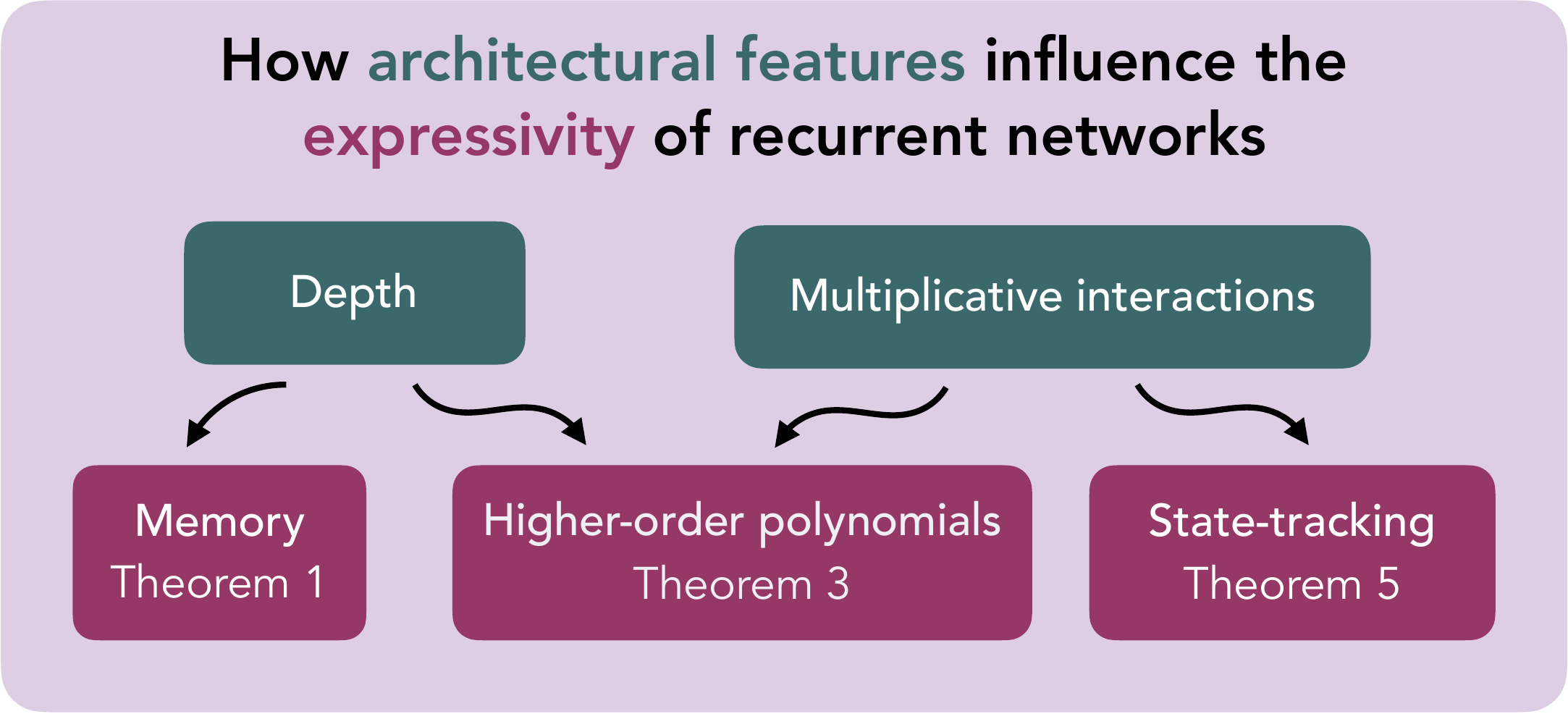}
\caption{Theoretical insights overview of the effects of architectural choices on the expressivity of RNNs.}\label{fig:summary}
\end{figure}

\section{Introduction}
It is well known that Feedforward neural networks (FNNs) are universal approximators~\citep{hornik1989multilayer,cybenko1989approximation}. In practice however, their capacity is limited as they may require impractically large hidden sizes to compute complex functions. That is where the benefit of depth comes to play. Indeed, stacking multiple layers results in a composition of nonlinear transformations, and it is understood that increasing the number of such compositions allows the network to represent more complex functions~\citep{telgarsky2016benefits,rolnick1705power,montufar2014number,haastad1991power}. In the absence of nonlinearities, however, deep feedforward neural networks essentially collapse into shallow ones, as the composition of linear functions results in just another linear transformation.

A similar situation occurs in neural networks computing functions over sequences. Recurrent neural networks~(RNNs) are known to be Turing complete~\citep{siegelmann1992computational,siegelmann1994analog,siegelmann1996recurrent}, but their remarkable expressive power relies on impracticable assumptions, namely infinite precision or unbounded computation time~\citep{weiss2018practical}. The impact of depth in recurrent networks is however less straightforward than in feedforward networks.
While the benefits of depth in FNNs can be extended to RNNs by simply ignoring recurrence, this overlooks the complex dynamics introduced by recurrent connections.
Indeed, in sequence modeling, the expressivity of a model involves not only to transform inputs in meaningful representations, but also the capacity to propagate and combine information through time in useful ways. In order to fundamentally understand how depth influences the expressive power of RNNs over sequences, the subtle interplay between recurrence and depth ought to be examined.

In this work, we investigate how depth influences the expressivity of RNNs. First, we focus on \emph{linear} RNNs to isolate the interplay between recurrence and depth from the expressive gain arising by composing nonlinear activations. We formally show that deep linear RNNs are strictly more expressive than shallow ones as they have a greater ability to memorize information. Furthermore, we prove that increasing depth, rather than hidden size, is a more parameter-efficient approach to enhance the network’s memory.

Second, we explore how the effect of depth manifests in models with multiplicative interactions between inputs and hidden states, which we refer to as second-order recurrent neural networks~(2RNNs). Here, stacking layers has a similar effect as the composition of nonlinear activations: it expands the class of functions the model can represent. Specifically, linear 2RNNs compute polynomials of their inputs, whose degree increases with the number of layers. 
As a result, deep linear 2RNNs are strictly more expressive than their shallow counterparts. We further consider models whose bilinear terms are parameterized by a CP decomposition, CPRNNs, and show that depth does not alter the expressive gain obtained by increasing the rank of the decomposition.

Third, we investigate how the gain in expressive power from stacking nonlinear activations differs from the one provided by multiplicative interactions, showing that there exist functions computable by single-layer 2RNNs (specifically those requiring state-tracking) that cannot be realized by deep RNNs with nonlinear activations applied only depth-wise.

We study how these theoretical findings translate in practice with gradient descent optimization through synthetic and real data experiments on RNNs, 2RNNS and SSMs~(S4). Empirically, RNNs capacity to memorize and copy information with respect to depth supports our theoretical analysis, even when nonlinearity is added. When tested on parity, a task not requiring memory, but rather the ability to state-track via temporal multiplicative interactions, and found that the impact of depth is highly dependent on the way nonlinearities are applied~(i.e. recurrently or only in depth). On real datasets, whether performing language modeling on tiny Shakespeare or testing Long Rang Arena benchmarks,  we find that depth improves performance quite consistently, while the parameter efficiency benefit is task-dependent as predicted theoretically. 

Our contributions can be summarized as follows:  
\begin{itemize}
    \item Our theoretical analysis reveals that even in the absence of nonlinearities, depth strictly increases the expressivity of RNNs (Theorem~\ref{thm:deepRNN}). More precisely, it enlarges the hidden capacity of the network, and we show that for certain tasks, such as those requiring memory, depth provides a parameter-efficient means to improve expressivity (Theorem~\ref{thm:params}).
    \item Whereas linear RNNs compute only linear transformations of their inputs regardless of depth, we show that in 2RNNs, increasing the number of layers directly enables the computation of higher-order polynomials (Theorem~\ref{thm:BIRNN}). This result extends to CPRNNs, and we further show that depth does not alter the effect of the rank on the network’s capacity (Theorem~\ref{thm:cprnn}). 
    \item We establish a fundamental separation between the effects of nonlinear activations and multiplicative interactions, by showing that certain functions computed by single-layer 2RNNs cannot be realized by deep RNNs with nonlinear activations only in depth (Theorem~\ref{thm:WFA}).
    \item Finally, we provide a diverse set of experiments that validates and illustrates the insights revealed by our theoretical analysis\footnote{Code base for this paper can be found at \url{https://github.com/MaudeLiz/Role_of_Depth_in_RNNs}}. 

\end{itemize}

\paragraph{Related work}
RNNs~\citep{elman1990finding} are a natural choice for sequence modeling. Many gated variants of such models were introduced to mitigate vanishing/exploding gradient issues~\citep{hochreiter1997long,cho2014properties,chung2014empirical}. Recently, the advent of SSMs has brought recurrent models back into the limelight~\citep{gu2021efficiently,gu2023mamba,gu2020hippo,nguyen2022s4nd,gupta2022diagonal}. This renewed interest has prompted recent studies on the expressivity and limitations of SSMs~\citep{wang2023state,wang2024understanding,grazzi274141450unlocking}. In particular, it was shown that these models, who only have nonlinear activations depth-wise cannot state-track~\citep{merrill2024illusion}.
Early research on expressive power of sequence models~\citep{siegelmann1992computational,steijvers2019recurrent,boden1999learning} showed that recurrent models can recognize regular and other formal languages and are even Turing complete (with unbounded precision and compute time). 
More recently, novel hierarchies have been used to analyze the expressivity of RNNs~\citep{merrill2020formal} and theoretical work have highlighted fundamental differences between RNNs and transformers~\citep{bhattamishra2024separations}.
Previous work has largely overlooked the role of depth in RNNs, which is the focus of this study. In deep FNNs, depth has been shown to provide exponential gains in efficiency for certain functions~\citep{eldan2016power, telgarsky2015representation}. Similarly, depth in CNNs has been extensively studied using tensor network analysis~\citep{cohen2016convolutional, cohen2016expressive, cohen2016inductive, sharir2017expressive, alexander2023makes, razin2024ability}. 
For RNNs, analysis based on surrogate tensor decomposition  models demonstrated the benefit of depth, even for tasks that do not share a sequential inductive bias~\citep{khrulkov2017expressive,khrulkov2019generalized,levine2018benefits}.
Finally, second-order interactions have been used to make deep learning models both more interpretable~\citep{pearce2024bilinear} and more expressive~\citep{Irsoy2014ModelingCW, Jayakumar2020Multiplicative,cheng2024multilinear}. This led to the introduction of various multiplicative RNN architectures~\citep{tjandra2016gated, krause2017,sutskever2011generating,wu2016multiplicative,su2024language}. The expressivity of 2RNNs and their variants has been the object of theoretical studies~\citep{li2022connecting,lizaire2024tensor}.

\section{Preliminaries}

We begin by introducing the notation used in this work as well as the models studied. 
\vspace{-0.1cm}
\paragraph{Notation}
Our notation conventions are as follows: vectors, matrices and higher-order tensors are respectively denoted by bold lowercase letters $\v \in \R^d$, bold uppercase letters $\M \in \R^{d_1 \times d_2}$, and bold calligraphic letters $\Tt \in \R^{d_1 \times d_2 \times \dots \times d_p}$. A diagonal matrix with vector $\vb$ on the diagonal is noted $\diag(\vb)$.
The $n$-mode product of a tensor with a vector\footnote{Note that this notation  differs  from the one in the reference~\citep{kolda2009tensor} where $\times_n$ denotes the $n$-mode product of a tensor with a \textit{matrix}.} is defined by: $(\Tt~\times_n~\vec{v})_{i_1,\dots,i_{n-1},i_{n+1},\dots,i_{p}}~=~\sum_{i_n=1}^{d_n} \Tt_{i_1,\dots,i_{p}}\;\vec{v}_{i_n}$. Lastly, $[n]$ denotes the set of integers from $1$ to $n$ and $\ceil{x}$ is the smallest integer greater or equal to $x$.
\vspace{-0.1cm}
\paragraph{Models}
We present formal definitions of RNNs and 2RNNs, then we introduce CPRNNs and BIRNNs based on how they relate to the definitions provided.

\begin{definition}[RNN]\label{def:RNN}
A Recurrent Neural Network of depth $L$ and hidden size $n$ is parameterized by initial hidden state vectors $\vec{h}^{(l)}_0 \in \Rbb^n$,  weight matrices $\Ub^{(l)}, \Vb^{(l)}  \in \Rbb^{n \times n}$ (except $\Ub^{(1)} \in \Rbb^{n \times d}$), bias terms $\bb^{(l)} \in \Rbb^n$ and activation functions $\sigma^{(l)}:\Rbb^n \rightarrow \Rbb^n$ for $l\in[L]$. For any sequence length $T$, an RNN maps a sequence of inputs $\xb_1, \dots, \xb_T\in\Rbb^d$, to a sequence of hidden states $\hb^{(L)}_1, \dots, \hb^{(L)}_T\in\Rbb^n$ via the following computation at each time step $t\in[T]$ and layer $l\in[L]$ with $\hb^{(0)}_t=\xb_t \ \text{ for all } t$: 
\begin{equation}\label{eq:RNN}
\hb_t^{(l)}=\sigma^{(l)}( \Vb^{(l)} \hb_{t-1}^{(l)} + \Ub^{(l)} \hb_t^{(l-1)} + \bb^{(l)} ).
\vspace{-0.1cm}
\end{equation}
\end{definition}

The activation functions are applied element-wise. They generally consist of nonlinear functions such as $\tanh$ or $\text{ReLU}$. In this work, we focus on linear architectures, where $\sigma^{(l)}$ is the identity for all $l$. We call these \emph{linear RNNs}. Section~\ref{sec:theoryWFA} will also study RNNs with nonlinear activations applied only in depth:
\begin{equation}\label{eq:RNNonlydepth}
\hb_t^{(l)}=\Vb^{(l)} \hb_{t-1}^{(l)} + \Ub^{(l)}\sigma^{(l-1)} \left( \hb_t^{(l-1)} \right) + \bb^{(l)}.
\vspace{-0.1cm}
\end{equation}
Second-order RNNs are obtained by adding a bilinear term between inputs and hidden states to Definition~\ref{eq:RNN}. 

\begin{definition}[2RNN]\label{def:2RNN}
A Second-order Recurrent Neural Network of depth $L$ and hidden size $n$ is parameterized by initial hidden state vectors $\vec{h}^{(l)}_0 \in \Rbb^n$,  weight tensors $\At^{(l)} \in \Rbb^{n \times n\times n}$ (except for $\At^{(1)} \in \Rbb^{n \times d\times n}$), weight matrices $\Ub^{(l)}, \Vb^{(l)}  \in \Rbb^{n \times n}$ (except $\Ub^{(1)} \in \Rbb^{n \times d}$), bias terms $\bb^{(l)} \in \Rbb^n$ and activation functions $\sigma^{(l)}:\Rbb^n \rightarrow \Rbb^n$ for $l\in[L]$. For any sequence length $T$, a 2RNN maps a sequence of inputs $\xb_1, \dots, \xb_T$ to a sequence of hidden states $\hb^{(L)}_1, \dots, \hb^{(L)}_T$ via the following computation at each time step $t\in[T]$ and layer $l\in[L]$:
\begin{equation*}\sigma^{(l)}(\At^{(l)} \times_1 \hb_{t-1}^{(l)}\times_2 \hb_t^{(l-1)} + \Vb^{(l)} \hb_{t-1}^{(l)} + \Ub^{(l)} \hb_t^{(l-1)} + \bb^{(l)} ).
\vspace{-0.1cm}
\end{equation*}
\end{definition}

We also consider CPRNNs, which provide a practical alternative to 2RNNs whose weight tensors $\At^{(l)}$ scale cubically. It consists in parameterizing the bilinear term using a CP~decomposition instead of the full third-order tensor. Thus, $\At^{(l)}$ is replaced by a sum of outer products $ \sum_{r=1}^R \vec{a}^{(l)}_r \circ \vec{b}^{(l)}_r \circ \vec{c}^{(l)}_r$ with $\ab_r^{(l)},\bb_r^{(l)},\cb_r^{(l)} \in \Rbb^n$ for all $l\in[L]$~(expect $\bb_r^{(1)} \in \Rbb^d$). The rank $R$ is an additional hyperparameter of the model. For a comprehensive formal definition of this architecture, we refer the reader to~\citep{lizaire2024tensor}.  
Finally, we call BIRNNs second-order RNNs with only a bilinear term (i.e. $\Ub^{(l)}, \Vb^{(l)} \text{ and } \bb^{(l)}$ are all null). Similarly, CPRNNs with only their multiplicative term are CPBIRNNs, and we use the notation CP(BI)RNNs when both CPRNNs and CPBIRNNs are referenced at the same time. 
Figure~\ref{fig:unrolledRNN} presents how recurrent architectures unroll across depth and time. 
It also illustrates that in linear RNNs, the overall computation reduces to a linear map of the inputs, whereas in second-order RNNs (2RNNs), it corresponds to a polynomial function which will be discussed in Sections~\ref{sec:theoryRNNs} and~\ref{sec:2rnn}. 

\begin{figure}[h!]
\centering
\includegraphics[width=0.4\textwidth]{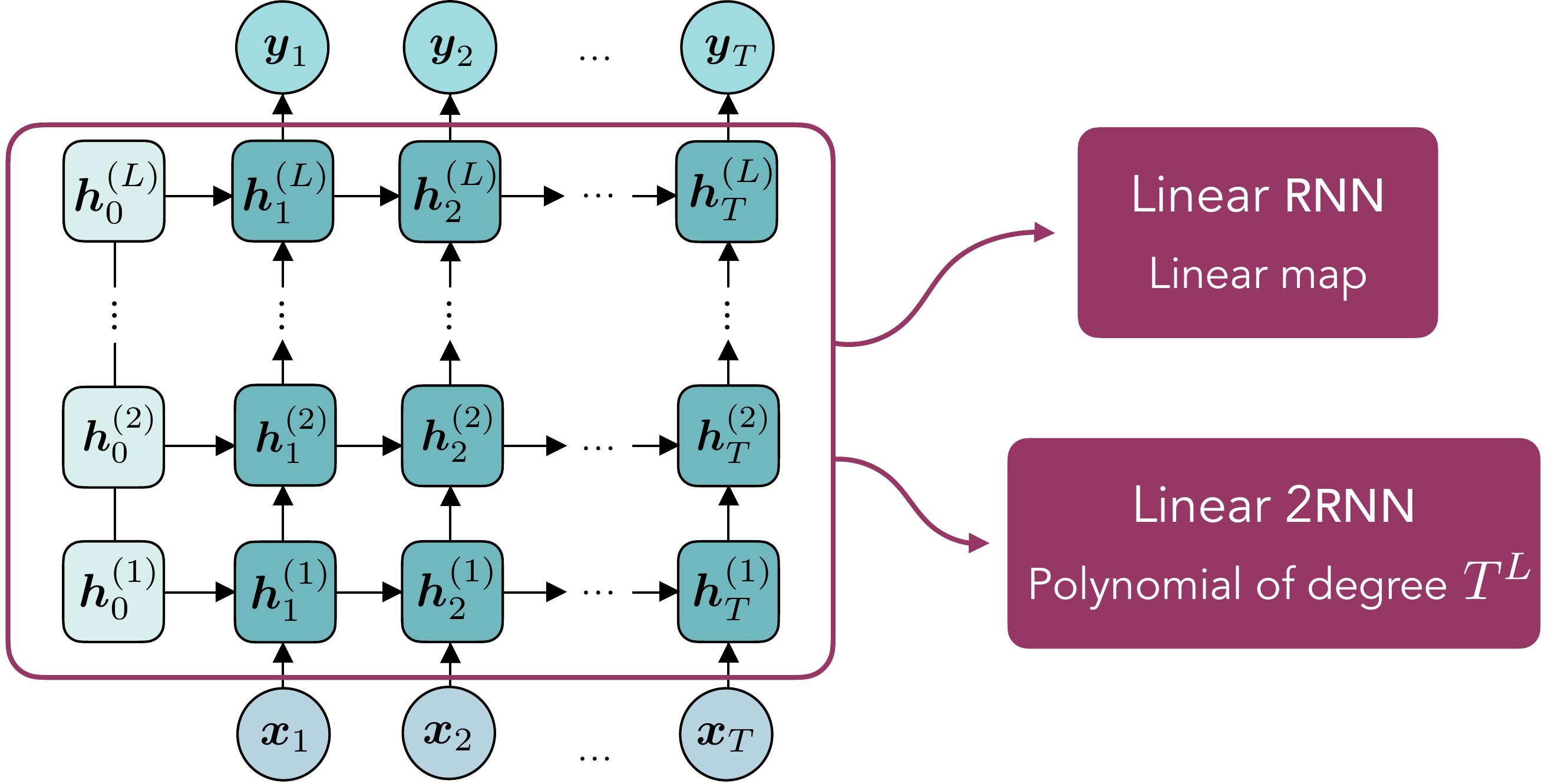}
\caption{Unrolled deep recurrent architecture. Linear RNNs compute linear mappings of the inputs, while linear 2RNNs produce polynomial ones.}\label{fig:unrolledRNN}
\end{figure}

\section{Theoretical Results}
To analyze the effect of depth on the expressivity of RNNs, we introduce a formal definition of the set of functions they can represent. 
\begin{definition}\label{def:functionclass} For any $n\geq 1$ and $L\geq 1$, $\hRNN{n,L}$ denotes the set of functions $h$ mapping input sequences of arbitrary length to the corresponding hidden state sequences computed by RNNs of hidden size $n$ and depth $L$, as given by Def.~\ref{def:RNN}: $h(\vec{x}_1, \vec{x}_2,..., \vec{x}_T) = (\vec{h}_1^{(L)}, \vec{h}_2^{(L)},..., \vec{h}_T^{(L)}).$

\end{definition}
Similarly we define $\hSORNN{n,L}$ for 2RNNs based on Def.~\ref{def:2RNN} and $\hCPRNN{n,L,R}$ for CPRNNs of rank $R$.

\subsection{Role of Depth in (linear) RNNs: Increasing Memory Efficiently}\label{sec:theo-memory}\label{sec:theoryRNNs}
In this section, we focus on linear RNNs, since RNNs with nonlinear activations naturally inherit the same benefits from composing nonlinearities as FNNs.
We begin our analysis by observing that linear RNNs perform \emph{linear} transformations of their inputs, regardless of the network's depth~(See Appendix~\ref{apx:proplinearRNN} for formalization of this statement). Although this observation is reminiscent of the equivalence between deep and shallow linear FNNs, for linear RNNs, there is no collapse of the structure to a single-layer network. In fact, the following theorem states that depth makes linear RNNs strictly more expressive. More precisely, even though linear RNNs compute linear transformations, adding a layer will always make the model strictly more expressive, i.e. able to compute~(linear) functions that could not be computed with one layer less.

\begin{theorem}\label{thm:deepRNN} For any $n>1$ and $L\geq 1$, $\hRNN{n,L} \subsetneq \hRNN{n,L+1}$ for linear RNNs.
\end{theorem}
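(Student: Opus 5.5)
Two things need proving: the inclusion and its strictness.

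\emph{Inclusion.} Take any depth-$L$ linear RNN and append a layer $L+1$ with $\Vb^{(L+1)}=\mathbf{0}$, $\Ub^{(L+1)}=\mathbf{I}_n$, $\bb^{(L+1)}=\mathbf{0}$. Then $\hb^{(L+1)}_t=\hb^{(L)}_t$ for every $t$, so the deeper network computes the same map, and $\hRNN{n,L}\subseteq\hRNN{n,L+1}$.

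\emph{Strictness: unrolled form.} Unrolling the linear recursion gives
\[\hb^{(L)}_t=\sum_{s=1}^t \Mb_{t-s}\,\xb_s+\cb_t,\]
with matrix kernel $\Mb_k=\sum_{k_1+\dots+k_L=k}\Vb^{(L)k_L}\Ub^{(L)}\cdots\Vb^{(1)k_1}\Ub^{(1)}$. Concretely, the kernel's generating function is
\[\sum_k \Mb_k z^k=\prod_{l=L}^{1}(\mathbf{I}-z\Vb^{(l)})^{-1}\Ub^{(l)}.\]
For the scalar sequence $m_k=\eb_1^\top\Mb_k\eb_1$, this is a rational function. The plan is to show its denominator has degree at most $nL$, since each factor $(\mathbf{I}-z\Vb^{(l)})^{-1}$ has entries with common denominator $\det(\mathbf{I}-z\Vb^{(l)})$ of degree at most $n$. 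Equivalently, $(m_k)$ satisfies a linear recurrence of order at most $nL$. So $\hRNN{n,L}$ can only realize impulse responses of "linear complexity" (Hankel rank) at most $nL$.

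\emph{Strictness: the witness.} I would exhibit a depth-$(L+1)$, width-$n$ linear RNN whose impulse response has linear complexity exactly $n(L+1)$. The choice is a delay (memory) construction. In every layer, set $\Vb^{(l)}$ to the $n\times n$ nilpotent shift and route the input through it. Then couple layer $l$'s last coordinate into layer $l+1$'s first coordinate via $\Ub^{(l+1)}$. The result is a chain of $n(L+1)$ delays, so $\hb^{(L+1)}_t$ contains $\xb_{t-n(L+1)+1}$ (first coordinate of $\xb$). That gives $m_k=\mathbb{1}[k=n(L+1)-1]$, with generating function $z^{n(L+1)-1}$.

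\emph{Main obstacle.} The witness above is polynomial, so the denominator-degree argument does not separate it. I therefore need a different invariant: the length of memory a nilpotent chain can have. An alternative is to change the witness to a single pole of multiplicity $n(L+1)$, namely $1/(1-z)^{n(L+1)}$, obtained by taking each $\Vb^{(l)}=\mathbf{I}+$shift (a Jordan block). This has denominator degree exactly $n(L+1)>nL$.

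The obstacle then reduces to two checks. First, the entrywise degree bound must really be at most $nL$ for the $(1,1)$ entry of the product. I would handle this through the realization view: the whole depth-$L$ network is a single linear system with block lower-triangular state matrix of size $nL$. Hence the transfer function has McMillan degree at most $nL$, so its Hankel matrix has rank at most $nL$. Second, I must verify that the Jordan-chain construction achieves Hankel rank $n(L+1)$. To do this I would compute its Hankel matrix, which is that of $\binom{k+N-1}{N-1}$ with $N=n(L+1)$, and show the rank is exactly $N$. The bias and initial states only add the term $\cb_t$, which does not affect the kernel, so they can be ignored by comparing differences of outputs on two inputs.
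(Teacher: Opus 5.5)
Your final argument is correct, but its strictness part follows a genuinely different route from the paper's; the inclusion step is identical.

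The paper works with the copy function $f_p$, $p=(L+1)(n-1)$. A shift-register construction realizes it at depth $\ceil{p/(n-1)}$, and an information-flow count on the time--depth grid shows that depth $L$ forces $p\le L(n-1)$.

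You instead stack the layers into a single state-space system of dimension $nL$, namely the block lower-triangular form of Corollary~\ref{cor:induction-RNN}. Every kernel entry $\eb_i^\top\Mb_k\eb_j$ of a depth-$L$ network then has Hankel rank at most $nL$, and its reduced denominator divides $\prod_{l=1}^{L}\det(\mathbf{I}-z\Vb^{(l)})$. You separate with a depth-$(L+1)$ witness carrying $n(L+1)$ poles. Whatever numerator your Jordan-block routing produces (e.g.\ $z^{(L+1)(n-1)}$ for head-to-tail hand-offs), the reduced denominator is $(1-z)^{n(L+1)}$. So the denominator-degree comparison alone settles both of your ``checks''. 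Just make sure the coordinate you read sees the whole chain: inject at the head of each Jordan chain and read at its tail.

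What your route buys:
\begin{itemize}
\item It is fully rigorous. The paper's lower bound rests on the heuristic, relegated to a footnote, that each $x_i$ must occupy a whole neuron along its path.
\item It identifies total state dimension as the operative invariant.
\item It also covers $n=1$, where the copy function is uncomputable at every depth and the paper's argument says nothing.
\end{itemize}

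What the paper's route buys is a sharp, interpretable memory statement: $f_p$ is computable iff $p\le L(n-1)$. This is reused for Theorem~\ref{thm:params}, and Hankel rank alone only gives $p\le nL-1$ for a pure delay. That sharp bound can be made rigorous in your language too. Each layer's adjugate has degree at most $n-1$, so $z^p\prod_{l}\det(\mathbf{I}-z\Vb^{(l)})$, which has degree at least $p$, would have to have degree at most $L(n-1)$.

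Your discarded delay witness is miscounted. The hand-off through $\Ub^{(l+1)}$ acts at the same time step, so the handed-off coordinate is duplicated. Hence $L+1$ layers reach only $x_{t-(L+1)(n-1)}$, not $x_{t-n(L+1)+1}$; this is exactly the paper's construction. Its Hankel rank is then $(L+1)(n-1)+1$, which exceeds $nL$ only when $n>L$. So switching to the pole witness was necessary, not just convenient.
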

It is worth taking a moment to contemplate why this result is non-trivial and surprising. Indeed, composing linear functions does not increase expressiveness; the composition remains linear. It is the sequential nature of the model which leads to the strict inclusion in Theorem~\ref{thm:deepRNN}. Intuitively,  adding a layer to an RNN increases its memory capacity, and the network can thus retain information longer.
Although only the hidden vector of the last layer is outputted, the network maintains $L$ hidden vectors, allowing the information to be staged and propagated towards the deeper layers at later time steps.

\paragraph{Sketch of proof}
Proving the inclusion simply consists in finding an explicit parameterization for a model of depth $L+1$ to reproduce the computation of depth $L$ RNN. 
To show strict inclusion, i.e. $\hRNN{n,L} \not \supset \hRNN{n,L+1}$, we look at the capacity of a linear RNN to memorize and propagate information. We thus introduce the function $f_p$ that copies the first component of the input $\xb_t$ $p$ steps forward:
\begin{equation}\label{eq:fp}
    f_p(\xb_1,\cdots, \xb_t) = \begin{cases}
    (\xb_{t-p})_1&\text{ if } t-p > 0\\
    0&\text{ otherwise.}
\end{cases}
\end{equation}
First, we give an explicit construction~(illustrated in Figure~\ref{fig:sketch_constrution}) proving that for any $p$ and any $n$, there exists an RNN of hidden size $n$ and depth $L=\ceil{p/(n-1)}$ that computes $f_p$. Then, looking at the RNN as a flow graph in which information is propagated, we derive an upper bound on the value of $p$ for which an RNN of hidden dimension $n$ and depth $L$ can compute $f_p$, namely $p$ can be at most $L(n-1)$. Combining these two facts together, we obtain that for any $n$ and $L$, by setting $p = (L+1)(n-1)$, $f_p$ can be computed with $L+1$ layers of $n$ neurons but not with $L$ layers.

We observe that while the increased memory is the key element of our proof, there may exist other forms of expressivity gains coming from depth in linear RNNs.  
\begin{figure}[h]
\centering
\includegraphics[width=0.43\textwidth]{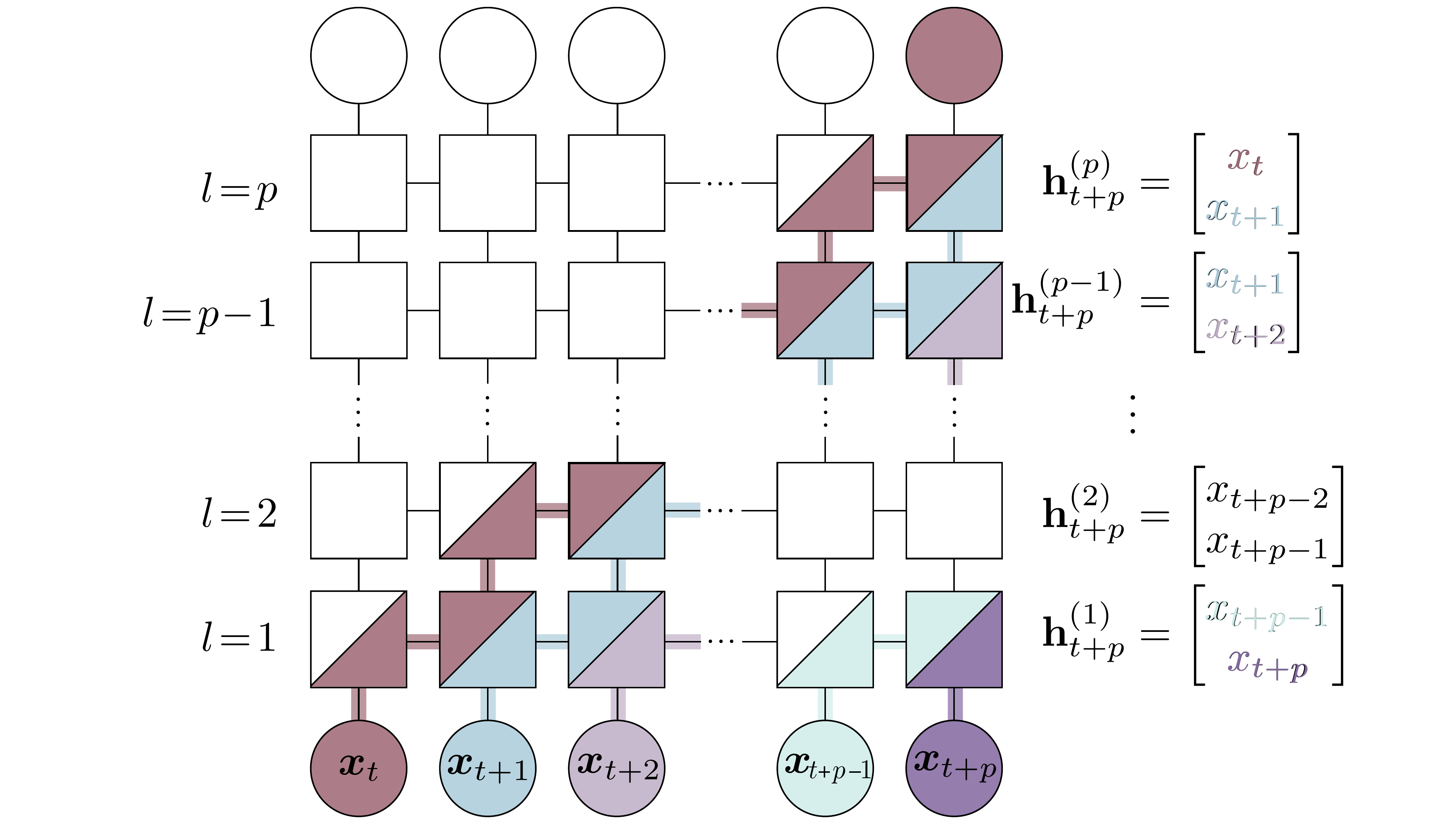}
\caption{Information flow of RNN's computation of $f_p$ for $n=2$ and $L=p$.}\label{fig:sketch_constrution}
\vspace{-0.4cm}
\end{figure}

Since the proof of strict inclusion for Theorem~\ref{thm:deepRNN} relies on increasing memory capacity via the additional hidden units of the extra layer, a natural next step is to compare deep and shallow RNNs with latent representations of the same dimension, i.e., the same total number of hidden units. Concretely, we compare a deep network of hidden dimension $n$ and depth $L$ to a shallow network with hidden dimension $nL$. The following proposition offers a perspective that differs from Theorem~\ref{thm:deepRNN}.
\begin{proposition}[informal\footnote{See Appendix~\ref{apx:prop(n,l)vs(nl,1)} for formal statement and its proof.}]\label{thm:(n,l)vs(nl,1)}
    For $n>1$ and $L>1$, a single-layer linear RNN of hidden size $nL$ has a greater latent representational capacity than a linear RNN of size $n$ and depth $L$. 
\end{proposition}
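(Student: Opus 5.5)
The natural formalization is through the joint latent state. A linear RNN of width $n$ and depth $L$ carries $L$ hidden vectors $\hb_t^{(1)},\dots,\hb_t^{(L)}$; stack them into one vector $\vec{s}_t\in\Rbb^{nL}$. The statement then becomes: every map $(\xb_1,\dots,\xb_T)\mapsto(\vec{s}_1,\dots,\vec{s}_T)$ computed by a depth-$L$ width-$n$ linear RNN is also computed, exactly, by some single-layer linear RNN of width $nL$. Moreover, the inclusion is strict, because a width-$nL$ single-layer RNN can produce latent trajectories that no $(n,L)$ network produces. In particular, $\hRNN{n,L}\subseteq\hRNN{nL,1}$ up to the fixed linear readout that selects the last block, and this inclusion is strict.

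\textbf{Inclusion.} I would eliminate the same-time dependence $\hb_t^{(l-1)}$ in Eq.~\eqref{eq:RNN} by unrolling down the layers. With identity activations, substituting layer by layer gives
\[
\hb_t^{(l)}=\Vb^{(l)}\hb_{t-1}^{(l)}+\Ub^{(l)}\Vb^{(l-1)}\hb_{t-1}^{(l-1)}+\Ub^{(l)}\Ub^{(l-1)}\Vb^{(l-2)}\hb_{t-1}^{(l-2)}+\dots+\Ub^{(l)}\cdots\Ub^{(1)}\xb_t+\tilde{\bb}^{(l)},
\]
where $\tilde{\bb}^{(l)}$ collects the propagated biases. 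In matrix form this reads $\vec{s}_t=\Mb\vec{s}_{t-1}+\Nb\xb_t+\vec{c}$, where $\Mb\in\Rbb^{nL\times nL}$ is block lower triangular, $\Nb\in\Rbb^{nL\times d}$ has blocks $\Ub^{(l)}\cdots\Ub^{(1)}$, and $\vec{c}$ is a constant. This is exactly a single-layer linear RNN of width $nL$ with initial state $\vec{s}_0$. Its last $n$ coordinates reproduce $\hb^{(L)}_t$, so the deep output is recovered by a fixed projection. The main step is this block bookkeeping; it is routine, and I would prove it by induction on $l$.

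\textbf{Strictness.} Here I would use the structure forced on $(\Mb,\Nb)$. In the deep model the matrix $\Nb$ has a multiplicative structure: block $l$ equals $\Ub^{(l)}$ applied to block $l-1$. As a result, the first block depends on $\xb_t$ only through the $n\times d$ matrix $\Ub^{(1)}$, and the deeper blocks are images of earlier ones.

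The most concrete witness combines this with the memory argument behind Theorem~\ref{thm:deepRNN}. Taking $p=nL$ in Eq.~\eqref{eq:fp}, a single layer of width $nL$ computes $f_{p}$ using a shift-register matrix $\Vb$ of size $nL$. It only needs $p\le nL$ rather than $p\le L(n-1)$, because a single layer spends no unit per layer on the current input. By the upper bound $p\le L(n-1)$ from the proof of Theorem~\ref{thm:deepRNN}, no $(n,L)$ network computes $f_{nL}$, which gives the separation.

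\textbf{Main obstacle.} The delicate step is keeping the comparison fair. In the single-layer model the output is the full $nL$ state, while in the deep model it is only $\hb^{(L)}$. I would state the formal result in terms of the latent state, or equivalently allow a linear readout, and check that the flow-graph bound from Theorem~\ref{thm:deepRNN} still applies when the readout is taken from the last block.
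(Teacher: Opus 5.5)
Your inclusion argument is the paper's: you unroll through the layers as in Corollary~\ref{cor:induction-RNN} to get a block lower-triangular recurrence with input blocks $\Ub^{(l)}\cdots\Ub^{(1)}$. The strictness step, however, rests on a false claim: a single-layer linear RNN of width $nL$ cannot compute $f_{nL}$. At time $t$ its state $\hb_t$ must yield the current output $x_{t-p}$. It must also yield $x_{t-p+1},\dots,x_t$, because the outputs at times $t+1,\dots,t+p$ see the past only through $\hb_t$. These are $p+1$ linearly independent functionals of the inputs that must factor affinely through $\Rbb^{nL}$, so $p\le nL-1$. This is simply Lemma~\ref{lemma:fpbound} with depth $1$ and width $nL$. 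Your justification, that a single layer ``spends no unit per layer on the current input'', is where the count slips. The single layer also spends a unit on $x_t$. What it saves are the $L-1$ overlaps between consecutive layers, which is exactly the difference between $nL-1$ and $L(n-1)=nL-L$. As written, $f_{nL}$ lies in neither class and separates nothing.

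The repair is to take $p=nL-1$. Lemma~\ref{lemma:fpconstruction} with width $nL$ gives $f_{nL-1}\in\linearmaps{nL,1}\circ\hRNN{nL,1}$, since $\ceil{(nL-1)/(nL-1)}=1$. Lemma~\ref{lemma:fpbound} excludes it from $\linearmaps{n,1}\circ\hRNN{n,L}$ because $nL-1>L(n-1)$ exactly when $L>1$, which is where that hypothesis enters. To obtain the latent-state version $\hRNNconcat{n,L}\subsetneq\hRNN{nL,1}$, order the shift register so that its oldest entry $x_{t-nL+1}$ sits in the last block. An $(n,L)$ network with the same latent trajectory would then compute $f_{nL-1}$ from its last layer, contradicting Lemma~\ref{lemma:fpbound}; this resolves the concern you raised as the main obstacle.

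The paper instead finishes the structural observation you started and then dropped. Take $\tilde{\Vb}=\mathbf{0}$, $\tilde{\bb}=\mathbf{0}$ and input blocks with $\rank(\tilde{\Ub}_1)=1$ and $\rank(\tilde{\Ub}_2)=2$. Matching latent states at $t=1$ forces $\Ub^{(1)}=\tilde{\Ub}_1$ and $\Ub^{(2)}\Ub^{(1)}=\tilde{\Ub}_2$, which is impossible by rank. That argument uses a single time step and no flow-graph bound, though as written it needs $d\geq 2$. The corrected memory witness also works for $d=1$, and it separates the two models at the level of scalar outputs, not only exact latent trajectories.
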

This proposition indicates that, provided the same hidden capacity~(i.e., number of hidden units), a shallow linear RNN is strictly more expressive than a deeper one.  
This result provides insight into the inner mechanics of RNNs. The latent representation of the shallow network is a vector in $\Rbb^{nL}$, whereas that of the deep network is a concatenation of $L$ vectors in $\Rbb^{n}$. This block structure imposed by depth constrains the expressive power of the latent representation, giving the shallow network a larger representational capacity, which is in direct contrast with Theorem~\ref{thm:deepRNN}.

However, this gain in capacity comes at a higher parameter cost. While the number of hidden units grows linearly with both hidden size and depth, the number of parameters in an RNN increases linearly with $L$ but quadratically with $n$.
The last theorem of this section shows that, if we consider parameter counts instead of number of hidden units, increasing depth can lead to a strict gain in expressiveness.
More precisely, the theorem shows that, for any depth, as soon as the hidden size is large enough, a linear RNN can compute functions that cannot be computed by shallower RNN with~(at most) the same number of parameters. 

\begin{theorem}\label{thm:params}
Considering linear RNNs with input dimension $d=1$, let $\params(n,L)$ denote the number of parameters of a $L$-layers RNN with hidden size $n$ \footnote{One can check that $\params(n,L)=(2L-1)n^2+(L+1)n$~(excluding initial hidden states as parameters count, though including them would not change the result).}.

For any depth $L$ and any hidden size $n\geq 4$, there exists a function $f \in \hRNN{n,L}$ such that, for all $\tilde{L} < L$ and $\tilde{n}$, if $f \in \hRNN{\tilde{n}, \tilde{L}}$ then $\params(\tilde{n}, \tilde{L})>\params(n,L)$.
\end{theorem}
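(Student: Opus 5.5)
We take $f = f_p$ from \eqref{eq:fp} with $p = L(n-1)$ as the witness, and rely on the two facts established for Theorem~\ref{thm:deepRNN}. The construction gives an RNN of hidden size $n$ and depth $\ceil{p/(n-1)} = L$ computing $f_p$, so $f_p \in \hRNN{n,L}$; note that $d=1$ here, so the copied coordinate is the input itself. The flow-graph bound says an RNN of hidden size $\tilde n$ and depth $\tilde L$ can compute $f_p$ only if $p \le \tilde L(\tilde n - 1)$. (Strictly speaking $f_p$ is scalar-valued while $\hRNN{n,L}$ outputs hidden-state sequences. We read $f \in \hRNN{\tilde n,\tilde L}$ as "some coordinate of the last hidden state equals $f_p$", as in the proof of Theorem~\ref{thm:deepRNN}, and in the construction we let the output coordinate carry $f_p$.)

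Now fix $\tilde L < L$ and suppose $f_p \in \hRNN{\tilde n,\tilde L}$. The upper bound gives $\tilde L(\tilde n-1) \ge L(n-1)$, so $\tilde n - 1 \ge \frac{L}{\tilde L}(n-1)$. The goal is to show $\params(\tilde n,\tilde L) > \params(n,L)$, using $\params(m,K) = (2K-1)m^2 + (K+1)m$, which is increasing in $m$. It therefore suffices to treat the minimal value $\tilde n = 1 + \lceil L(n-1)/\tilde L\rceil$, and we may drop the ceiling by bounding $\tilde n \ge 1 + L(n-1)/\tilde L$.

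Write $k=\tilde L \in [1, L-1]$ and $m = 1 + L(n-1)/k$. Expanding, the dominant term of $\params(m,k)$ is $(2k-1)\frac{L^2(n-1)^2}{k^2}$, while that of $\params(n,L)$ is $(2L-1)n^2$. The comparison reduces to checking $\frac{(2k-1)L^2}{k^2} > 2L-1$ up to lower-order corrections. The function $g(k) = (2k-1)/k^2$ is decreasing for $k\ge1$, so the worst case is $k = L-1$. There we need $(2L-3)L^2 > (2L-1)(L-1)^2$, i.e. $2L^3-3L^2 > 2L^3-5L^2+4L-1$, i.e. $2L^2-4L+1>0$, which holds for $L\ge 2$. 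For $L=1$ the statement is vacuous.

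The main obstacle is making this leading-order comparison exact. The margin at $k=L-1$ is only about $2L^2 n^2$ in the quadratic terms, and it must absorb the $(n-1)^2$ versus $n^2$ discrepancy and the linear terms $(k+1)m$ versus $(L+1)n$. I would first write the exact difference $\params(m,k)-\params(n,L)$ as a polynomial in $n$, using the bound $m \ge 1+L(n-1)/k$ and multiplying through by $k^2$. I would then show it is positive for $n\ge 4$ by checking that the coefficient of $n^2$ is positive and that the constant and linear terms are dominated once $n\ge4$. This is presumably exactly where the hypothesis $n\ge 4$ enters, since for small $n$ the $(n-1)^2$ loss is too costly. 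If the worst case is not $k=L-1$ once lower-order terms are included, I would handle $k\le L-2$ separately, where $g(k)$ gives a large slack.
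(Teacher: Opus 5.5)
Your proposal follows the paper's proof essentially step for step. It uses the same witness $f_p$ with $p=L(n-1)$, the construction lemma (depth $L$ suffices at width $n$), the flow-graph bound $\tilde L(\tilde n-1)\ge L(n-1)$, monotonicity of $\params$ in the width, and positivity of the leading coefficient; the paper factors that coefficient as $(L-\tilde L)(2L\tilde L-L-\tilde L)/\tilde L^2$, which is equivalent to your check via the decreasing $g(k)$.

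The paper then fixes the threshold by computing the largest root of the quadratic and maximizing it at $(L,\tilde L)=(2,1)$, which gives $(3+\sqrt{13})/2\approx 3.30$. The threshold step you leave as a plan does close as you expect. With $u=n-1$ and $\tilde a=2L\tilde L-L-\tilde L\ge 2\tilde L^2-1$, the difference is exactly $\frac{L-\tilde L}{\tilde L^2}\bigl(\tilde a u^2-\tilde L u-3\tilde L^2\bigr)$. For $u\ge 3$ the bracket is at least $15\tilde L^2-3\tilde L-9\ge 3$, uniformly in $(L,\tilde L)$, so you never need to identify which $\tilde L$ is the worst case.
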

\vspace{-0.4cm}
\paragraph{Sketch of proof}
To prove this theorem, we show that the difference in number of parameters $\params(\tilde{n}, \tilde{L})-\params(n,L)$ required to compute the copy function $f_p$ introduced in Equation~\ref{eq:fp} with $p=(L+1)(n-1)$ evolves as an upward parabola with respect to $n$, and thus will be positive as soon as $n\geq 4$. The complete proof is in the appendix. 

Overall, Theorem~\ref{thm:params} states that there are functions, in particular those requiring memory, for which increasing the number of layer is the parameter-efficient approach to increase the capacity of the network. However, the optimal balance between number of layers and hidden size depends on the task at hand. 
It is worth emphasizing that the gain in efficiency brought by depth is, in essence, independent of the activation of the network. Therefore, Theorem~\ref{thm:params} can be relevant to other recurrent architectures. 
Figure~\ref{fig:sec3.1} summarizes the theoretical findings on linear RNNs presented in this section. 

\begin{figure}[h]
\centering
\includegraphics[width=0.47\textwidth]{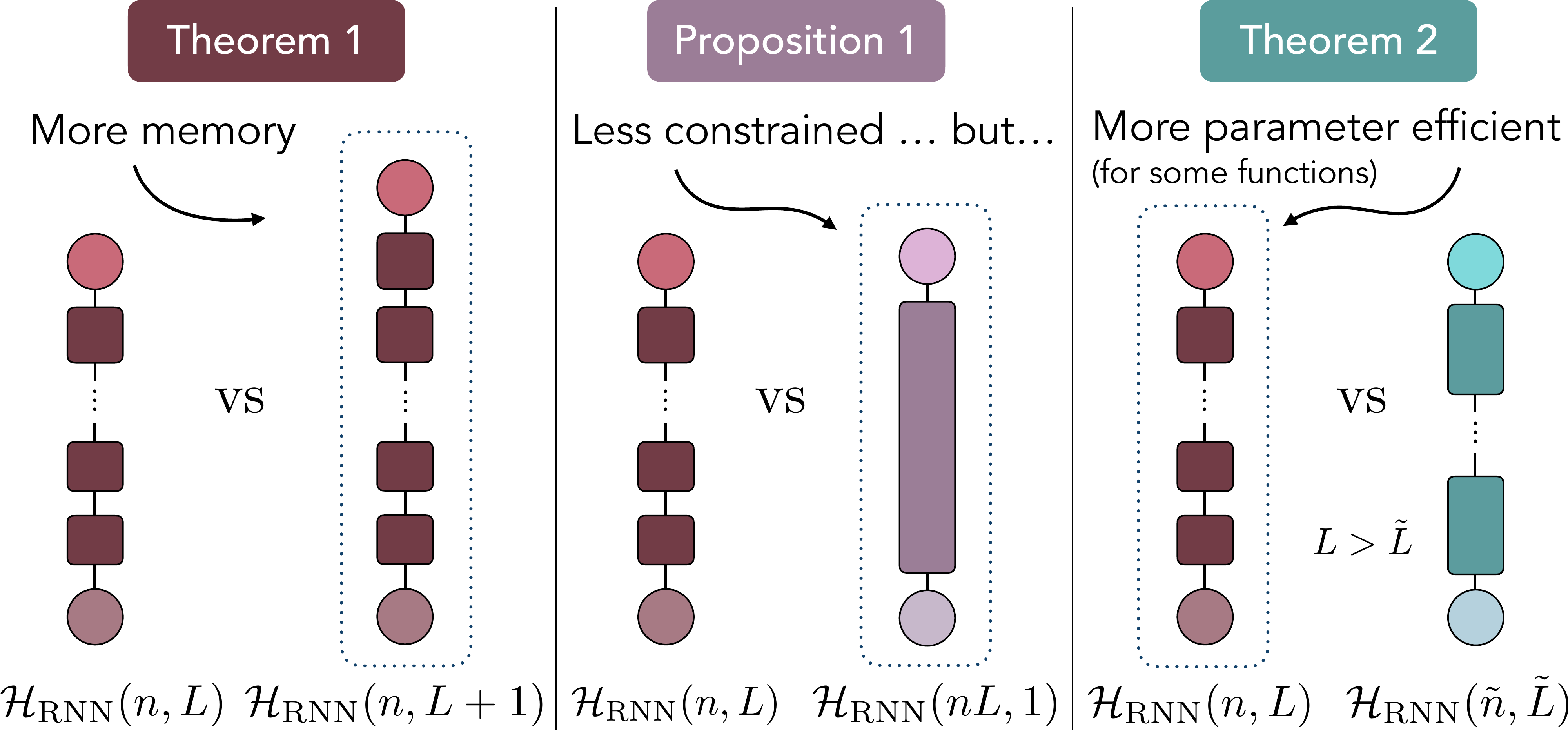}
\caption{Summary of theoretical results on linear RNNs: Adding layers increases memory capacity (Thm.~\ref{thm:deepRNN}), a single layer (for fixed number of units) offers more flexibility (Prop.~\ref{thm:(n,l)vs(nl,1)}), but raising depth rather than width~($n$) can be more parameter-efficient (Thm.~\ref{thm:params}).}\label{fig:sec3.1}
\vspace{-0.4cm}
\end{figure}

\subsection{Role of Depth in BIRNNs: Increasing Complexity via Higher-Order Interactions}\label{sec:2rnn}

We now turn to analyzing the benefits of depth in second-order RNNs.
As in the previous section, we consider linear architectures to abstract away the effect of composing nonlinear activation functions. Additionally, in order to isolate the effect of multiplicative interactions, our formal results focus on linear BIRNNs, rather than 2RNNs.
We begin by observing that even in the absence of nonlinear activations, shallow BIRNNs compute polynomial functions of their inputs~(See App.~\ref{apx:proplinearBIRNN} for formal statement). In contrast with linear RNNs, which compute linear transformations regardless of the network depth, increasing the number of layers in BIRNNs broadens the class of functions they can represent. Specifically, one can show that the maximum degree of the resulting polynomials grows exponentially with depth. Thus, by increasing the number of layers in a BIRNN we can compute higher-order polynomials. This is the key observation to prove the following theorem showing that, similarly to RNNs, there is a strict gain in expressivity with depth in linear BIRNNs. 
\begin{theorem}\label{thm:BIRNN} For $n>1$ and $L\geq1$, $\hBIRNN{n,L} \subsetneq \hBIRNN{n,L+1}$ for linear BIRNNs.
\end{theorem}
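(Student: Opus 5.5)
The plan has two parts: an explicit construction for the inclusion, and a polynomial-degree argument for strictness. For the inclusion $\hBIRNN{n,L} \subseteq \hBIRNN{n,L+1}$, I would take a linear BIRNN of depth $L$ and build a depth-$(L+1)$ BIRNN that reproduces its output. The first $L$ layers are copied. The added top layer has to output $\hb^{(L+1)}_t = \hb^{(L)}_t$ using only the bilinear term $\At^{(L+1)} \times_1 \hb^{(L+1)}_{t-1} \times_2 \hb^{(L)}_t$.

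Since no bias is available, I would reserve a coordinate that stays constant. Concretely, I would first show that any depth-$L$ BIRNN of width $n$ can be modified so that some unit carries a constant $1$. This may require restricting to functions where such a unit is free, or a slight reparametrization. An alternative is to place the constant in the top layer's own state: set $(\hb^{(L+1)}_0)_1 = 1$ and choose $\At$ so that $(\hb^{(L+1)}_t)_i = (\hb^{(L+1)}_{t-1})_1 (\hb^{(L)}_t)_i$. This copies $\hb^{(L)}_t$ into the top layer, but it needs coordinate $1$ to stay equal to $1$, which consumes one output slot.

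The cleanest fix I would try is to choose the identity map in the other direction: the added layer is placed at the \emph{bottom}. Layer~1 has access to $\xb_t$, and its own state can carry a constant through $(\hb_t)_1 = (\hb_{t-1})_1 \cdot (\text{something equal to } 1)$. Since the input carries no constant, I would instead argue via the initial states. A gate with $\hb^{(l)}_0$ chosen suitably, combined with the multilinear structure, should let a layer pass its input unchanged up to a fixed linear map that can be absorbed into the next layer's tensor. I expect this detail, realizing the identity with purely bilinear updates and no bias, to be the main technical obstacle of the inclusion. If a tensor $\At$ with $\At\times_1\hb_{t-1}\times_2\hb^{(L)}_t = \hb^{(L)}_t$ is impossible when $\hb_{t-1}$ varies, I would fall back on the bilinear state $\hb^{(L+1)}_t = (\hb^{(L+1)}_{t-1})_1\,\hb^{(L)}_t$ with the first coordinate of $\hb^{(L)}$ itself arranged to be the constant scalar, absorbing rescalings into earlier tensors.

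For strictness, I would use the polynomial degree. By induction on $l$, using the proposition in App.~\ref{apx:proplinearBIRNN}, each coordinate of $\hb^{(l)}_t$ is a polynomial in the entries of $\xb_1,\dots,\xb_t$ and the initial states. Its degree in the inputs is bounded by $D_l(t)$, which satisfies $D_l(t) \le D_l(t-1) + D_{l-1}(t)$ with $D_0(t)=1$ and $D_l(0)=0$. Solving this recurrence gives $D_L(t) = \binom{t+L-1}{L}$, a polynomial in $t$ of degree exactly $L$ (this may be what the paper calls exponential growth; either bound suffices). Next, I would exhibit a depth-$(L+1)$ width-$n$ BIRNN (with $n\ge 2$) that attains degree $\binom{t+L}{L+1}$ at some $t$. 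The natural construction is to let each layer compute a product such as $(\hb^{(l)}_t)_1 = (\hb^{(l)}_{t-1})_1\,(\hb^{(l-1)}_t)_1$, with $(\hb^{(l)}_0)_1=1$. This yields $\prod_{s\le t} (\hb^{(l-1)}_s)_1$, which achieves the bound exactly.

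Since $\binom{t+L}{L+1} > \binom{t+L-1}{L}$ for $t\ge 2$, no depth-$L$ network can compute this function, which gives strict inclusion. The remaining check is that the product is a genuinely nonzero polynomial of that degree, which is clear since it is a monomial.
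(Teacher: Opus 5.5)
\textbf{Gap: the inclusion.} The missing piece is the inclusion $\hBIRNN{n,L}\subseteq\hBIRNN{n,L+1}$. None of your candidate constructions works, and none can.

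With $\Ub^{(l)},\Vb^{(l)},\bb^{(l)}$ all null, each coordinate of $\hb^{(l)}_t$ is a linear combination of products of a coordinate of $\hb^{(l)}_{t-1}$ with a coordinate of $\hb^{(l-1)}_t$. By induction, each coordinate is multi-homogeneous in $(\xb_1,\dots,\xb_t)$, of degree exactly $\binom{t-s+l-1}{l-1}$ in $\xb_s$ (or identically zero). This is the exact form of your degree recurrence.

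Two consequences follow:
\begin{itemize}
\item Every unit in every layer is homogeneous of degree $1$ in $\xb_t$, so it vanishes at $\xb_t=\mathbf{0}$. No unit can carry a constant $1$. This rules out the reserved unit, the top-layer gate, the bottom-layer trick via initial states, and the fallback using a coordinate of $\hb^{(L)}$.
\item More decisively, $\hb_2$ is homogeneous of degree $L$ in $\xb_1$ at depth $L$, but of degree $L+1$ at depth $L+1$. Rescaling $\xb_1\mapsto 2\xb_1$ multiplies the first by $2^L$ and the second by $2^{L+1}$. Hence any depth-$L$ BIRNN with $\hb^{(L)}_2\not\equiv\mathbf{0}$ lies outside $\hBIRNN{n,L+1}$. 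An example is the diagonal one, with $\hb^{(L)}_2=\diag(\xb_1)^L\xb_2$.
\end{itemize}

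So for purely bilinear layers the inclusion is false, whatever tensors and initial states you choose. The paper's own proof obtains it only by giving the added layer a residual connection (zero tensor, zero initial state). That is the identity $\Ub^{(L+1)}=\Ib$ of its 2RNN version, which is not a BIRNN layer. To turn your argument into a proof, you must state such an enlarged model explicitly, or work with 2RNNs. The inclusion is then immediate.

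\textbf{Strictness.} This half is correct and essentially the paper's argument. Your coordinatewise-product network is exactly the paper's witness: $\At^{(l)}=\delta_{ijk}$ and $\hb^{(l)}_0=\mathbf{1}$ give $\hb^{(l)}_t=\hb^{(l)}_{t-1}\odot\hb^{(l-1)}_t$. You compare total degrees, $\binom{t+L}{L+1}>\binom{t+L-1}{L}$, whereas the paper compares the degree in $\xb_1$ of $\hb_2$ ($L+1$ versus $L$). Since degrees are nonnegative, your bound $D_l(t)\le D_l(t-1)+D_{l-1}(t)$ is unaffected by residual connections or first-order terms. So this half survives the enlargement of the model needed for the inclusion.
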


\paragraph{Sketch of proof}
To prove inclusion, we show that a BIRNN with $L+1$  layers can produce in its $L^{\text{th}}$  layer the same latent vectors $\hb_t^{(L)}$’s as a BIRNN with $L$ layers, and transfer these hidden states without further modification to the last layer. 
The proof of the strict inclusion relies on showing that in BIRNNs of depth $L$, the polynomial dependency of the second hidden vectors in the first inputs is of degree at most $L$. 
We then consider a specific case that achieves the upper bound for a network of depth $L+1$, and conclude that this function cannot be computed with fewer layers. See Appendix~\ref{apx:thmBIRNN} for the complete proof. 

Note that increasing the hidden size of BIRNNs does not affect the maximal degree of the input’s polynomial representation the way depth does. Consequently, the expressive power brought by depth in these models is always increasing, regardless of hidden size. This contrasts with Theorem~\ref{thm:deepRNN} for linear RNNs, where the gain in memory capacity arises from the number of hidden units, allowing hidden size to compensate for depth. 
Since 2RNNs are RNNs augmented by a bilinear term, i.e. a BIRNN, they benefit from both the increased memory capacity provided by extra hidden units~(Theorem~\ref{thm:deepRNN}) and the higher-order interactions coming from multiplicative terms~~(Theorem~\ref{thm:BIRNN}). One might say, we are far from the shallow now.

It is worth noting that, while 2RNNs are not commonly used in practice, they provide a formal framework generalizing multiplicative interactions in recurrent architectures. As a result, the insights from  Theorem~\ref{thm:BIRNN} can inform our understanding of models incorporating multiplicative mechanisms, such as gated RNNs (e.g. LSTMs~\citep{hochreiter1997long}, GRUs~\cite{cho2014properties}), time-variant SSMs (e.g. Mamba~\citep{gu2023mamba}), Multiplicative Integration RNNs (MIRNNs~\citep{wu2016multiplicative,levine2018benefits}), and CP(BI)RNNs~\citep{lizaire2024tensor,sutskever2011generating}. In particular, for CPBIRNNs we can infer from Thm.~\ref{thm:BIRNN} that increasing depth leads to a strict inclusion, as formalized in the following corollary~(Proof in App.~\ref{apx:cor-cprnn}).

\begin{corollary}\label{cor:cprnn}
    For $n>1$, $L\geq1$ and any $\tilde{R}\geq R >1$, $\hCPBIRNN{n,L,R} \subsetneq \hCPBIRNN{n,L+1, \tilde{R}}$ for linear CPBIRNNs.
\end{corollary}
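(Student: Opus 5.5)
The argument has two parts: an inclusion, obtained by an explicit construction, and a separating function, obtained by transporting the degree argument of Theorem~\ref{thm:BIRNN} to the CP setting.

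\emph{Inclusion.} First, since $\tilde R\geq R$, any CPBIRNN of rank $R$ is a CPBIRNN of rank $\tilde R$: we pad the decomposition with $\tilde R-R$ zero rank-one terms. So $\hCPBIRNN{n,L,R}\subseteq\hCPBIRNN{n,L,\tilde R}$, and it suffices to show $\hCPBIRNN{n,L,\tilde R}\subseteq \hCPBIRNN{n,L+1,\tilde R}$.

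We mimic the inclusion step of Theorem~\ref{thm:BIRNN}. The first $L$ layers of the deeper network copy the given network. The extra layer $L+1$ must then transmit $\hb^{(L)}_t$ unchanged, using only a bilinear term of CP rank at most $\tilde R$. Transmitting a full $n$-dimensional vector through a rank-$\tilde R$ bilinear map with $\tilde R<n$ is impossible in general, so a pure copy will not work. Instead we exploit the freedom in the last layer. The extra layer applies the map $\sum_r \ab_r (\bb_r^\top \hb^{(L)}_t)(\cb_r^\top \hb^{(L+1)}_{t-1})$. We choose a constant hidden channel: set $\hb^{(L+1)}_0$ so that $\cb_r^\top \hb^{(L+1)}_{t-1}=1$ for all $t$. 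That is not automatic, because $\hb^{(L+1)}_{t-1}$ is itself the output of the layer.

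The cleaner route is to absorb the original last layer into the new last layer. Layer $L$ of the original network computes $\hb^{(L)}_t=\sum_r \ab_r(\bb_r^\top\hb^{(L-1)}_t)(\cb_r^\top \hb^{(L)}_{t-1})$. In the deeper network we let layers $1,\dots,L-1$ be the original ones, and we let layer $L$ be a rank-$1$ ``delayed identity-like'' layer that keeps a constant coordinate. This reduces to the question of whether a CPBIRNN layer of rank $\leq\tilde R$ can reproduce the needed signal; we settle it via the rank-one construction used in Appendix~\ref{apx:thmBIRNN}, checking that the transfer layer used there has CP rank at most $R$ (ideally rank $1$ or $2$, which is why $R>1$ is assumed).

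\emph{Strictness.} We invoke the degree bound from the proof of Theorem~\ref{thm:BIRNN}: in a depth-$L$ linear BIRNN, the second hidden vector depends polynomially on the first inputs with degree at most $L$. Since every CPBIRNN of any rank is a BIRNN with $\At^{(l)}=\sum_r\ab_r\circ\bb_r\circ\cb_r$, this gives $\hCPBIRNN{n,L,R}\subseteq\hBIRNN{n,L}$, so the bound applies there. For the separating function, we take the witness of Theorem~\ref{thm:BIRNN} attaining degree $L+1$ and check that its tensors have CP rank at most $2\leq\tilde R$. If the witness in the appendix uses full tensors, we build a fresh one instead: in each layer, a rank-one tensor multiplying one input coordinate by one coordinate of the previous hidden state. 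This already drives the degree up by one per layer.

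\emph{Main obstacle.} The delicate step is the inclusion under the rank constraint. We must make sure that the transfer layer (or the re-staged architecture) never needs CP rank exceeding $\tilde R$, which we do by verifying the low-rank structure of the constructions in Appendix~\ref{apx:thmBIRNN}.
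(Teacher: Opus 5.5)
Your strictness half is sound. The degree bound transfers because a CPBIRNN is a BIRNN with $\At^{(l)}=\sum_r \ab_r\circ\bb_r\circ\cb_r$. Your fallback rank-one witness (first coordinate of the input times first coordinate of the hidden state, with nonzero initial states) gives $x_1^{L+1}x_2$ at $t=2$, where $x_t=(\xb_t)_1$. That fallback is the right choice: the paper's witness uses $\Ab=\Bb=\Cb=\Ib$ truncated to rank $\min(n,R)$, so your planned check ``CP rank at most $2$'' would fail in general. Your zero-padding from $R$ to $\tilde R$ is also correct.

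The genuine gap is the inclusion, and it cannot be fixed by rank bookkeeping. In a pure CPBIRNN every update is bilinear and the initial states are constants. By induction, each $\hb_t^{(l)}$ is therefore multihomogeneous in $(\xb_1,\dots,\xb_t)$, with degree exactly $\binom{t-i+l-1}{l-1}$ in $\xb_i$, or it vanishes identically. At $t=2$ this degree in $\xb_1$ is $L$ for depth $L$ and $L+1$ for depth $L+1$. So your own rank-one depth-$L$ network, which outputs $x_1^Lx_2$ at $t=2$, is computed by no depth-$(L+1)$ CPBIRNN of any rank. The same argument rules out your ``constant hidden channel'': $\hb^{(L+1)}_{t-1}$ vanishes when the inputs do. It also rules out the ``delayed identity-like'' rank-one layer. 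Appendix~\ref{apx:thmBIRNN} contains no low-rank transfer layer for you to borrow.

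The paper gets inclusion by adding a residual connection from layer $L$ into layer $L+1$. It sets $\tilde{\Ab}^{(L+1)},\tilde{\Bb}^{(L+1)},\tilde{\Cb}^{(L+1)}$ and $\tilde{\hb}_0^{(L+1)}$ to zero, so that $\tilde{\hb}_t^{(L+1)}=\hb_t^{(L)}$. This steps outside the pure bilinear model; it amounts to allowing a first-order identity path $\Ub^{(L+1)}=\Ib$ in the extra layer, as in the 2RNN version of Theorem~\ref{thm:BIRNN}. That identity path is the missing idea. Once you admit it, inclusion is immediate for any $\tilde R\ge R$ and the rank plays no role. Without it, the inclusion part of the statement is false. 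You should state this relaxation explicitly rather than look for a rank-$\tilde R$ bilinear layer that transmits $\hb_t^{(L)}$.
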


Corollary~\ref{cor:cprnn} introduces the additional hyperparameter influencing CP(BI)RNNs expressivity: $R$, the rank of the CP decomposition parameterizing their second-order term. This corollary states that, as long as the rank is maintained at least at the same level, adding layers strictly increases the expressive capacity of the network. 
For single-layer CP(BI)RNNs, it has been shown that the rank is an effective tuning parameter for strictly increasing expressivity, up to a saturation point $\Rmax$, the maximal CP rank for a family of tensors sharing the same dimensions, $R^{d_1, d_2 , d_3}_{\max} = \max \{\cprank{\T} \mid \T \in \R^{d_1\times d_2 \times d_3}\}$.
The following theorem verifies that this effect of the rank is not altered by depth.
\begin{theorem}\label{thm:cprnn}
    For $n>1$, $L\geq1$ and any $R\leq R_\textrm{max}$, $\hCPBIRNN{n,L,R} \subsetneq \hCPBIRNN{n,L,R+1}$ for linear CPBIRNNs if $R<n$.
\end{theorem}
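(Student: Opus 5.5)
The inclusion $\hCPBIRNN{n,L,R} \subseteq \hCPBIRNN{n,L,R+1}$ is immediate. Given a rank-$R$ CPBIRNN, we build a rank-$(R+1)$ one by keeping the same factors $\ab_r^{(l)},\bb_r^{(l)},\cb_r^{(l)}$ for $r\le R$ in every layer, and setting the extra factor $\ab_{R+1}^{(l)}=\vec 0$. The added outer product then vanishes, so every $\At^{(l)}$, and hence every hidden state, is unchanged. The work is in the strict part. The plan is to reduce it to the single-layer result: for single-layer CPBIRNNs, increasing the rank strictly enlarges the function class below $\Rmax$ (the result cited just before the statement, from \citep{lizaire2024tensor}). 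The reduction uses the same pass-through idea as in the proof of Theorem~\ref{thm:BIRNN}.

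\textbf{Step 1 (lifting a separating function).} Take a single-layer linear CPBIRNN $g$ of rank $R+1$ that no single-layer rank-$R$ CPBIRNN computes. Concretely, its tensor $\At\in\Rbb^{n\times d\times n}$ has CP rank exactly $R+1$; this is possible since $R+1\le n\le\Rmax$. I realize $g$ in the first layer of a depth-$L$ network. Each layer $l\ge2$ then copies its input to its output, i.e.\ it computes $\hb_t^{(l)}=\hb_t^{(l-1)}$ (possibly with a one-step delay) using a purely bilinear term.

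This is where the hypothesis $R<n$ matters. A bilinear copy $\At^{(l)}\times_1\hb_{t-1}^{(l)}\times_2\hb_t^{(l-1)}=\hb_t^{(l-1)}$ requires the previous state to supply a constant. I would reserve a coordinate of $\hb^{(l)}$ that stays constant, e.g.\ equal to $1$, via a rank-one term $\eb_1\circ(\cdot)\circ\eb_1$ that maps the constant to itself, and then copy coordinates with further rank-one terms. This is the layer-to-layer transfer used in the proof of Theorem~\ref{thm:BIRNN}. I would arrange the first layer to output, besides its useful coordinates, a constant coordinate. The copying layers must use rank at most $R+1$. If full copying needs rank $n$, I would instead copy only the image of the first-layer output. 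That image lies in the span of the $\ab_r^{(1)}$ (plus the constant direction), so a projection-copy of rank about $R+1$ suffices.

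\textbf{Step 2 (no rank-$R$ depth-$L$ network computes the lifted function).} Suppose a depth-$L$ rank-$R$ linear CPBIRNN computes the same $h$. I look at low-degree parts of the outputs as polynomials in the inputs, by the argument of Appendix~\ref{apx:thmBIRNN}. In particular I consider the dependence of $\hb_2^{(L)}$ on $\xb_1,\xb_2$, or the part of the output that is bilinear in (first-layer previous state, current input). I would show that this part factors through the last-layer tensor $\At^{(L)}$, or through the first-layer tensor, contracted with linear maps determined by the other layers. The resulting matricization or tensor therefore has CP rank at most $R$. In the lifted construction, by contrast, it equals $\At$ (up to invertible contraction) and has rank $R+1$, a contradiction.

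Step 2 is the main obstacle. With depth, the degree-$k$ components of the output are built by composing several CP tensors, and their rank can in principle exceed $R$. The work is to pick a particular coefficient tensor, namely the lowest-degree coefficient in a carefully chosen time index (for instance the output at time $t=1$ or $2$ with fixed initial states). That tensor must be expressible as a single layer's CP tensor contracted by matrices, since contraction along modes does not raise CP rank. I would first try taking the term of $\hb_t^{(L)}$ multilinear in $\hb_{t-1}^{(L)}$ and $\hb_t^{(L-1)}$. In that case I would place the rank-$(R+1)$ tensor in the last layer rather than the first, with the earlier layers acting as copies.
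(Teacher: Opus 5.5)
Your inclusion argument (zeroing the extra rank-one term) is fine. The strict part, however, has a genuine gap in both steps.

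\textbf{Step 1 cannot be realized.} A CPBIRNN has no bias or first-order terms, so every $\hb_t^{(l)}$ is linear in $\xb_t$ and vanishes at $\xb_t=\mathbf{0}$. Hence no coordinate can be held at a nonzero constant. More decisively, scale $\xb_{t-1}$: for $t\ge 2$, the state $\hb_t^{(l)}$ is homogeneous of degree $l$ in $\xb_{t-1}$. This is the same degree count as in the proof of Theorem~\ref{thm:BIRNN}. Layer $l$ therefore cannot reproduce a nonzero output of layer $l-1$, even up to a fixed linear map or projection. A one-step delay fails as well, because $\hb_t^{(l)}$ is linear in $\xb_t$ while $\hb_{t-1}^{(l-1)}$ does not depend on $\xb_t$.

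\textbf{Step 2 is not supplied.} You flag it as the main obstacle yourself. Your candidate is the part of $\hb_t^{(L)}$ bilinear in $\hb_{t-1}^{(L)}$ and $\hb_t^{(L-1)}$, but that is built from internal states. Internal states are not determined by the computed function $h$, so two networks computing the same $h$ need not agree on it, and no contradiction follows.

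\textbf{The missing idea} is a rank invariant of $h$ itself; the paper takes the simplest one, the first output. Because the $\hb_0^{(l)}$ are parameters, at $t=1$ each layer acts by a fixed matrix:
\[
\hb_1^{(L)}=\Mb_L\cdots\Mb_1\,\xb_1,\qquad \Mb_l=\C^{(l)}\diag({\A^{(l)}}^\top\hb_0^{(l)}){\B^{(l)}}^\top .
\]
Each $\Mb_l$ factors through $\Rbb^R$. So at every depth, the linear map $\xb_1\mapsto\hb_1^{(L)}$, which is part of $h$, has rank at most $R$.

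Conversely, take $\A^{(l)},\B^{(l)},\C^{(l)}$ equal to $\Ib_{R+1}$ padded with zero rows, and $\hb_0^{(l)}=\mathbf{1}$. Then every $\Mb_l$ has $\Ib_{R+1}$ in its top-left block and zeros elsewhere, so this rank-$(R+1)$ network reaches rank $R+1$. This uses $R+1\le n$, and also $R+1\le d$, which the statement leaves implicit.

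This makes precise your remark about looking at the output at $t=1$ with fixed initial states. The right quantity is the matrix rank of a product, not the CP rank of a tensor. It needs no copy layers, no single-layer separation result, and no control of CP rank under composition.
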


\paragraph{Sketch of proof} To prove inclusions it suffices to find explicit parameterizations such that there is equality between the latent vectors $\vec h^t$'s, as in previous sketches of proof. 
To show strict inclusion, that is $\hCPBIRNN{n,L,R+1} \not \subset \hCPBIRNN{n, L, R}$, the idea is to consider the dimension of the image of $\hb_t$. Independently of depth, it is bounded by $R$ for CPBIRNN of rank $R$ while we can find a CPBIRNN of rank $R+1$ whose hidden vector has an image of dimension $R+1$. The complete proof can be found in Appendix~\ref{apx:thm-cprnn}.

Theorem~\ref{thm:cprnn} generalizes the known effect of the rank in single-layer CP(BI)RNNs to deeper architectures, showing that increasing depth does not diminish the expressive benefits provided by a higher rank. In other words, the rank remains an effective lever for controlling expressivity even as the network becomes deeper, allowing to tune both depth and rank independently to achieve the desired expressive power.

\begin{figure}
\centering
\includegraphics[width=0.45\textwidth]{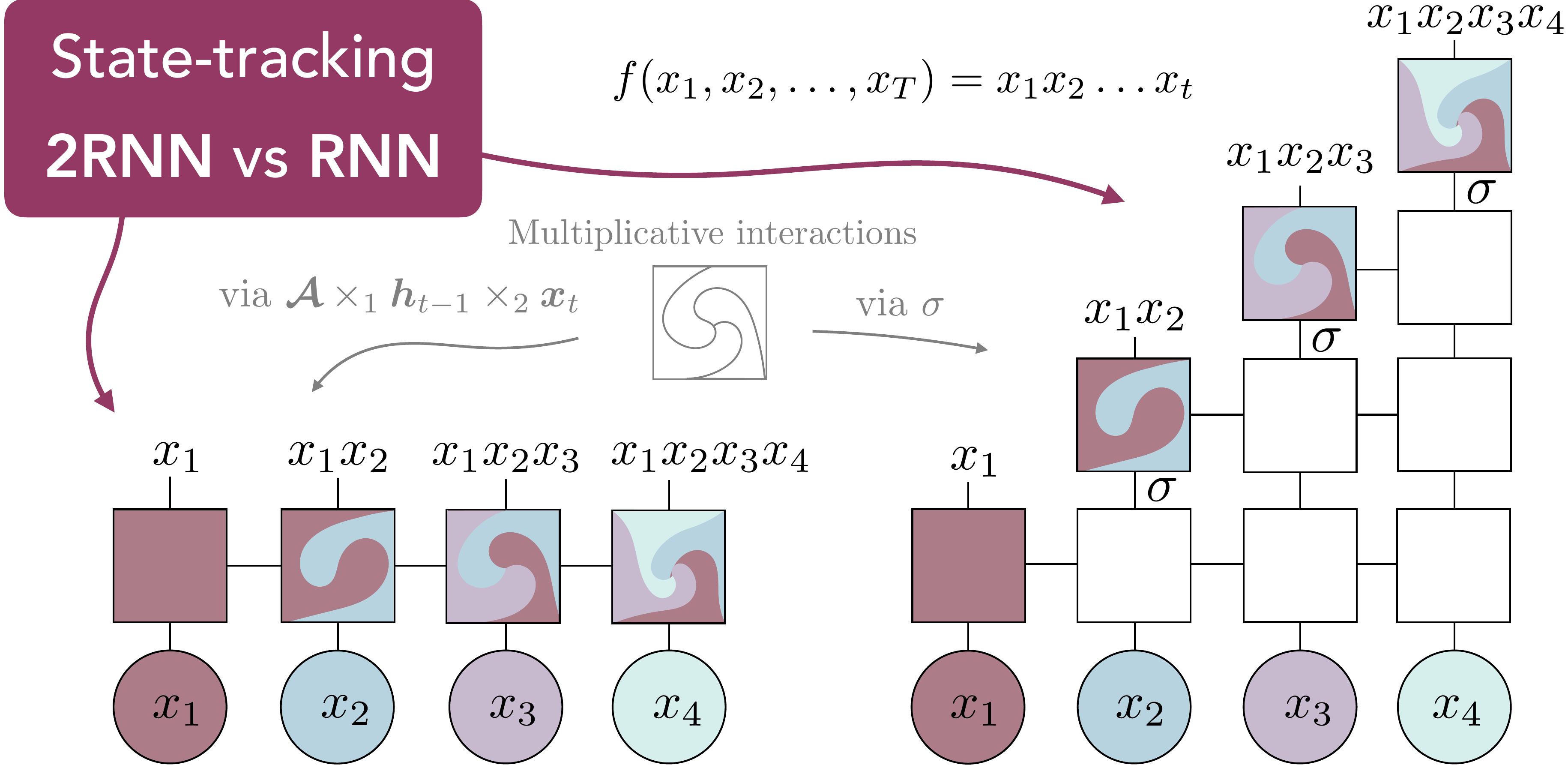}
\caption{State-tracking information flow. In 2RNNs, multiplicative interactions are performed by bilinear products, while in RNNs they need nonlinear activations $\sigma$, thus moving up a layer if applied only in depth.}\label{fig:state-tracking}
\vspace{-0.4cm}
\end{figure}
\subsection{Depth and Multiplicative Interactions: Two Distinct Forms of Expressivity}\label{sec:theoryWFA}
The previous section showed that multiplicative interactions with depth impact expressivity in a manner similar to the composition of nonlinear activations. Both expand the class of functions a model can compute by enabling more complex computations. This naturally raises the question of whether the effects of these two architectural designs on expressive power differ in some way.
An avenue to address this question is to consider the connections 2RNNs have with automata theory and formal languages. In particular, linear BIRNNs are equivalent to weighted finite automata~(WFA)~\citep{li2022connecting} whose computation involves \emph{state-tracking}, i.e., maintaining a vector state that evolves multiplicatively with each input to summarize and aggregate information over the sequence. 
Interestingly, \citet{merrill2024illusion} showed that single-layer state space models, which are linear recurrent networks, cannot perform state-tracking. A natural question to ask is if depth changes this limitation. 
Here, we investigate whether the composition of nonlinear activation functions along depth (i.e. across layers but not time, as in Equation~\ref{eq:RNNonlydepth}), enables an RNN to perform the same computations as a 2RNN. In particular, whether it could state-track. The following theorem provides a negative answer to this question.

\begin{theorem}\label{thm:WFA}
There exists a function computed by a single-layer 2RNN that cannot be computed by any RNN of arbitrary (constant, finite) depth and width with nonlinear activation applied only in depth. 
\end{theorem}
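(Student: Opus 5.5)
Our plan is to exhibit a state-tracking function realizable by a single-layer linear BIRNN (equivalently a WFA, following \citet{li2022connecting}) and show it escapes every RNN of the form of Equation~\ref{eq:RNNonlydepth}. The natural candidate is a product-type function: for inputs $\xb_t\in\Rbb^d$ (one-hot or real), let $f(\xb_1,\dots,\xb_T)=\prod_{t=1}^T (\xb_t)_1$. A possible alternative is parity of the number of occurrences of a symbol, encoded as $\prod_t (1-2(\xb_t)_1)$. With $n=1$ and $\At=1$, $h_0=1$, a single-layer BIRNN computes $h_t=h_{t-1}(\xb_t)_1$, so $h_T=f$, which gives membership. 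The real work is the impossibility direction for depth-only nonlinear RNNs of arbitrary fixed depth $L$ and width $n$.

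The structural step is to unroll Equation~\ref{eq:RNNonlydepth}. Within layer $l$ the recurrence is linear in time, so
\[
\hb_t^{(l)}=(\Vb^{(l)})^t\hb_0^{(l)}+\sum_{s=1}^{t}(\Vb^{(l)})^{t-s}\bigl(\Ub^{(l)}\sigma^{(l-1)}(\hb_s^{(l-1)})+\bb^{(l)}\bigr).
\]
Each layer is therefore a fixed linear time-invariant filter applied to the pointwise nonlinear image of the previous layer's sequence. Consequently $\hb_T^{(L)}$ is a sum over $s\le T$ of fixed-size matrix powers times functions of $\hb_s^{(L-1)}$, and recursively every layer is a finite-dimensional linear filter of layer-wise nonlinear maps. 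The quantity $\hb_T^{(L)}$ is thus a composition of $L$ alternating stages, namely a convolution over time with kernels $(\Vb^{(l)})^{k}$ followed by a pointwise map.

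For the separation, we intend to follow the argument of \citet{merrill2024illusion} in circuit-complexity form. Under finite (logarithmic) precision, a linear recurrence with fixed matrices is an iterated sum of matrix powers, computable in $\mathsf{TC}^0$ (iterated addition and powering of fixed-size matrices lie in $\mathsf{TC}^0$). A pointwise nonlinearity is also in $\mathsf{TC}^0$, and a constant number $L$ of such stages stays in $\mathsf{TC}^0$. By contrast, a single-layer 2RNN with one-hot inputs and matrices $\At\times_2\xb_t$ equal to permutation matrices of $S_5$ computes the word problem of $S_5$, which is $\mathsf{NC}^1$-complete. The function is thus not computable by depth-only RNNs unless $\mathsf{TC}^0=\mathsf{NC}^1$.

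The main obstacle is that this yields only a conditional separation, while the theorem is stated unconditionally. To make it unconditional, we would first try a direct argument with real inputs and exact computation. By the unrolled formula, $\hb_T^{(L)}$ satisfies a finite-dimensional linear structure in time, so the map $T\mapsto$ (output on a periodic input pattern) is eventually governed by a linear recurrence of order at most $nL$ applied to a bounded number of distinct pointwise values. For example, for $f=\prod_t x_t$ restricted to inputs $x_t\in\{1,c\}$, we would argue that the response to sequences $1^{a}c\,1^{b}$ interacts additively across distinct positions of $c$ within each layer. We would then aim to show that distinguishing the count of $c$'s modulo $m$, with $m$ chosen larger than a bound depending on $n,L$, for arbitrarily spread occurrences is impossible. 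This is the step we expect to require the most care. If it resists, we would state the result under the standard finite-precision and $\mathsf{TC}^0\neq\mathsf{NC}^1$ assumptions, as in \citet{merrill2024illusion}.
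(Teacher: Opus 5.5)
Your proposal leaves the unconditional impossibility for depth-only RNNs, which is the whole content of the theorem, unproved. Your $\mathsf{TC}^0$ route proves a different statement: it works in finite precision, is conditional on $\mathsf{TC}^0\neq\mathsf{NC}^1$, and uses the $S_5$ word problem rather than your product as witness. Your direct route stops exactly where the work begins.

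Worse, the direct route cannot succeed. On the domains where you plan to argue, both of your witnesses are commutative and are computable in the exact real-arithmetic model of Equation~\ref{eq:RNNonlydepth}, even with homeomorphic activations.
\begin{itemize}
\item Product on $\{1,c\}$. Take inputs $x_t\in\{1,c\}$ with $c>0$, $c\neq 1$. A first layer with $\Vb^{(1)}=1$, $\Ub^{(1)}=1/(c-1)$, $\bb^{(1)}=-1/(c-1)$ and zero initial state holds the exact count $k_t=\sum_{s\le t}(x_s-1)/(c-1)$. The depth-wise activation $k\mapsto c^{k}$ (a homeomorphism onto $(0,\infty)$), followed by a copying layer, outputs $\prod_{s\le t}x_s$.
\item Product on positive reals. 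Applying $\log$ on the input and $\exp$ one layer up does the same.
\item Parity with one-hot inputs. Let the first layer hold $(k_t,k_t+1)$ with $k_t=\sum_{s\le t}(\xb_s)_1$, and apply the increasing homeomorphism $\phi(x)=x+\tfrac14\cos(\pi x)$ of $\Rbb$. Then $2\bigl(\phi(k_t)-\phi(k_t+1)+1\bigr)=(-1)^{k_t}$. A similar construction with $x+\epsilon\cos(2\pi x/m)$ separates all residues modulo $m$.
\end{itemize}
So the obstruction you intend to prove, that the count of $c$'s modulo a large $m$ cannot be recovered, does not exist. The structural fact you would lean on also fails: above the first layer the values $\sigma^{(1)}(\hb^{(1)}_t)$ do not range over a bounded set, because the count is unbounded. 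Commutative state tracking of this kind is collapsed by a linear counter followed by one pointwise map, so it cannot serve as the witness.

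The paper instead uses a non-commutative, non-degenerate witness over continuous inputs. It is a single-layer 2RNN whose only nonzero parameter is a generic $\At$, with $n\ge d$, real inputs $\xb_t\in\Rbb^d$ and $\fb_0\neq 0$. Then $\fb_t=(\At\times_2\xb_t)\cdots(\At\times_2\xb_1)\fb_0$ depends on all $dt$ input coordinates. The paper also assumes the depth-wise activations are homeomorphisms.

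The argument is dimension counting rather than residue counting. With one layer, the pre-activation is a linear image of $\Rbb^{dt}$ in $\Rbb^{\tilde n}$. Once $dt>\tilde n$, the output is therefore blind to a nonzero subspace of input directions on which $\fb_t$ does depend. With $L$ layers, the paper argues that the last layer must either encode the whole input sequence, which the same count excludes, or receive an earlier 2RNN state from the layer below. That lower layer would then compute the 2RNN function itself, and the paper inducts down to one layer.

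To repair your proposal you need two ingredients. First, a witness that no linear counter plus pointwise map can collapse, such as generic non-commuting transitions on real-valued inputs. Second, an explicit restriction on the admissible activations. Without it, exact reals let a linear recurrence such as $h_t=h_{t-1}/K+\psi(x_t)$, with $\psi$ valued in $\{1,\dots,K-1\}$, injectively encode whole discrete input sequences. A single depth-wise nonlinearity can then decode that encoding into any prescribed target.
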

\begin{figure*}
\centering
\includegraphics[width=0.85\textwidth]{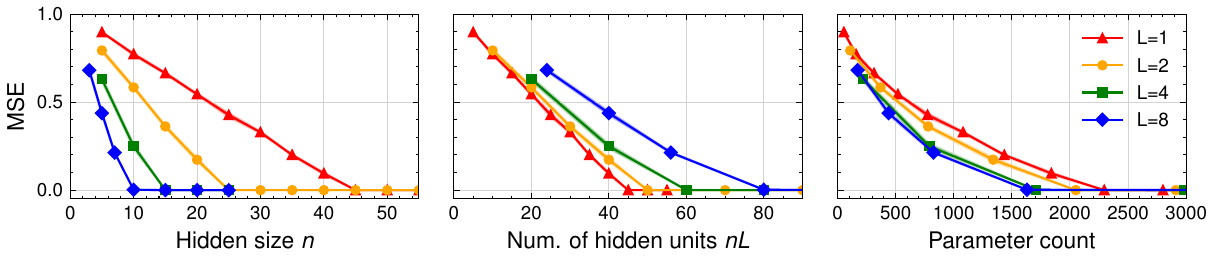}
\caption{Mean squared error~(MSE) on test set for the copy task~(lag 8) of linear RNNs with respect to hidden size~(left), number of hidden units~(center) and number of parameters~(right) for varying depth.}\label{fig:copy_real}
\vspace{-0.1cm}
\end{figure*}
\begin{figure*}[b]
\centering
\includegraphics[width=0.85\textwidth]{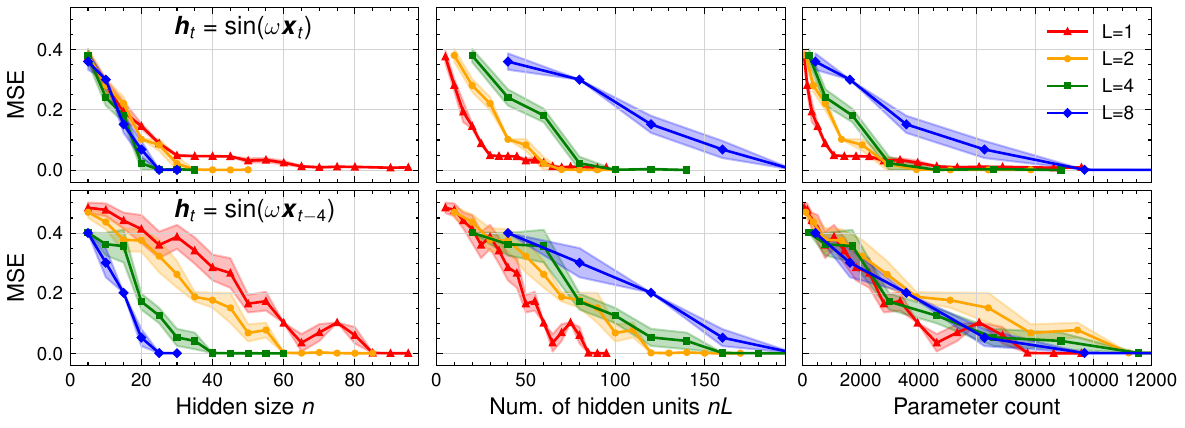}
\caption{Test mean squared error~(MSE) for sinus~(top) and copy-sinus~(lag 4, bottom) of $tanh$ activated RNNs in depth as a function of hidden size~(left), number of units~(center) and  parameters~(right) for varying depth.}\label{fig:copy_sin}
\end{figure*}
\paragraph{Sketch of proof} The demonstration of this theorem relies on the observation that, in (first-order) RNNs, any multiplicative interaction must occur through the application of a nonlinear activation function, i.e. it requires moving up a layer. Since 2RNNs perform one such multiplicative interaction at each time step, an RNN would need a depth (or hidden size) proportional to the sequence length to replicate this behavior as illustrated in Figure~\ref{fig:state-tracking}. Consequently, no deep RNN with nonlinear activations applied only in depth can perform state-tracking on arbitrarily long sequences. The complete proof can be found in Appendix~\ref{apx:thm-wfa}.

The functions implied in Theorem~\ref{thm:WFA} are state-tracking ones. Consequently, this result shows that deep RNNs applying nonlinear activations only in depth cannot, in general, perform state-tracking tasks, regardless of the number of layers. Importantly, this holds for arbitrarily complex nonlinearities. This result reveals a fundamental separation in the expressivity conferred by multiplicative interactions versus nonlinear activations, highlighting that these two architectural designs impose different inductive biases. 

\section{Experiments}
We assess the practical implications of our theoretical results through experiments on synthetic and real data using RNNs, CPRNNs, and S4 models of varying depth. CPRNNs are employed, rather than BIRNNs or 2RNNs, to avoid handling the third-order tensor growth.
First, to validate the theory from Section~\ref{sec:theoryRNNs}, we test linear RNNs on the memorization task $f_p$ (Eq.~\ref{eq:fp}), then we evaluate the nonlinear case using a sinusoidal transformation.
Next, we explore the insights from Section~\ref{sec:theoryWFA} with the parity task, which emphasizes multiplicative state-tracking over memorization, and compare the effects of nonlinearities in recurrence versus depth.

We then evaluate RNNs, CPRNNs, and S4 on language modeling using the tiny Shakespeare dataset~\citep{Karpathy2015}.
Lastly, we analyze the impact of depth in S4 models across multiple tasks using the Long Range Arena benchmark~\citep{tay2021long}. All models were trained with Adam optimizer~\citep{kingma2015adam}  and early stopping. Plots show averages over 3–5 random seeds, with shaded areas indicating standard deviations. S4 weights follow~\citep{gu2021efficiently}; others are initialized with $\mathcal{U}[-\frac{1}{\sqrt{n}},\frac{1}{\sqrt{n}}]$. Details specific to each experiments are given in the appendix.
\begin{figure*}
\centering
\includegraphics[width=0.9\textwidth]{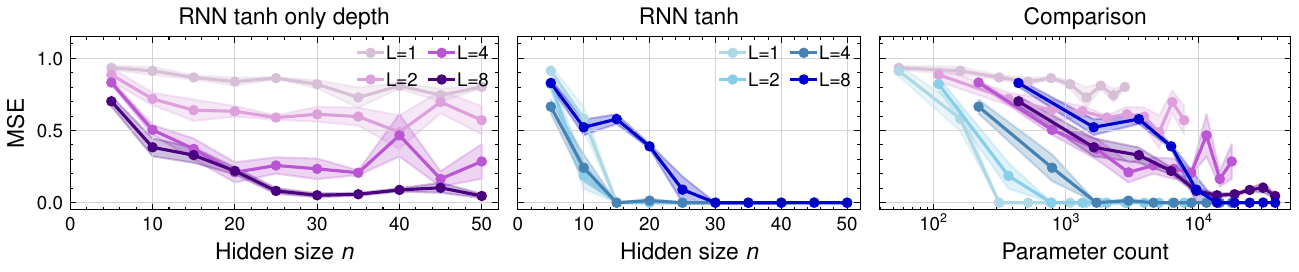}
\caption{Test MSE for the parity task versus hidden size $n$ on RNNs with $tanh$ activations in depth~(left) or recurrence~(center) for varying depth. Right panel compares activations by number of parameters.}\label{fig:parity}
\vspace{-0.3cm}
\end{figure*}
\subsection{Copy, sinus, \& sinus-copy tasks}\label{sec:copy}
Our first synthetic experiment is the copy task $f_p$ from Eq.~\ref{eq:fp}, modified to output all input dimensions. We use a lag $p=8$ and inputs in $\mathbb{R}^5$ sampled from $\mathcal{N}(0,1)$. The training set contains 10,000 sequences of length 16; validation and test sets each contain 2000 sequences.

As predicted by Theorem~\ref{thm:deepRNN}, Figure~\ref{fig:copy_real} shows that deeper models require a smaller hidden size to achieve zero loss. Indeed, depth strictly increases expressivity and, empirically, leads to better performances. However, the hidden capacity is also augmented by depth. When plotting the mean square error~(MSE) against the number of hidden units~(center), the trend reverses: shallower models outperform deeper ones for a fixed unit budget, corroborating Proposition~\ref{thm:(n,l)vs(nl,1)}. Finally, Theorem~\ref{thm:params} is validated empirically for this task by the right panel, where the models performance with respect to the number of parameters gradually increases with depth.

We then study the extent to which these trends hold in a nonlinear setting, using a sinusoidal transformation of the copy task~(Figure~\ref{fig:copy_sin}). With $p=0$, the task becomes $\hb_t = \sin(\omega \xb_t)$, which requires no memory and serves as a baseline for the copy-sinus case with a lag of 4, $\hb_t = \sin(\omega \xb_{t-4})$. The models have $\tanh$ activation in depth, keeping the recurrence linear.

For $p=0$~(top), depth reduces the hidden size needed to learn the task, with a marked gap between $L=1$ and $2$, suggesting that the sinus transformation is hard to learn by a one-layer model. Nonetheless, $L=2$ is the first to achieve a zero-loss with respect to the number of parameters, which empirically shows that depth does not provide a parameter efficiency benefit for this specific task.

The sinus-copy case (bottom) combines memory and nonlinearity, both imposing thresholds on the minimum hidden size required for convergence towards a solution. The former gives an advantage in terms of the number of parameters while the latter does not. As shown in the bottom-left panel, all models perform worse than in the memoryless case, but deeper ones ($L=4$, $8$) degrade less than shallower ones ($L=1$, $2$). Consequently, for an equal parameters budget~(right panel) the deeper RNNs become competitive, $L=8$ achieving zero loss with the least amount of parameter.

\subsection{Parity task}\label{sec:parity}
We now examine parity, a state-tracking task that does not involve memory, to corroborate the theory from Section~\ref{sec:theoryWFA}.
At each time step, the model must identify whether the number of $-1$s seen so far in a sequence of 1s and $-1$s is even or odd. We evaluate the MSE on sequences of length 20 with 5-dimensional inputs, where the task is performed independently in each dimension. Theorem~\ref{thm:WFA} states that this problem cannot be compensated by depth-wise activations. It is instead biased towards second-order interactions between input and hidden state, which is confirmed empirically by linear 2RNNs solving it with a hidden size as low as 5.

We compare RNNs with $\tanh$ activations applied recurrently (as in Definition~\ref{def:RNN}) or only in depth~(as in Equation~\ref{eq:RNNonlydepth}). As shown in Figure~\ref{fig:parity}~(left), shallow models with only depth-wise activations~($L=1$ and $2$) fail to learn the task, while deeper ones~($L=4$ and $8$) approach zero loss for hidden sizes over 50. In contrast, RNNs with recurrent activations learn the task with fewer than 20 hidden units, even with only one layer~(center panel), validating empirically Theorem~\ref{thm:WFA}.

Comparing the two activation schemes~(right panel), we observe distinct trends: recurrently activated RNNs gain parameter efficiency with reduced depth, while depth-only models show no clear depth dependency. This suggests models are not copying input's information, consistent with the task not requiring memory. It also indicates that parity is not among the functions behind the strict inclusion in Theorem~\ref{thm:params}.

\subsection{Language modeling on the tiny Shakespeare dataset}\label{sec:shakespeare}
\begin{figure*}[h]
\centering
\includegraphics[width=0.98\textwidth]{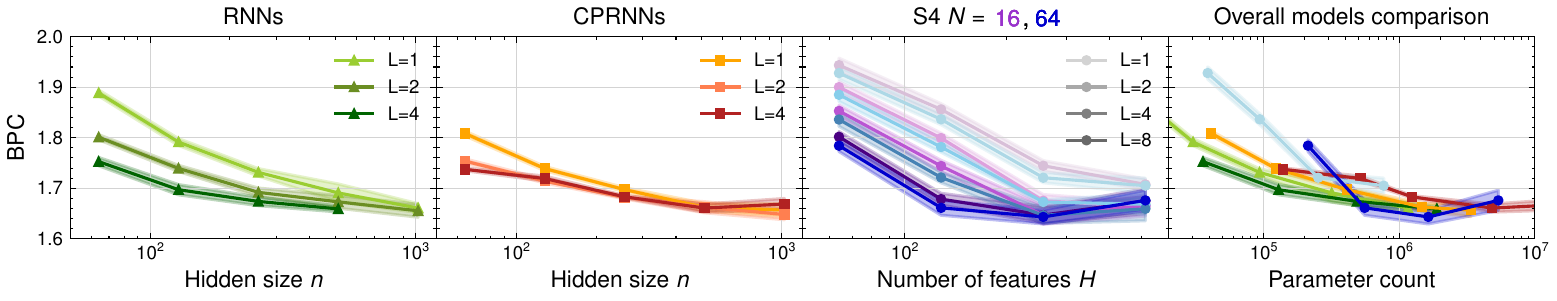}
\caption{Test bits-per-character~(BPC) of RNNs~(first), CPRNNs~(second) versus hidden size $n$ and S4~(third) versus features $H$ across depths on tiny Shakespeare, fourth panel compares models by parameter count.}\label{fig:shakespeare}
\end{figure*}
We study the effect of depth on RNNs, CPRNNs, and S4's ability to model natural language using the tiny Shakespeare dataset~\citep{Karpathy2015}, composed of 40,000 lines from the author's corpus. Models were trained at the character level using negative log likelihood. RNNs and CPRNNs use $\tanh$ activation applied recurrently, and S4 follows the architecture used in~\citep{gu2021efficiently} for language modeling.

Figure~\ref{fig:shakespeare} shows a clear and consistent  benefit to depth in RNNs, seen both in terms of hidden size~(first panel) and parameter count~(fourth), where deep RNNs outperform all others almost everywhere. This resembles the copy task behavior~(Fig.~\ref{fig:copy_real}), suggesting that language modeling either involves memorization or another expressive aspect with similar parameter efficiency.

For CPRNNs~(second panel), the benefit of added layers is much less prominent. There is some improvement from $L=1$ to $L=2$, but it saturates at $L=4$, suggesting that the polynomial class generated by two layers suffices. Note however that the rank was not finely tuned. Also, for $n=1024$ and all $L=4$ models, slower optimization was needed for stability, possibly explaining the lower performance. Still, for $L=1,2$ and $n=64$–$512$, CPRNNs outperform RNNs.

For S4, we use depths up to $L=8$ and state dimensions $N=16$ and $64$. Depth brings consistent gains across feature counts $H$~(third panel), but with fixed parameter budgets~(fourth), more layers offer no clear benefit. Below 1M parameters, S4 underperforms both RNNs and 2RNNs regardless of depth which is reminiscent of the gap between RNNs activated recurrently or only in depth in the parity task~(Figure~\ref{fig:parity}), suggesting a similar bias toward multiplicative state-tracking. Overall, deep S4 with at least 256 features (~1M parameters) perform best, reaching BPCs similar to deep RNNs.

\subsection{Long Range Arena}
To assess the role of depth on a variety of tasks, we conclude our empirical investigation by comparing S4 models on Long Range Arena problems~\citep{tay2021long}.
The same experimental configurations as in~\citep{gu2021efficiently} were adopted. 
Table~\ref{tab:lra-table} shows that the effect of depth is not the same across all tasks. Performance on Images and ListOps are improved by increasing the number of layers, but eventually saturates, whereas Pathfinders exhibits continued improvement with depth. 
Interestingly, the efficiency of the Retrieval task is mostly independent of the number of layers. Within our theoretical framework, this contrast can be understood as a difference in memory requirement. Indeed, Retrieval tests a model's ability to store compressed information which is less memory intensive in comparison to Pathfinders which requires remembering all paths in an image. Overall, these results support the task-dependency paradigm of depth's benefit in recurrent architectures inferred by Theorems~\ref{thm:(n,l)vs(nl,1)} and~\ref{thm:params}.

\begin{table}[h]
    \centering
    \begin{tabular}{ccccc}
        \toprule
        L & Retrieval & Images & ListOps & Pathfinder \\
        \midrule
        2 & 0.9139 & 0.8581 & 0.5806 & 0.8534 \\
        3 & \textbf{0.9170} & 0.8765 & 0.6154 & 0.9328 \\
        6 & 0.9099 & \textbf{0.8980} & \textbf{0.6159} & 0.9619 \\
        8 & 0.9118 & 0.8913 & 0.6154 & \textbf{0.9673} \\
        \bottomrule
    \end{tabular}
    \caption{Test accuracies on Long Range Arena of S4 for varied depth with fixed parameter budget.} 
    \label{tab:lra-table}
\end{table}

\section{Conclusion}
We studied how depth affects the expressivity of recurrent networks by isolating it from nonlinear activations, using linear RNNs to analyze depth–recurrence interactions and linear BIRNNs for depth with multiplicative interactions. We formally show that adding layers increases memory capacity and allows higher-order interactions, and that some single-layer 2RNN functions, like state-tracking, cannot be realized by deep RNNs with nonlinearities only in depth. Experiments on synthetic and real tasks confirm that depth generally improves performance, with gains and parameter efficiency depending on task.
This work motivates further explorations of how architectural choices, such as gating mechanisms or time-variant versus time-invariant structures in SSMs, affect expressivity. Another important direction is to deepen our understanding of how these designs interact with optimization. 
Such insights could clarify how robust the gains in expressive power are in practice and provide principled guidance to design models that are both theoretically expressive and practically effective.
\section*{Acknowledgement}
M. Lizaire’s research is supported by NSERC (Vanier Scholarship) and IVADO (PhD Excellence Scholarship); G. Rabusseau’s by NSERC and the CIFAR AI Chair program. We also acknowledge NVIDIA for providing computational resources. We are grateful to Pascal Jr Tikeng Notsawo for his valuable feedback and support.
\bibliographystyle{unsrtnat}
\bibliography{biblio_depth}


\clearpage
\appendix

\onecolumn
\aistatstitle{On the Role of Depth in the Expressivity of RNNs: \\
Supplementary Materials}

\input{new_supplements}

\end{document}

%% file: macros.tex
\usepackage{mathtools}
\DeclarePairedDelimiter{\ceil}{\lceil}{\rceil}

\newcommand{\params}{\#\mathrm{params}}

\newcommand{\Rmax}{R_{\mathrm{max}}}
\newcommand{\diag}{\mathrm{diag}}

 \usepackage{relsize}

\usepackage{etoolbox}
\usepackage{xparse}

\renewcommand{\enspace}{}
\newcommand{\R}{\Rbb}

\renewcommand{\vec}[1]{\ensuremath{\mathbf{#1}}}
\newcommand{\vecs}[1]{\ensuremath{\mathbf{\boldsymbol{#1}}}}
\newcommand{\mat}[1]{\ensuremath{\mathbf{#1}}}
\newcommand{\mats}[1]{\ensuremath{\mathbf{\boldsymbol{#1}}}}
\newcommand{\ten}[1]{\mat{\ensuremath{\boldsymbol{\mathcal{#1}}}}}

\usepackage{pgffor}

\foreach \x in {A,...,Z}{%
\expandafter\xdef\csname \x bb\endcsname{\noexpand\ensuremath{\noexpand\mathbb{\x}}}
}

\foreach \x in {A,...,Z}{%
\expandafter\xdef\csname \x cal\endcsname{\noexpand\ensuremath{\noexpand\mathcal{\x}}}
}

\foreach \x in {A,...,Z}{%
\expandafter\xdef\csname \x t\endcsname{\noexpand\ensuremath{\noexpand\ten{\x}}}
}

\foreach \x in {A,...,Z}{%
\expandafter\xdef\csname \x b\endcsname{\noexpand\ensuremath{\noexpand\mat{\x}}}
}

\foreach \x in {a,...,r}{%
\expandafter\xdef\csname \x b\endcsname{\noexpand\ensuremath{\noexpand\vec{\x}}}
}
\foreach \x in {t,...,z}{%
\expandafter\xdef\csname \x b\endcsname{\noexpand\ensuremath{\noexpand\vec{\x}}}
}

\newcommand{\A}{\mat{A}}
\newcommand{\B}{\mat{B}}
\newcommand{\C}{\mat{C}}

\newcommand{\T}{\ten{T}}

\newcommand{\M}{\mat{M}}

\newcommand{\V}{\mat{V}}

\renewcommand{\v}{\vec{v}}

\newcommand{\x}{\vec{x}}

\newtheorem*{theorem*}{Theorem}
\newtheorem*{corollary*}{Corollary}%
\newtheorem*{proposition*}{Proposition}%
\ifcsdef{theorem}{}{
\newtheorem{theorem}{Theorem}%
\newtheorem{proposition}{Proposition}%
\newtheorem{lemma}{Lemma}%
\newtheorem{definition}{Definition}%
\newtheorem{corollary}{Corollary}%
}
\newtheorem*{pbm*}{Problem}%
\newtheorem*{algo*}{Algorithm}%

\DeclareMathOperator*{\rank}{rank}

\renewcommand{\Im}{\mathrm{Im}}

\newcommand{\nstates}{n}

\newcommand{\szerosymbol}{\alpha}
\newcommand{\szero}{\vecs{\szerosymbol}}

\newcommand{\sinfsymbol}{\omega}
\newcommand{\sinf}{\vecs{\sinfsymbol}}

\DeclareDocumentCommand{\wa}{  O{A} O{\szero} O{\sinf} }%
{(#2,\{\mat{#1}^\sigma\}_{\sigma\in\Sigma},#3)}
\DeclareDocumentCommand{\waR}{  O{A} O{\Rbb^\nstates} O{\szero} O{\sinf} }%
{(#2,#3,\{\mat{#1}^\sigma\}_{\sigma\in\Sigma},#4)}

\newcommand{\vvsinfsymbol}{\Omega}
\newcommand{\vvsinf}{\mats{\vvsinfsymbol}}
\DeclareDocumentCommand{\vvwa}{  O{A} O{\szero} O{\vvsinf} }%
{(#2,\{\mat{#1}^\sigma\}_{\sigma\in\Sigma},#3)}

\newcommand{\tzerosymbol}{\alpha}
\newcommand{\tzero}{\vecs{\tzerosymbol}}

\newcommand{\tinfsymbol}{\omega}
\newcommand{\tinf}{\vecs{\tinfsymbol}}

\DeclareDocumentCommand{\wta}{ O{T} O{\Rbb^\nstates} O{\tzero} O{\tinf} O{\Fcal}}%
{(#2,#3,\{\ten{#1}^g\}_{g\in #5_{\geq 1}},\{#4^\sigma\}_{\sigma\in #5_0})}

\DeclareDocumentCommand{\trees}{g}{\IfNoValueTF{#1}{\mathfrak{T}}{\mathfrak{T}_{#1}}}
\DeclareDocumentCommand{\contexts}{g}{\IfNoValueTF{#1}{\mathfrak{C}}{\mathfrak{C}_{#1}}}

\newcommand{\gwmprod}{\diamond}

\DeclareDocumentCommand{\gwm}{  O{M} O{\Fbb^\nstates}}{(#2, \{\ten{#1}^x\}_{x\in\Sigma})}
\DeclareDocumentCommand{\gwmcirc}{  O{M} O{\Rbb^\nstates}}{(#2, \{\mat{#1}^\sigma\}_{\sigma\in\Sigma})}
\DeclareDocumentCommand{\dgwm}{ O{M} O{\Fbb^\nstates}}{(#2, \{\ten{#1}^x\}_{x\in\Sigma},\gwmprod)}

\newcommand{\cprank}[1]{\mathrm{rank}_{\mathrm{CP}}(#1)}

\newcommand{\hRNN}[1]{\mathcal{H}_{\mathrm{RNN}}(#1)}
\newcommand{\hRNNconcat}[1]{\mathcal{H}_{\mathrm{RNN}}^{(\|)}(#1)}
\newcommand{\hSORNN}[1]{\mathcal{H}_{2\mathrm{RNN}}(#1)}
\newcommand{\hBIRNN}[1]{\mathcal{H}_{\mathrm{BIRNN}}(#1)}
\newcommand{\hCPRNN}[1]{\mathcal{H}_{\mathrm{CPRNN}}(#1)}

\newcommand{\hCPBIRNN}[1]{\mathcal{H}_{\mathrm{CPBIRNN}}(#1)}
\newcommand{\linearmaps}[1]{\mathcal{L}^{#1}}
\newcommand{\CP}[1]{ [\![#1]\!]}

\newcommand{\fr}[1]{f^{\to}_{#1}}
\newcommand{\fu}[1]{f^{\uparrow}_{#1}}

\usepackage{pgffor}
\foreach \x in {A,...,Z}{%
\expandafter\xdef\csname \x bb\endcsname{\noexpand\ensuremath{\noexpand\mathbb{\x}}}
}

\foreach \x in {A,...,Z}{%
\expandafter\xdef\csname \x cal\endcsname{\noexpand\ensuremath{\noexpand\mathcal{\x}}}
}

\usepackage{pgffor}
\foreach \x in {A,...,Z}{%
\expandafter\xdef\csname \x ten\endcsname{\noexpand\ensuremath{\noexpand\ten{\x}}}
}

\foreach \x in {A,...,Z}{%
\expandafter\xdef\csname \x mat\endcsname{\noexpand\ensuremath{\noexpand\mat{\x}}}
}

\foreach \x in {a,...,z}{%
\expandafter\xdef\csname \x vec\endcsname{\noexpand\ensuremath{\noexpand\mat{\x}}}
}

\usepackage[symbol]{footmisc}

%% file: new_supplements.tex
\section{Proofs of theoretical results}
\subsection{Preliminaries for Proofs}
Before presenting the proofs of our theoretical results, we introduce some additional notation, a lemma and its corollary that will be used subsequently as well as a formal definition for CP(BI)RNNs.
\subsubsection{Notation}
\begin{itemize}
    \item $\linearmaps{p,q}$ denotes the space of linear maps from $\Rbb^{p}$ to $\Rbb^{q}$. 
    \item $\delta_{i,j}$ denotes the Kronecker delta, that is $\delta_{i,j} = 1$ if $i=j$ and $0$ otherwise.
    \item $[n, m]$ for $n,m \in \Nbb$ with $m>n$ denotes the interval of integers from $n$ to $m$.
\end{itemize} 

\subsubsection{Expressing $\hb_t^{(l)}$ in terms of $\xb_t$}
We now introduce Lemma~\ref{lemma:induction-2RNN} which is used in the proof of Theorem~\ref{thm:BIRNN},  and then its corollary used to prove Proposition~\ref{thm:(n,l)vs(nl,1)}.
\begin{lemma}\label{lemma:induction-2RNN}
   The hidden states in a 2RNN at time $t$ and layer $l$ can be expressed as 
\begin{align*}
\hb_t^{(l)} &=    \left(\prod_{i=1}^{l}\bar\Ub_{t-1}^{(i)}\right)\xb_{t}+\sum_{j=1}^{l}\left(\prod_{k=j+1}^{l}\bar\Ub_{t-1}^{(k)}\right)\bar\bb_{t-1}^{(j)}, \\ 
\text{where }  \bar\Ub_t^{(l)} &\equiv (\At^{(l)} \times_1 \hb_t^{(l)})^\top + \Ub^{(l)},  \quad  
\bar\bb_t^{(l)} \equiv  \Vb^{(l)} \hb_t^{(l)} + \bb^{(l)}. \nonumber
\end{align*}
\end{lemma}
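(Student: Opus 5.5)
The plan is to prove the lemma by induction on the layer index $l$, for a fixed time step $t$, in the linear setting where every $\sigma^{(l)}$ is the identity (this is the setting in which the lemma is used). The key observation is that, once the previous hidden state $\hb_{t-1}^{(l)}$ of the same layer is treated as fixed, the update of Definition~\ref{def:2RNN} is an \emph{affine} function of the input $\hb_t^{(l-1)}$ coming from the layer below. The lemma is then just the unrolling of a composition of $l$ affine maps.

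First I will establish the one-layer identity
\begin{equation*}
\hb_t^{(l)} = \bar\Ub_{t-1}^{(l)}\,\hb_t^{(l-1)} + \bar\bb_{t-1}^{(l)} .
\end{equation*}
This comes from rewriting the bilinear term. The $1$-mode product $\At^{(l)}\times_1 \hb_{t-1}^{(l)}$ is a matrix with entries indexed by $(i_2,i_3)$. Taking the $2$-mode product of $\At^{(l)}$ with $\hb_{t-1}^{(l)}$ in mode $1$ and with $\hb_t^{(l-1)}$ in mode $2$ sums over $i_2$, leaving a vector indexed by $i_3$. That vector is exactly $(\At^{(l)}\times_1 \hb_{t-1}^{(l)})^\top \hb_t^{(l-1)}$. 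Checking this index bookkeeping against the paper's definition of the $n$-mode product, including the shape $\At^{(1)}\in\Rbb^{n\times d\times n}$ for the first layer, is the only place where care is needed. I expect it to be the main (minor) obstacle, since the transpose depends on which mode is the output mode. Grouping the remaining terms gives $\Ub^{(l)}\hb_t^{(l-1)}$ and $\Vb^{(l)}\hb_{t-1}^{(l)}+\bb^{(l)}=\bar\bb_{t-1}^{(l)}$, which yields the identity.

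Next I will fix the convention that $\prod_{i=j}^{l}\M_i$ denotes the ordered product $\M_l\M_{l-1}\cdots\M_j$, with the empty product equal to the identity. For the base case $l=1$, the one-layer identity with $\hb_t^{(0)}=\xb_t$ gives $\hb_t^{(1)}=\bar\Ub_{t-1}^{(1)}\xb_t+\bar\bb_{t-1}^{(1)}$, which matches the claimed formula. For the inductive step I will substitute the formula for $\hb_t^{(l-1)}$ into the one-layer identity at layer $l$. Left-multiplying by $\bar\Ub_{t-1}^{(l)}$ extends each ordered product by one factor on the left. The new bias $\bar\bb_{t-1}^{(l)}$ supplies the $j=l$ term of the sum, whose matrix prefactor is the empty product. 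This closes the induction.

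Finally, I will note that the matrices $\bar\Ub_{t-1}^{(k)}$ and vectors $\bar\bb_{t-1}^{(k)}$ depend only on the states at time $t-1$. The formula is therefore a genuine expression of $\hb_t^{(l)}$ as an affine function of $\xb_t$, which is the form in which it will be used for Theorem~\ref{thm:BIRNN} and Proposition~\ref{thm:(n,l)vs(nl,1)}.
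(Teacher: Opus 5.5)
Your proposal is correct and follows the paper's proof essentially step for step: rewrite the bilinear term as $(\At^{(l)}\times_1\hb_{t-1}^{(l)})^\top\hb_t^{(l-1)}$ to obtain the one-layer affine identity $\hb_t^{(l)}=\bar{\Ub}_{t-1}^{(l)}\hb_t^{(l-1)}+\bar{\bb}_{t-1}^{(l)}$, then induct on $l$ starting from $\hb_t^{(0)}=\xb_t$. Your two explicit conventions are exactly what the paper uses implicitly when it drops $\sigma^{(l)}$ and extends the products on the left. These are that every $\sigma^{(l)}$ is the identity, and that $\prod_{i=j}^{l}$ denotes the left-ordered product $\bar{\Ub}^{(l)}\cdots\bar{\Ub}^{(j)}$. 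Stating them is a useful clarification, not a change of approach.
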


\begin{proof}
We first note that the bilinear term in the 2RNN (Def. \ref{def:2RNN}) can be expressed as a matrix-vector product, $(\At^{(l)} \times_1 \hb_{t-1}^{(l)})^\top \hb_t^{(l-1)}$. Inserting this in the complete definition of the 2RNN and reorganizing the terms, we obtain 
\begin{align}
\hb_t^{(l)} 
&= [(\At^{(l)} \times_1 \hb_{t-1}^{(l)})^\top +\Ub^{(l)}] \hb_t^{(l-1)} + [\Vb^{(l)} \hb_{t-1}^{(l)} + \bb^{(l)}] \nonumber \\ 
&\equiv \bar\Ub_{t-1}^{(l)} \hb_t^{(l-1)} + \bar\bb_{t-1}^{(l)} \label{eq:hbt-2RNN}
\end{align}

\begin{itemize}[left=10pt]
\item For $l=1$:
From the expression above we have $\hb_t^{(1)} = \bar{\Ub}_{t-1}^{(1)} \hb_t^{(0)} + \bar{\bb}_{t-1}^{(1)}$, which indeed  corresponds to the evaluation of Lemma~\ref{lemma:induction-2RNN} for $l=1$. 
\item For $l$, we assume that this claim holds for $l-1$
 \[
 \hb_{t}^{(l-1)}	= \left(\prod_{i=1}^{l-1}\bar\Ub_{t-1}^{(i)}\right)\xb_{t}+\sum_{j=1}^{l-1}\left(\prod_{k=j+1}^{l-1}\bar\Ub_{t-1}^{(k)}\right)\bar\bb_{t-1}^{(j)} 
 \]
and insert this result in Eq.~\ref{eq:hbt-2RNN} 
\begin{align*}
\hb^{(l)}_t 
&= \bar\Ub_{t-1}^{(l)} \left[  \left(\prod_{i=1}^{l-1}\bar\Ub_{t-1}^{(i)}\right)\xb_{t}+\sum_{j=1}^{l-1}\left(\prod_{k=j+1}^{l-1}\bar\Ub_{t-1}^{(k)}\right)\bar\bb_{t-1}^{(j)}\right] + \bar\bb_{t-1}^{(l)} \\
&=  \left(\prod_{i=1}^{l}\bar\Ub_{t-1}^{(i)}\right)\xb_{t}+\sum_{j=1}^{l-1}\left(\prod_{k=j+1}^{l}\bar\Ub_{t-1}^{(k)}\right)\bar\bb_{t-1}^{(j)} + \bar\bb_{t-1}^{(l)} \\
&=  \left(\prod_{i=1}^{l}\bar\Ub_{t-1}^{(i)}\right)\xb_{t}+\sum_{j=1}^{l}\left(\prod_{k=j+1}^{l}\bar\Ub_{t-1}^{(k)}\right)\bar\bb_{t-1}^{(j)} 
\end{align*}
        
\end{itemize}
This corresponds to Lemma \ref{lemma:induction-2RNN}, which completes the induction and proves the result is true for any~$l$.
\end{proof}
The following corollary to Lemma~\ref{lemma:induction-2RNN} is obtained by setting $\At^{(l)} = 0$, thus considering a RNN instead of a 2RNN.  In this case,  $\bar{\Ub}_{t-1}^{(i)}=\Ub^{(i)}$, and by expanding $\bar{\bb}_t^{(l)}$, we obtain the following result. 
\begin{corollary}\label{cor:induction-RNN}
   The hidden states in a RNN at time $t$ and layer $l$ can be expressed as 
\[
\hb_t^{(l)} = \left(\prod_{i=1}^{l}\Ub^{(i)}\right)\xb_{t}+\sum_{j=1}^{l}\left(\prod_{k=j+1}^{l}\Ub^{(k)}\right)\Vb^{(j)}\hb_{t-1}^{(j)}+\sum_{j=1}^{l}\left(\prod_{k=j+1}^{l}\Ub^{(k)}\right)\bb^{(j)}
\]
\end{corollary}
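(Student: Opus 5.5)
We obtain Corollary~\ref{cor:induction-RNN} by specializing Lemma~\ref{lemma:induction-2RNN} rather than redoing the induction. Throughout, we work in the linear setting in which the lemma holds, with $\sigma^{(l)}$ the identity and $\hb_t^{(0)}=\xb_t$. A linear RNN (Definition~\ref{def:RNN}) is exactly a linear 2RNN (Definition~\ref{def:2RNN}) with all weight tensors set to zero, $\At^{(l)}=0$ for every $l\in[L]$. The lemma applies to any 2RNN with arbitrary parameters, so it applies in particular to this one.

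\textbf{Step 1: substitute into the lemma's auxiliary quantities.} With $\At^{(l)}=0$, the mode product $\At^{(l)}\times_1\hb_{t-1}^{(l)}$ is the zero matrix. Hence
\[
\bar\Ub_{t-1}^{(l)}=\Ub^{(l)}
\quad\text{for all } l.
\]
This matrix no longer depends on $t$. The affine term $\bar\bb_{t-1}^{(j)}=\Vb^{(j)}\hb_{t-1}^{(j)}+\bb^{(j)}$ is unchanged by the substitution.

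\textbf{Step 2: plug into the lemma's formula.} The first term becomes $\left(\prod_{i=1}^{l}\Ub^{(i)}\right)\xb_t$. The second term becomes
\[
\sum_{j=1}^{l}\left(\prod_{k=j+1}^{l}\Ub^{(k)}\right)\left(\Vb^{(j)}\hb_{t-1}^{(j)}+\bb^{(j)}\right).
\]

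\textbf{Step 3: split the sum.} Distributing the matrix product over the sum $\Vb^{(j)}\hb_{t-1}^{(j)}+\bb^{(j)}$ (linearity of matrix multiplication) separates this into the two sums in the statement. The first collects the recurrent contributions $\Vb^{(j)}\hb_{t-1}^{(j)}$ and the second collects the biases $\bb^{(j)}$.

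\textbf{Conventions to check.} The product $\prod_{i=1}^{l}$ must be read in the same order as in the lemma, with higher layer indices on the left, i.e.\ $\Ub^{(l)}\cdots\Ub^{(1)}$. The empty product for $j=l$ must be the identity. Both conventions carry over verbatim from the lemma.

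There is no real obstacle here, since the result is a direct specialization. The only point needing care is that the induction in Lemma~\ref{lemma:induction-2RNN} never used $\At^{(l)}\neq 0$, so setting it to zero is legitimate. For a self-contained argument, one could instead repeat the induction on $l$ directly, using the one-step identity
\[
\hb_t^{(l)}=\Ub^{(l)}\hb_t^{(l-1)}+\Vb^{(l)}\hb_{t-1}^{(l)}+\bb^{(l)}.
\]
This is the same computation with $\bar\Ub^{(l)}$ replaced by $\Ub^{(l)}$.
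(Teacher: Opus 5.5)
Your proposal is correct and follows the paper's own route: set $\At^{(l)}=0$ in Lemma~\ref{lemma:induction-2RNN}, so that $\bar\Ub_{t-1}^{(i)}=\Ub^{(i)}$, then expand $\bar\bb_{t-1}^{(j)}=\Vb^{(j)}\hb_{t-1}^{(j)}+\bb^{(j)}$ and distribute. Your caveats are the right ones: the formula requires identity activations, products are ordered with higher layer indices on the left, and the empty product for $j=l$ is the identity.
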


\subsubsection{Formal definition CP(BI)RNNs}\label{apx:cprnn}
We begin by reviewing the basics of CP decomposition, before introducing a formal definition of CPRNNs, from which we also derive the definition of CPBIRNNs. More details on the single-layer versions of these models can be found in~\cite{lizaire2024tensor}.

First, recall that a CP decomposition of rank $R$ expresses a tensor $\Tt \in \Rbb^{d_1 \times d_2\times d_3}$ as a sum of $R$ rank-one tensors: $\Tt= \sum_{r=1}^{R} \ab_r \circ \bb_r \circ \cb_r \equiv \CP{\Ab,\Bb,\Cb}$ where $\ab_r \in \Rbb^{d_1}$, $\bb_r \in \Rbb^{d_2}$ and $\cb_r \in \Rbb^{d_3}$~(see, e.g., \citep{kolda2009tensor}). The factor matrices $\A \in \R^{d_1\times R}, \B \in \R^{d_2\times R}, \C \in \R^{d_3\times R}$ have the vectors $\ab_r,\bb_r,\cb_r$ as  columns. The CP decomposition reduces the parameter count from $\mathcal O(d^3)$ to $\mathcal O(Rd)$ where $d = max\{d_1, d_2, d_3\}$. We now turn to the definition of (multi-layer) CPRNNs.

\begin{definition}[CPRNN]\label{def:cprnn}
A CP Recurrent Neural Network of depth $L$, hidden size $n$ and rank $R$ is parameterized  by initial hidden state vectors $\hb_0^{(l)} \in \Rbb^n$, weight matrices $\Ab^{(l)},\Bb^{(l)},\Cb^{(l)} \in \Rbb^{n \times R}$ (except for $\Bb^{(1)}\in \Rbb^{d \times R}$), $\Ub^{(l)},\Vb^{(l)} \in \Rbb^{n \times n}$ (except for $\Ub^{(1)}\in \Rbb^{d \times n}$), bias terms $\bb^{(l)}\in \Rbb^n$ and activation functions $\sigma^{(l)}: \Rbb^n \to \Rbb^n$ for $l \in [L]$. Given a sequence of inputs $(\vec{x}_1, \vec{x}_2,..., \vec{x}_T)$, a CPRNN outputs a sequence of hidden states $(\vec{h}_1^{(L)}, \vec{h}_2^{(L)},..., \vec{h}_T^{(L)})$ via the following computation at each time step $t \in [T]$ and layer $l \in [L]$: 
$$\vec{h}_t^{(l)} = \sigma^{(l)}(\CP{\Ab^{(l)}, \Bb^{(l)}, \Cb^{(l)}} \times_1 \vec{h}_{t-1}^{(l)} \times_2 \hb^{(l-1)}_t  + \Vb^{(l)} \vec{h}_{t-1}^{(l)} + \Ub^{(l)} \hb^{(l-1)}_t + \bb^{(l)}).$$
\end{definition}
As in Definition~\ref{def:RNN} and~\ref{def:2RNN}, the activation functions $\sigma^{(l)}$ are applied element-wise. CPBIRNN is obtained from this definition by setting the first-order parameters ($\Vb^{(l)}, \Ub^{(l)}, \bb^{(l)}$) to zero. Moreover, note that the bilinear term of CP(BI)RNNs can be expressed using matrix product:
\begin{align}\label{eq:CP-prod}
\CP{\Ab^{(l)},\Bb^{(l)},\Cb^{(l)}} \times_1 \hb_{t-1}^{(l)} \times_2 \hb_{t}^{(l-1)} 
= \left[ \C^{(l)} \diag({\A^{(l)}}^\top \hb_{t-1}^{(l)}) {\B^{(l)}}^\top \right] \hb_t^{(l-1)}. 
\end{align}
\subsection{Formalization of "linear RNNs compute linear transformations"}\label{apx:proplinearRNN}
The following proposition formalizes the statement \emph{linear RNNs perform linear transformations of their inputs, regardless of the network’s depth}.
\begin{proposition}\label{prop:linearRNNlinear}
Let $h\in \hRNN{n,L}$ be the function computed by a linear RNN. Then, for any $T$, the function $f : \Rbb^{dT} \to \Rbb^{nT}$ mapping~(concatenations of) sequences of inputs $(\xb_1,\cdots,\xb_T)$ to~(concatenations of) sequences of hidden states $(\hb^{(L)}_1,\cdots, \hb^{(L)}_T)$ is linear.
\end{proposition}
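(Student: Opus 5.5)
The plan is a double induction on the layer index $l$ and the time step $t$, proving the stronger claim that every hidden state is an affine function of the inputs seen so far. Concretely, I will show that for each $l\in[0,L]$ and $t\in[T]$ there exist matrices $\Mb^{(l)}_{t,s}$ (for $s\in[t]$) and a vector $\cb^{(l)}_t$, depending only on the network parameters and the initial states, such that
\[
\hb_t^{(l)} = \sum_{s=1}^{t} \Mb^{(l)}_{t,s}\,\xb_s + \cb^{(l)}_t .
\]
The base case $l=0$ is immediate: $\hb_t^{(0)}=\xb_t$, so $\Mb^{(0)}_{t,s}=\delta_{t,s}\Ib$ and $\cb^{(0)}_t=\mathbf 0$.

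For the inductive step I fix $l\geq 1$ and assume the claim for layer $l-1$ at all times. I then induct on $t$ using Eq.~\eqref{eq:RNN} with $\sigma^{(l)}$ the identity:
\[
\hb_t^{(l)}=\Vb^{(l)}\hb_{t-1}^{(l)}+\Ub^{(l)}\hb_t^{(l-1)}+\bb^{(l)} .
\]
At $t=1$, the term $\hb_0^{(l)}$ is a constant. For $t>1$, the term $\hb^{(l)}_{t-1}$ is affine in $\xb_1,\dots,\xb_{t-1}$ by the inner hypothesis, and $\hb_t^{(l-1)}$ is affine in $\xb_1,\dots,\xb_t$ by the outer hypothesis. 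Linear maps and sums preserve affinity, so $\hb_t^{(l)}$ has the claimed form, with the recursions
\[
\Mb^{(l)}_{t,s}=\Vb^{(l)}\Mb^{(l)}_{t-1,s}+\Ub^{(l)}\Mb^{(l-1)}_{t,s},
\qquad
\cb^{(l)}_t=\Vb^{(l)}\cb^{(l)}_{t-1}+\Ub^{(l)}\cb^{(l-1)}_t+\bb^{(l)},
\]
where $\Mb^{(l)}_{t-1,t}:=\mathbf 0$ and $\cb^{(l)}_0:=\hb_0^{(l)}$. Alternatively, the same step can be read directly off Corollary~\ref{cor:induction-RNN}, which already writes $\hb_t^{(l)}$ as a linear combination of $\xb_t$, the previous-time states $\hb_{t-1}^{(j)}$ for $j\leq l$, and the biases. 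Inducting on $t$ alone with that formula gives the same conclusion.

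Taking $l=L$ and stacking the outputs gives
\[
f(\xb_1,\dots,\xb_T)=\Mb\,(\xb_1;\dots;\xb_T)+\cb,
\]
where $\Mb$ is block lower-triangular with blocks $\Mb^{(L)}_{t,s}$ and $\cb$ stacks the vectors $\cb^{(L)}_t$. This also shows causality.

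The only real obstacle is definitional rather than technical. With nonzero biases $\bb^{(l)}$ or nonzero initial states $\hb_0^{(l)}$, the map is affine, not linear. I would therefore state explicitly that $\cb\equiv \mathbf 0$ when all biases and initial states vanish, which is immediate from the recursion for $\cb^{(l)}_t$; in that case $f$ is linear. In general, I would interpret ``linear'' in Proposition~\ref{prop:linearRNNlinear} as affine, or equivalently as linear in the augmented input $(\xb_1,\dots,\xb_T,1)$. Apart from this point, the argument needs no estimates.
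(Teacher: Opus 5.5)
Your proposal is correct and follows essentially the same nested induction over $l$ and $t$ as the paper's proof; the explicit coefficient recursions and the block lower-triangular (causal) form are a small bonus. Your caveat is also well taken: the paper's base case asserts that $\Ub^{(1)}\xb_1+\Vb^{(1)}\hb_0^{(1)}+\bb^{(1)}$ is linear in the inputs, which holds only when biases and initial states vanish, so in general the proposition should be read as ``affine'' (or linear in the augmented input), exactly as you propose.
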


\begin{proof}
We show that $\hb^{(L)}_t \in \linearmaps{dT,n}$ for all $t \in [T]$, which implies that their concatenation is in $\linearmaps{dT,nT}$\footnote{Here~(and thereafter), we use the abuse of notation  $\hb^{(L)}_t \in \linearmaps{dT,n}$ to mean that the function that maps $(\xb_1,\cdots, \xb_T)$ to $\hb^{(L)}_t$ is linear.}. The proof consists of a nested induction over $l$ and $t$. 
\begin{itemize}
    \item Base case $l=1$: We show that $\hb^{(1)}_t \in \linearmaps{dT,n}$ for all $t$.
    \begin{itemize}
        \item Base case $t=1$ : $\hb^{(1)}_1 = \Ub^{(1)} \xb_1 + \V^{(1)} \hb^{(1)}_0 +\bb^{(1)} \in \linearmaps{dT,n}$
        \item Inductive step on $t$ (proving for $t$ assuming true for $t-1$):  
        $\hb^{(1)}_t = \Ub^{(1)} \xb_t + \V^{(1)} \hb^{(1)}_{t-1} +\bb^{(1)} \in \linearmaps{dT,n}$ since by induction hypothesis $\hb^{(1)}_{t-1} \in \linearmaps{dT,n}$.
    \end{itemize}
    \item Inductive step on $l$: We show that $\hb^{(l)}_t \in \linearmaps{dT,n}$ assuming $\hb^{(l-1)}_t \in \linearmaps{dT,n}$.
    \begin{itemize}
        \item Base case $t=1$ : $\hb^{(l)}_1 = \Ub^{(l)} \hb^{(l-1)}_1 + \V^{(l)} \hb^{(l)}_{0} +\bb^{(l)} \in \linearmaps{dT,n}$ since $\hb^{(l-1)}_{1} \in \linearmaps{dT,n}$ by induction hypothesis.
        \item Inductive step on $t$ (proving for $t$ assuming true for $t-1$): $\hb^{(l)}_t = \Ub^{(l)} \hb^{(l-1)}_t + \V^{(l)} \hb^{(l)}_{t-1} +\bb^{(l)} \in \linearmaps{dT,n}$ since $\hb^{(l-1)}_{t} \in \linearmaps{dT,n}$ by induction hypothesis on $l$ and $\hb^{(l)}_{t-1} \in \linearmaps{dT,n}$ by induction hypothesis on $t$.
    \end{itemize}
\end{itemize}
\end{proof}

\subsection{Proof of Theorem~\ref{thm:deepRNN}}
\begin{theorem*} For any $n>1$ and $L\geq 1$, $\hRNN{n,L} \subsetneq \hRNN{n,L+1}$ for linear RNNs.
\end{theorem*}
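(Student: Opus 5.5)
The proof has two parts: the inclusion $\hRNN{n,L}\subseteq\hRNN{n,L+1}$ and a separating function. Both follow the sketch given after Theorem~\ref{thm:deepRNN}.

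\emph{Inclusion.} Given a linear RNN of depth $L$, I will build one of depth $L+1$. Layers $1,\dots,L$ copy the original parameters. The extra top layer must reproduce $\hb_t^{(L)}$ exactly. Taking $\Ub^{(L+1)}=\mathbf{I}$, $\Vb^{(L+1)}=0$ and $\bb^{(L+1)}=0$ gives $\hb^{(L+1)}_t=\hb^{(L)}_t$ for all $t\ge1$, whatever the initial state. Both networks therefore output the same hidden sequence, so they compute the same element of $\hRNN{\cdot}$.

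\emph{Strictness via the copy function.} Since $d$ is arbitrary, I fix $d\ge1$ and use $f_p$ from Eq.~\eqref{eq:fp}, embedded as the first coordinate of the output with the other coordinates set to zero. Set $p=(L+1)(n-1)$.

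The upper construction realizes $f_p$ with depth $L+1$ and width $n$ using shift registers. In layer $l$, let $\Vb^{(l)}$ be the nilpotent shift that sends coordinate $i$ to coordinate $i+1$ for $i<n$. Let $\Ub^{(l)}$ feed the last coordinate of the layer below, or $(\xb_t)_1$ for layer $1$, into coordinate $1$. Take all biases and initial states to be zero. A value entering layer $l$ at coordinate $1$ then reaches coordinate $n$ after $n-1$ time steps. It moves up to coordinate $1$ of layer $l+1$ at the same time step, since $\Ub$ acts within a single time step. Counting the delays, $(\xb_{t-p})_1$ sits at coordinate $n$ of layer $L+1$ at time $t$ when $p=(L+1)(n-1)$.

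The output must be exactly $f_p$, with the required value in the first output coordinate. I will therefore reorder the coordinates of the last layer so that the final register slot is coordinate $1$ and the other coordinates hold the intermediate delays. This does not give exactly zero in those other coordinates. To fix it, I instead define $f_p$-type functions by a single coordinate and argue about that coordinate only: any $h$ in $\hRNN{n,L}$ agreeing with the constructed one must in particular have first coordinate $f_p$. Early times $t\le p$ give $0$ automatically because the initial states are zero.

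\emph{Lower bound (main obstacle).} I must show that no depth-$L$, width-$n$ linear RNN has an output coordinate equal to $(\xb_{t-p})_1$ for all $T$ when $p>L(n-1)$.

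By Proposition~\ref{prop:linearRNNlinear} and Corollary~\ref{cor:induction-RNN}, the map is linear. The stacked state $\mathbf{s}_t=(\hb^{(1)}_t,\dots,\hb^{(L)}_t)\in\Rbb^{nL}$ evolves as $\mathbf{s}_t=\mathbf{M}\mathbf{s}_{t-1}+\mathbf{N}\xb_t+\mathbf{c}$. Here $\mathbf{M}$ is block lower triangular with diagonal blocks $\Vb^{(l)}$. The output is $\mathbf{C}\mathbf{s}_t+\mathbf{D}\xb_t+\cdot$, with $\mathbf{D}=\prod_l\Ub^{(l)}$. The input-to-output impulse response is therefore a sequence of Markov parameters $g_k$, with $g_0=\mathbf{D}$ and $g_k=\mathbf{C}\mathbf{M}^{k-1}\mathbf{N}$. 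Realizing $f_p$ requires $g_k=0$ for $k\neq p$ and $g_p\neq0$. Such a response has a realization of minimal order exactly $p$, because it is the pure delay $z^{-p}$ with transfer-function McMillan degree $p$. A realization of order $nL$ gives only $p\le nL$, which is too weak.

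The refinement is to use the structure of $\mathbf{M}$ and $\mathbf{N}$. Each layer's input is $\Ub^{(l)}\hb^{(l-1)}_t$, a single $n$-dimensional injection, and each layer's output to the next is also mediated by $\Ub^{(l+1)}$. My approach is to show the layer transfer functions compose: $G(z)=G_L(z)\cdots G_1(z)$, with $G_l(z)=\Ub^{(l)}+\cdots$ of McMillan degree at most $n$. For the pure delay, I then need a per-layer accounting showing each layer contributes delay at most $n-1$ to the first coordinate path, and not $n$.

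The lost unit comes from the instantaneous feedthrough. A delay of exactly $n$ in one layer would force the direct feedthrough through that layer to vanish. Since information must be carried as a single scalar channel, I expect this to cost one dimension per layer. Equivalently, the layer transfer matrix must contain the nonzero relevant entry while keeping a rank-type constraint. Making this rigorous is delicate; I will attempt it via degree counting on the polynomial matrices in $z^{-1}$, which give the minimal delay order. If the $n-1$ bound proves hard, a fallback suffices for strictness: take $p=(L+1)n-L$, or any $p$ with $L n<p\le (L+1)(n-1)$, which is nonempty when $n>L+1$. Since this fails for small $n$, the refined bound is genuinely needed, and that is the main step.
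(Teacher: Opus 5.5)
Your inclusion step and the shift-register construction realizing $f_p$ with $p=(L+1)(n-1)$ at depth $L+1$ match the paper's (identity top layer; Lemma~\ref{lemma:fpconstruction}). Reducing strictness to ``no depth-$L$, width-$n$ linear RNN has an output coordinate equal to $f_p$'' is also sound.

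\textbf{The gap.} That lower bound is the entire content of strictness, and you never prove it. The paragraph about the ``lost unit from the feedthrough'' is a heuristic, not an argument. Your fallback does not rescue the theorem either:
\begin{itemize}
\item The range $Ln<p\le (L+1)(n-1)$ is empty whenever $n\le L+1$, in particular for $n=2$ and every $L$.
\item The suggested value $p=(L+1)n-L=(L+1)(n-1)+1$ cannot be realized even with $L+1$ layers.
\end{itemize}
As written, the proposal establishes the theorem only for $n>L+1$.

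\textbf{How the paper closes it.} The paper uses a flow-graph counting argument (Lemma~\ref{lemma:fpbound}). At each time step, the $p$ pending inputs must each use a horizontal edge. Each input must use $L$ vertical edges within its window $[i,i+p]$. All of this must fit in $nL$ neurons per step. Summing over $K$ steps gives $KnL\ge Kp+(K-p)L$, hence $p\le L(n-1)$.

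\textbf{Finishing your own route.} Your transfer-function approach can be completed by the degree count you allude to. This is more rigorous than the flow argument, which assumes the information about $x_i$ occupies a single neuron.
\begin{itemize}
\item Setting all inputs to zero shows the affine offset in the selected coordinate must vanish, so only the linear part matters.
\item With zero biases and initial states, write $H_l(q)=\sum_{t\ge1}\hb^{(l)}_t q^t$. Each layer then gives $H_l(q)=(\Ib-q\Vb^{(l)})^{-1}\Ub^{(l)}H_{l-1}(q)$.
\item Expand $(\Ib-q\Vb^{(l)})^{-1}=\mathrm{adj}(\Ib-q\Vb^{(l)})/\det(\Ib-q\Vb^{(l)})$. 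The adjugate has polynomial entries of degree at most $n-1$, and the determinant has constant term $1$.
\item Feed an impulse in $(\xb_1)_1$. The response of the selected output coordinate is $q\,N(q)/D(q)$, with $\deg N\le L(n-1)$ and $D(0)=1$.
\item Matching $f_p$ requires $N(q)=q^pD(q)$. The right-hand side has coefficient $D(0)=1\neq0$ at $q^p$, so $p\le L(n-1)$.
\end{itemize}
The ``one unit per layer'' you were looking for is exactly the difference between the adjugate's degree $n-1$ and the determinant's degree $n$. That is, $\Ub^{(l)}$ injects the current input with no delay.
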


\begin{proof}
~\paragraph{Inclusion: } 
We first show that for any $h \in \hRNN{n,L}$, there exists a function $\tilde{h} \in \hRNN{n, L+1}$ such that $h=\tilde{h}$, that is $\hRNN{n,L} \subseteq \hRNN{n,L+1}$.
Consider the parameters of the RNN computing $h$ : $\{\Ub^{(l)},  \Vb^{(l)}, \bb^{(l)}, \hb_0^{(l)}\}_{l=1}^{L}$. First, we set the parameters of the first $L$ layers of the RNN computing $\tilde{h}$ the same as the one computing $h$ so that the hidden states of the $L^{\text{th}}$ layer are the same for both functions:
$$\tilde{\Ub}^{(l)} = \Ub^{(l)}, \quad \tilde{\Vb}^{(l)} = \Vb^{(l)}, \quad \tilde{\bb}^{(l)} = \bb^{(l)}, \quad \tilde{\hb}_0^{(l)} = \hb_0^{(l)} \quad \forall l \in [L]$$
$$\implies \tilde{\hb}_t^{(l)} = \hb_t^{(l)} \quad \forall t \text{ and } \forall l \in [L].$$
Then, we set the last layer $L+1$, such that it computes the identity over the hidden vectors of the previous layer $\tilde{\hb}_t^{(L)}$:
$$\tilde{\Ub}^{(L+1)} = \Ib_{n}, \quad \tilde{\Vb}^{(L+1)} = \mathbf{0}, \quad \tilde{\bb}^{(L+1)} = \mathbf{0}, \quad \tilde{\hb}_0^{(L+1)} = \mathbf{0}$$
$$\implies \tilde{\hb}_t^{(L+1)} = \Ib_{n} \; \tilde{\hb}_t^{(L)} + \mathbf{0} \; \tilde{\hb}_{t-1}^{(L+1)} + \mathbf{0} = \tilde{\hb}_t^{(L)}= \hb_t^{(L)}.$$
This parameterization is such that $\tilde{\hb}_t^{(L+1)} =\hb_t^{(L)} $ for all $t$, which implies that $\tilde{h} = h$ and proves the inclusion $\hRNN{n,L} \subseteq \hRNN{n,L+1}$. Note that this proof also works for nonlinear RNNs by applying the same principles: using the same nonlinearities for the first $L$ layers and leaving the last layer linear\footnote{Even in the case where the last layer has to be non-linear, one could find parameters such that the last layer approximates the identity operator to an arbitrary precision~(assuming bounded domain of the inputs and infinite precision to represent floating operations).}. 

~\paragraph{Strict inclusion: } 
To prove strict inclusion of this theorem, we consider the~(real-valued) function $f_p$ defined for any $p$ by
\begin{equation}
    f_p(\xb_1,\xb_2,\cdots, \xb_t) = \begin{cases}
    (\xb_{t-p})_1&\text{ if } t-p > 0\\
    0&\text{ otherwise.}
\end{cases}
\tag{\ref{eq:fp}} 
\end{equation}

Informally, the function $f_p$ copies the first component of $\xb_t$ $p$ time steps forward. To simplify the notation, let $x_t = (\xb_t)_1$ be the first component of the input at time $t$. We first show in the following lemma that for any $n$ and any $p$ the function $f_p$ can be computed by a linear RNN as soon as it is deep enough~(at least $p/(n-1)$ layers).
\begin{lemma}\label{lemma:fpconstruction}
For any $n>1$ and $p\geq 1$, $f_p \in \linearmaps{n, 1}  \circ \hRNN{n,\ceil{p/(n-1)}}$.
\end{lemma}
\begin{proof}
The lemma is proved by construction.  We consider an RNN computing a function $h$ and a linear mapping $w$ such that $w \circ h = f_p$.  In this RNN, each hidden vectors is the concatenation of the first dimension of $n$ consecutive inputs. At each time step, the input occupying the first units of the hidden vector jumps to the next layer while the others are sent to the next (in time) hidden state through the recurrent connection: 

 \[\hb^{(1)}_t = \begin{bmatrix} x_{t-(n-1)} \\ \vdots \\ x_{t} \end{bmatrix}, \quad 
 \hb^{(2)}_t = \begin{bmatrix} x_{t-2(n-1)} \\ \vdots \\ x_{t-(n-1)} \end{bmatrix}, \quad \dots \; , \quad
 \hb^{(L)}_t = \begin{bmatrix} x_{t-L(n-1)} \\ \vdots \\ x_{t-(L-1)(n-1)} \end{bmatrix}\]

Since the depth of the network is $L=\ceil{\frac{p}{n-1}}$, the component $1+ L(n-1)-p$ of $\hb^{(L)}_t$ contains $x_{t-p}$. Moreover, it is ensured that, at all time, the first dimension of the previous $p$ inputs required for future computations are contained within the $L$ hidden vectors, see Figure \ref{fig:sketch_constrution_appendix}. 
 \begin{figure}[h]
\centering
\includegraphics[width=0.6\textwidth]{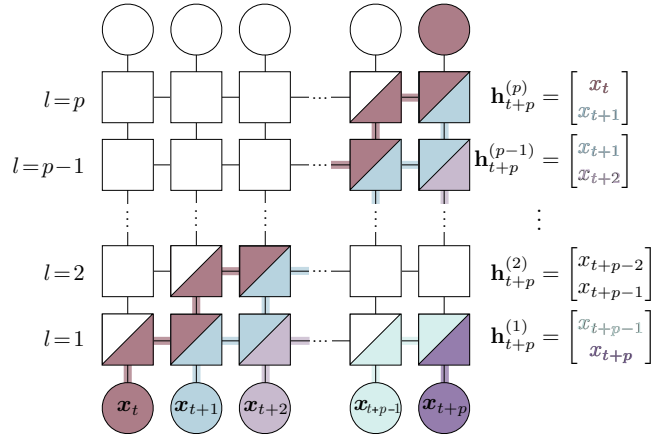}
\caption{Schematic representation of RNN's computation of $f_p$ for $n=2$ and $L=p$.}\label{fig:sketch_constrution_appendix}
\end{figure}

This computation is achieved with the following weight matrices, null biases and null initial hidden vectors :
\[\Ub^{(1)} = \begin{bmatrix} 0 \\ \vdots \\ 0 \\1 \end{bmatrix}, \quad 
\Ub^{(l)} = \begin{bNiceArray}{c|ccc}[margin] 0 & \Block{2-3}{ \mathbf{0} }  && \\ \vdots && \\ \hline 1 & 0 & \cdots & 0 \end{bNiceArray} \; \forall l\in [2,L] \quad \text{ and } \quad
\Vb^{(l)} = \begin{bNiceArray}{c|ccc}[margin] 0 & \Block{2-3}{ \Ib_{n-1} }  && \\ \vdots && \\ \hline 0 & 0 &\cdots & 0 \end{bNiceArray} \; \forall l \in [L].\]

One can verify that by composing $h$ with the linear map $w \in \linearmaps{n,1}$ defined by $w(\hb)=[\hb]_1$, we obtain $f_p$, which verifies the statement of the lemma.  
\end{proof}
We now show in the following lemma how the number of layers and hidden neurons needed for an RNN to compute the function $f_p$ are lower bounded by the lag value $p$.
\begin{lemma}\label{lemma:fpbound}
If $f_p \in \linearmaps{n, 1}  \circ \hRNN{n,L}$, then $p \leq L(n-1)$. \label{eq:boundP_d1}
\end{lemma}

\begin{proof}
 In order to prove the lemma, we formalize the computation of $f_p$ by an RNN as a flow graph in which information is propagated.
The graph is a grid where the horizontal axis correspond to the time $t$ and the vertical axis to the depth $l$. There is a total of $L+2$ layers: one for the input ($l=0$), one for the output ($l=L+1$) and $L$ hidden layers in between. Each node can be identified by its layer $l$ and time step $t$. Given the recurrent nature of the function, information can only flow through depth from $l$ to $l+1$ or through the recurrent connection from $t$ to $t+1$. 

In order for an RNN to compute $f_p$, for each time step $i$, the information of $x_i$ needs to flow through the graph  up to $\hb_{i+p}^{(L)}$ in the last layer $p$ time steps later. Thus, there exists a path in the graph between $\hb_i^{(1)}$ and $\hb_{i+p}^{(L)}$ through which the information of $x_i$ is propagated. Each step in this path is either horizontal or vertical, meaning either the information is propagated through recurrence, from $\hb_t^{(l)}$ to $\hb_{t+1}^{(l)}$, or through depth, from $\hb_t^{(l)}$ to $\hb_{t}^{(l+1)}$. See Figure~\ref{fig:conservation_computation_capacity}~(center) for an illustration. 
Note that we will use both $t$ and $i$ to denote time steps, but we will use $i$s for information flowing through the network and $t$s for the actual time steps of the computation. 

Now observe that for the information of $x_i$ to have been propagated through this path without loss of information, $x_i$ must have appeared as one of the components~(neuron) of each of the hidden state appearing on this path\footnote{One may argue that information about $x_i$ could have been separated between 2 or more neurons, but this would result in a sub-optimal construction since more than 2 neurons would have been used to store what could more efficiently be stored in a single neuron. }. We now introduce a notation to formalize the flow of information through the computation network of the RNN.

\textit{Definition: }
For each $i,t\geq 1$ and $1\leq l < L$, we let $\fr{i,t,l} = 1$ if the information of $x_i$ flows from $t$ to $t+1$ at layer $l$, and $0$ otherwise; similarly, we let  $\fu{i,t,l} = 1$ if the information of $x_i$ flows  from $l$ to $l+1$ at time $t$, and $0$ otherwise. \\
    
This definition is illustrated in Figure~\ref{fig:conservation_computation_capacity}~(left).
The computation $f_p$ is schematized in the center part of the figure:  the information of  $x_i$ introduced at time $t=i$ must flow out from the last layer of the RNN as the output at time $t=i+p$.

Since it is introduced at time $t=i$, $x_i$ cannot flow out of the input at other times or flow horizontally from earlier times.

\begin{figure}[h]
    \centering
    \includegraphics[width=0.75\textwidth]{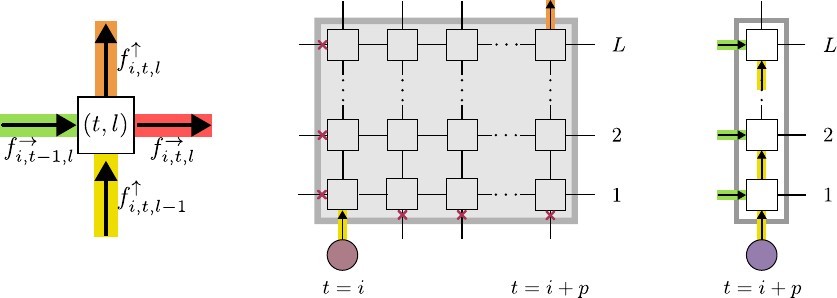}
    \caption{Left panel: Information flow through $\hb^{(l)}_t$. Center panel: Schematization of computation $f_p$. Right panel: The information capacity needed for all the hidden states across the layers at some time $t$ is obtained by considering the sum of information flows that enter the nodes. \label{fig:conservation_computation_capacity}}
\end{figure}

Lemma \ref{eq:boundP_d1} is derived by inspecting the total information capacity needed across all layers at time $t$, which is constrained by the number of neurons in the hidden states. At each time step $t$, the sum of all information flowing in all the neurons in the hidden states cannot exceed the total size of the hidden states $n L$~(see the right panel in Figure~\ref{fig:conservation_computation_capacity}). Formally, for all $t\geq 1$, we have
\begin{equation}
nL \geq \sum_{l=1}^{L}\sum_{i=1}^{t-1}f_{i,t-1,l}^{\rightarrow}+\sum_{l=1}^{L}\sum_{i=1}^{t}f_{i,t,l-1}^{\uparrow} .\label{eq:proto_boundP_d1}
\end{equation}
The first part of the r.h.s. corresponds to the information that is flowing from the previous time step through recurrence (at the same layer), and the second one corresponds to the information flowing between two consecutive layers through depth (at time step $t$).

The first double sum in Eq.~\eqref{eq:proto_boundP_d1} is simplified by observing that to compute $f_p$, all input information $x_i$ where $i\in[t-p,t-1]$ must flow from $t-1$ to $t$. To achieve this, each input must flow horizontally in (at least) one of the layers, meaning that $\sum_{l=1}^L \fr{i,t-1,l} \geq 1$ for all $i\in[t-p,t-1]$. 
Hence, the first double sum can be lower bounded as $\sum_{l=1}^{L}\sum_{i=1}^{t-1}\fr{i,t-1,l} \geq p$, leading to 
\begin{equation}
nL \geq p +\sum_{l=1}^{L}\sum_{i=1}^{t}f_{i,t,l-1}^{\uparrow} \label{eq:proto_boundP_d2}
\end{equation}
for all time step $t \geq 1$. Since this inequality is true for any time step $t$, we can sum up these inequalities for each time step up to an arbitrary time step $K$ to obtain
\begin{equation}
KnL \geq Kp + \sum_{t=1}^K\sum_{l=1}^{L}\sum_{i=1}^{t}f_{i,t,l-1}^{\uparrow}\ ,\ \text{ for any } K \label{eq:proto_boundP_d3}.
\end{equation}

To lower bound the last term, first observe that to make its way from the input layer $l=0$ to the last layer $l=L$, a given information $x_i$ must flow vertically at least once in each layer between the input and output times $t\in [i,i+p]$. Formally,  we must have that, for each time step $i$, $\sum_{t=i}^{i+p}f_{i,t,l-1}^{\uparrow} \geq 1 $ for all $l \in [L]$, hence $\sum_{l=1}^L\sum_{t=i}^{i+p}f_{i,t,l-1}^{\uparrow} \geq L $. Again, since this inequality holds for each time step $i$, we can sum up inequalities up to an arbitrary time step. In particular, we have 
\[\sum_{i=1}^{K-p}\sum_{l=1}^L\sum_{t=i}^{i+p}f_{i,t,l-1}^{\uparrow} \geq (K-p)L \ ,\ \text{ for any } K \geq p.\]

Now, we can show that 
\[\sum_{t=1}^K\sum_{l=1}^{L}\sum_{i=1}^{t}f_{i,t,l-1}^{\uparrow} \geq \sum_{i=1}^{K-p}\sum_{l=1}^L\sum_{t=i}^{i+p}f_{i,t,l-1}^{\uparrow}. \] Indeed, every term appearing in the r.h.s. is of the form $f_{\tilde i,\tilde i+\tau,l-1}^{\uparrow}$ with $\tilde i\in[1,K-p]$ and $\tau \in [0,p]$, and thus appears in the l.h.s. as well when $t=\tilde i + \tau$~(since in this case $\tilde i$ is the range of the sum over $i$ of the l.h.s., which is $[1,t] = [1,\tilde i + \tau]$). 

Finally, we thus have that $\sum_{t=1}^K\sum_{l=1}^{L}\sum_{i=1}^{t}f_{i,t,l-1}^{\uparrow} \geq (K-p)L $, which allows us to obtain from Eq.~\eqref{eq:proto_boundP_d3} the final inequality 
\[KnL \geq Kp + (K-p)L\] 
which can be re-arranged as
 \[p \leq L(n-1)  + \frac{pL}{K} .\] 
Since this equality is true for any $K$, we can choose $K$ large enough for $\frac{pL}{K}$ to be strictly smaller than one,  which implies the result, $p \leq L(n-1)$. 

\end{proof}

To prove the theorem, it only remains to combine the two lemmas. Let $n>1$ and $L\geq 1$. We choose $p=(L+1)(n-1)$; from Lemma~\ref{lemma:fpconstruction}, $f_p \in \linearmaps{n, 1}  \circ \hRNN{n,L+1}$, from Lemma~\ref{lemma:fpbound},  $f_p \not\in \linearmaps{n, 1}  \circ \hRNN{n,L}$. This shows that $\hRNN{n,L} \subsetneq \hRNN{n,L+1}$.

\end{proof}

\subsection{Formalization of Proposition~\ref{thm:(n,l)vs(nl,1)}}\label{apx:prop(n,l)vs(nl,1)}
\begin{proposition*}
    For $n>1$ and $L>1$, a single-layer linear RNN of hidden size $nL$ has a greater latent representational capacity than a linear RNN of size $n$ and depth $L$. 
\end{proposition*}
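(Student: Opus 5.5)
We would make the statement precise through the stacked latent state. Let $\hRNNconcat{n,L}$ denote the set of functions mapping $(\xb_1,\dots,\xb_T)$ to the sequence of concatenated states $\Hb_t=[\hb_t^{(1)};\dots;\hb_t^{(L)}]\in\Rbb^{nL}$ of a linear RNN of width $n$ and depth $L$. The formal claim to prove is then
\[
\hRNNconcat{n,L}\subsetneq \hRNN{nL,1}.
\]
Here both sides are sets of maps from input sequences to sequences in $\Rbb^{nL}$, compared exactly, coordinate by coordinate.

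\textbf{Inclusion.} We would apply Corollary~\ref{cor:induction-RNN} with identity activations. It gives each $\hb_t^{(l)}$ as an affine function of $\xb_t$ and of the previous states $\hb_{t-1}^{(j)}$ for $j\le l$. Stacking these expressions yields a single-layer recurrence
\[
\Hb_t=\bar\Vb\Hb_{t-1}+\bar\Ub\xb_t+\bar\bb .
\]
The matrix $\bar\Vb$ is block lower-triangular: its $(l,j)$ block is $\bigl(\prod_{k=j+1}^{l}\Ub^{(k)}\bigr)\Vb^{(j)}$ for $j\le l$, and it is $\mathbf{0}$ for $j>l$. The $l$-th block of $\bar\Ub$ is $\prod_{i=1}^{l}\Ub^{(i)}$, and the $l$-th block of $\bar\bb$ is $\sum_{j\le l}\bigl(\prod_{k=j+1}^{l}\Ub^{(k)}\bigr)\bb^{(j)}$. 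We take the initial state to be the concatenation of the $\hb_0^{(l)}$. This is a valid single-layer linear RNN of hidden size $nL$, so the inclusion holds.

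\textbf{Strictness.} The block structure implies that the first $n$ coordinates of $\Hb_t$, namely $\hb_t^{(1)}$, are always the output of a single-layer linear RNN of width $n$. By Lemma~\ref{lemma:fpbound} with $L=1$, no linear readout of $\hb_t^{(1)}$ can compute $f_p$ for $p=n$, since that would require $p\le n-1$. Hence $(\Hb_t)_1$ never equals $f_n$ for a deep network.

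On the other hand, $L>1$ gives $nL\ge n+1$. A single-layer RNN of size $nL$ can therefore run the shift-register construction of Lemma~\ref{lemma:fpconstruction} on $n+1$ units. We would place the input in the last used coordinate, shift it one coordinate forward at each time step, and set all unused weights to zero, so that the first coordinate of the state at time $t$ is $x_{t-n}$ (and $0$ for $t\le n$). Reading the first coordinate is a linear readout, so this function is in $\hRNN{nL,1}$. It is not in $\hRNNconcat{n,L}$, because its first coordinate is $f_n$.

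\textbf{Main obstacle.} The delicate point is the formalization. ``Representational capacity'' must be read as exact equality of the stacked latent sequences, coordinate by coordinate. Under a looser notion, such as equality up to an arbitrary linear readout, the deep network with Theorem~\ref{thm:deepRNN}'s construction could also compute $f_n$, and the separation would break. We would therefore state the proposition explicitly with $\hRNNconcat{n,L}$ before running the two steps above.
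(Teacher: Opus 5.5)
Your formalization $\hRNNconcat{n,L}\subsetneq\hRNN{nL,1}$ and your inclusion step match the paper's: both stack the expressions of Corollary~\ref{cor:induction-RNN} into a block lower-triangular single-layer recurrence. Your strictness argument is correct but takes a different route.

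\textbf{The paper's route.} The paper separates the two classes already at $t=1$ with a purely algebraic rank argument. It takes $\tilde{\mathbf{V}}=\mathbf{0}$, $\tilde{\mathbf{b}}=\mathbf{0}$, and input blocks with $\rank(\tilde{\mathbf{U}}_1)=1$ and $\rank(\tilde{\mathbf{U}}_2)=2$. Exact equality of the first stacked state then forces $\mathbf{U}^{(1)}=\tilde{\mathbf{U}}_1$ and $\mathbf{U}^{(2)}\mathbf{U}^{(1)}=\tilde{\mathbf{U}}_2$. This is impossible, because a product cannot have larger rank than its right factor. The block structure is thus exploited through the factorization of the input maps, not through memory.

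\textbf{Your route.} You exploit the temporal side instead. The first block is an autonomous width-$n$ system, so by Lemma~\ref{lemma:fpbound} with $L=1$ it cannot delay the input by $n$ steps. Meanwhile $nL\ge n+1$ units suffice for a shift register that does.

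\textbf{What each buys.} The paper's argument is elementary and self-contained, and it looks at a single time step. However, its witness as written needs input dimension $d\ge 2$, since a rank-2 block in $\mathbb{R}^{n\times d}$ requires it. (This is easily repaired by taking $\tilde{\mathbf{U}}_1=\mathbf{0}\neq\tilde{\mathbf{U}}_2$.) Your argument works for any $d$, including the $d=1$ setting of Theorem~\ref{thm:params}. It also ties the proposition to the paper's memory theme.

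\textbf{Where your route is weaker.} It relies on Lemma~\ref{lemma:fpbound}, whose proof in the paper is an information-flow argument with an informal step (the footnote about information split across several neurons). For the case $L=1$ you can remove this dependence. Let $\mathbf{u}$ be the first column of $\mathbf{U}^{(1)}$ and $\mathbf{w}$ the readout. The impulse response $g_k=\mathbf{w}^\top(\mathbf{V}^{(1)})^k\mathbf{u}$ must equal $\delta_{k,n}$. By Cayley--Hamilton, $g_k$ satisfies the order-$n$ linear recurrence given by the characteristic polynomial of $\mathbf{V}^{(1)}$. Hence $g_0=\dots=g_{n-1}=0$ forces $g_n=0$, which rigorously rules out $f_n$.
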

The following theorem formalizes the previous proposition. 
\begin{theorem}
Let $\hRNNconcat{n,l}$ denote the set of functions $h^{(\|)}$ mapping an input sequence to the concatenation of hidden vectors computed at each layers of an RNN of hidden size $n$ and depth $L$:
\[h^{(\|)}(\xb_1, \dots, \xb_T) = (\hb_1^{(\|)}, \dots, \hb_T^{(\|)}) \quad \text{ with } \quad
\hb_t^{(\|)} = \begin{bmatrix} \hb_t^{(1)} \\ \vdots \\ \hb_t^{(L)} \end{bmatrix}.\]

For any $n>1$ and $L\geq 1$, $\hRNNconcat{n,L} \subsetneq \hRNN{nL, 1}$, for linear RNNs.
\end{theorem}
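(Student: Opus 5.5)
Both parts are settled by viewing the whole deep network as one shallow network with a block-structured transition, plus a parameter-counting argument for strictness.

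\textbf{Inclusion.} I will show that the concatenated state $\hb_t^{(\|)}$ of any depth-$L$ linear RNN of width $n$ obeys a single-layer linear recurrence of width $nL$. Let the single-layer parameters be $\tilde\Ub$, $\tilde\Vb$, $\tilde\bb$ and $\tilde\hb_0$. By Corollary~\ref{cor:induction-RNN}, each block $\hb_t^{(l)}$ equals
\[
\Big(\prod_{i=1}^{l}\Ub^{(i)}\Big)\xb_t
+\sum_{j=1}^{l}\Big(\prod_{k=j+1}^{l}\Ub^{(k)}\Big)\Vb^{(j)}\hb_{t-1}^{(j)}
+\sum_{j=1}^{l}\Big(\prod_{k=j+1}^{l}\Ub^{(k)}\Big)\bb^{(j)} .
\]
This is an affine function of $\xb_t$ and of the blocks $\hb_{t-1}^{(j)}$ with $j\le l$. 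The parameters are then read off block by block:
\begin{itemize}
\item $\tilde\Ub\in\Rbb^{nL\times d}$ stacks the matrices $\prod_{i=1}^{l}\Ub^{(i)}$ for $l=1,\dots,L$;
\item $\tilde\Vb\in\Rbb^{nL\times nL}$ is block lower triangular, with $(l,j)$ block $\big(\prod_{k=j+1}^{l}\Ub^{(k)}\big)\Vb^{(j)}$ for $j\le l$ and zero otherwise;
\item $\tilde\bb$ stacks the accumulated biases;
\item $\tilde\hb_0$ is the concatenation of the $\hb_0^{(l)}$.
\end{itemize}
A straightforward induction on $t$ gives $\tilde\hb_t=\hb_t^{(\|)}$ for every $t$, so $\hRNNconcat{n,L}\subseteq\hRNN{nL,1}$.

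\textbf{Strictness.} The structural constraint above is that $\tilde\Vb$ is always block lower triangular. I will exhibit a single-layer RNN whose recurrence genuinely needs an upper block. Take $d=1$, zero bias and zero initial state, $\tilde\Ub=e_1$ (the first basis vector of $\Rbb^{nL}$), and $\tilde\Vb$ the cyclic shift sending $e_1\to e_2\to\dots\to e_{nL}\to e_1$. The resulting $h$ places $x_{t-k}$ into coordinate $1+(k \bmod nL)$, accumulated over all $k\ge 0$. Its impulse response is $\tilde\Vb^{k}e_1$, which is periodic with period $nL$ and never decays.

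For a concatenated deep network, linearity (Proposition~\ref{prop:linearRNNlinear}) gives $h^{(\|)}$ in the form $\hb^{(\|)}_t=\sum_{k\ge0} M_k x_{t-k}+c_t$ with $M_k=\tilde\Vb^k\tilde\Ub$ and block-lower-triangular $\tilde\Vb$. If $h^{(\|)}=h$ for all sequences, then by linearity in the inputs we get $M_k=\tilde\Vb_{\mathrm{cyc}}^k e_1$ for every $k$, and $c_t=0$.

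The main obstacle is turning the block-triangular structure into a contradiction. My plan is the following.
\begin{itemize}
\item The top block of $M_k$ depends only on $\Vb^{(1)}$ and $\Ub^{(1)}$: it equals $(\Vb^{(1)})^k\Ub^{(1)}$.
\item For the cyclic target, this top block equals $e_{1+k}$ for $k<n$, equals $0$ for $n\le k<nL$, and is nonzero again at $k=nL$.
\item From the first two facts, $(\Vb^{(1)})^{n}\Ub^{(1)}=0$.
\item Then $(\Vb^{(1)})^{nL}\Ub^{(1)}=(\Vb^{(1)})^{nL-n}(\Vb^{(1)})^{n}\Ub^{(1)}=0$, which contradicts the nonzero top block at $k=nL$.
\end{itemize}
Since $L>1$, we have $nL>n$, so the zero window $n\le k<nL$ is nonempty and the argument goes through. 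This gives $h\notin\hRNNconcat{n,L}$ and completes the strict inclusion. (For $L=1$ the two classes coincide, so strictness requires $L>1$, as in the informal Proposition~\ref{thm:(n,l)vs(nl,1)}.)
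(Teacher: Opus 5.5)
Your proof is correct. The inclusion is the same as the paper's: you read off a single-layer realization with block-lower-triangular transition from Corollary~\ref{cor:induction-RNN}, and you additionally specify $\tilde\hb_0$, which the paper leaves implicit. For strictness you take a genuinely different route.

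\textbf{The paper's strictness argument.} It is a one-time-step rank bottleneck. It takes $\tilde\Vb=\mathbf{0}$ and an input matrix whose first block has rank $1$ and second block rank $2$. Matching at $t=1$ forces $\tilde\Ub_1=\Ub^{(1)}$ and $\tilde\Ub_2=\Ub^{(2)}\Ub^{(1)}$, and $\rank(\Ub^{(2)}\Ub^{(1)})\le\rank(\Ub^{(1)})$ gives the contradiction. So it exploits the structural constraint on the input map (cumulative products) and never uses the recurrence.

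\textbf{Your strictness argument.} You exploit the constraint on the transition map instead. Because the first block row of the induced transition matrix is $[\Vb^{(1)},0,\dots,0]$, the first layer evolves autonomously with impulse response $(\Vb^{(1)})^k\Ub^{(1)}$. Once this vanishes at $k=n$ it vanishes forever, whereas the cyclic shift routes $x_{t-k}$ out of the first block and back into it at $k=nL$.

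\textbf{What each route buys.} The paper's proof is shorter, but it silently needs input dimension $d\ge 2$: a rank-$2$ block $\tilde\Ub_2\in\Rbb^{n\times d}$ is impossible when $d=1$. It therefore does not cover the scalar-input setting of Theorem~\ref{thm:params}. Your argument works for $d=1$ and extends verbatim to any $d$ by letting $\tilde\Ub$ read only the first input coordinate. It does not even need $n>1$. It also pinpoints the absence of top-down feedback as the source of the gap, which fits the memory viewpoint of Section~\ref{sec:theoryRNNs}.

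\textbf{Minor points.} Your remark that strictness fails for $L=1$ is correct, and the paper's proof also implicitly requires $L\ge 2$, since it uses $\tilde\Ub_2$. Two cosmetic issues: you reuse $\tilde\Vb$ both for the deep network's induced transition and for the target, and the sum $\sum_{k\ge 0}$ should stop at $k=t-1$.
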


\begin{proof}
~\paragraph{Inclusion: } 
For any $h^{(\|)} \in \hRNNconcat{n,L}$, we show there exists a $\tilde{h} \in \hRNN{nL,1}$ such that $\tilde{h}=h^{(\|)}$, that is $\hRNNconcat{n,L} \subseteq \hRNN{nL, 1}$.
The idea is for the hidden vectors $\tilde{\hb}_t$ of the function $\tilde{h}$ to be composed of $L$ blocks of $n$ dimensions. Each block reproduces the computation of one of the layer hidden vector $\hb_t^{(l)}$ that is concatenated in the function $h^{(\|)}$. The parameters of $\tilde{h}$ are thus also separated in blocks as follow:
\begin{equation}
\label{eq:blockparam.htilde}
\tilde{\hb}_t = \tilde{\Ub} \xb_t + \tilde{\Vb} \tilde{\hb}_{t-1} + \tilde{\bb} \to \begin{bmatrix} \tilde{\hb}_{t,1} \\ \vdots \\ \tilde{\hb}_{t,L} \end{bmatrix} =  \begin{bmatrix} \tilde{\Ub}_{1} \\ \vdots \\ \tilde{\Ub}_{L} \end{bmatrix} \xb_t + 
\begin{bmatrix} \tilde{\Vb}_{11} & \dots &  \tilde{\Vb}_{1L}  
\\ \vdots & \ddots & \vdots
\\ \tilde{\Vb}_{L1} & \dots & \tilde{\Vb}_{LL} 
\end{bmatrix}
\begin{bmatrix} \tilde{\hb}_{t-1, 1} \\ \vdots \\ \tilde{\hb}_{t-1, L} \end{bmatrix} 
+ \begin{bmatrix} \tilde{\bb}_{1} \\ \vdots \\ \tilde{\bb}_{L} \end{bmatrix}
\end{equation}
with $\tilde{\hb}_{t,l} \in \Rbb^{n}$, $\tilde{\Ub_l} \in \Rbb^{n \times d}$, $\tilde{\Vb}_{lj} \in \Rbb^{n \times n}$ and $\tilde{\bb}_l \in \Rbb^n$ for all $l \in [L]$.

For each $l\in[L]$, we want the computation $\tilde{\hb}_{t,l} = \tilde{\Ub}_l \xb_t + \sum_{j=1}^L \tilde{\Vb}_{lj} \tilde{\hb}_{t-1, j} + \tilde{\bb}_l$ to correspond to $\hb_t^{(l)} = \Ub^{(l)} \hb_{t}^{(l-1)} + \Vb^{(l)} \hb_{t-1}^{(l)} + \bb^{(l)}$.

As shown in Corollary~\ref{cor:induction-RNN}, $\hb_t^{(l)}$ can be expressed in terms of $\xb_t$ and $\hb_{t-1}^{(\bar{l})}$ with $\bar{l} \in[l]$ as : 
\[
\hb_t^{(l)} = \left(\prod_{i=1}^{l}\Ub^{(i)}\right)\xb_{t}+\sum_{j=1}^{l}\left(\prod_{k=j+1}^{l}\Ub^{(k)}\right)\Vb^{(j)}\hb_{t-1}^{(j)}+\sum_{j=1}^{l}\left(\prod_{k=j+1}^{l}\Ub^{(k)}\right)\bb^{(j)}
\]

It then suffices to set the parameters of an RNN computing $\tilde{h}$ as
\[ 
\tilde{\Ub}_l = \prod^{l}_{i=1} \Ub^{(i)},\quad 
\tilde{\Vb}_{lj} = \begin{cases} \left(\prod^{l}_{k=j+1} \Ub^{(k)}\right) \Vb^{(j)} &  j \leq l \\0  & j>l \end{cases}, \quad    \tilde{\bb}_{l} = \sum_{j} \left(\prod^{l}_{k=j+1} \Ub^{(k)}\right) \bb^{(j)}
\]
in order to have $\tilde{\hb}_{t,l} = \hb_t^{(l)}$ for all $l\in[L]$, and therefore $\tilde{h} = h^{(\|)}$. This concludes the proof for $\hRNNconcat{n,L} \subseteq \hRNN{nL, 1}$.
~\paragraph{Strict inclusion: }
We show there exists a function $\tilde{h} \in \hRNN{nL, 1}$ that cannot be computed by any function in $h \in \hRNNconcat{n,L}$, that is $\hRNNconcat{n,L} \not \supset \hRNN{nL, 1}$.

Consider the function  $\tilde{h} \in \hRNN{nL, 1}$ computed by an RNN parameterized with $\tilde{\Vb}= \mathbf{0}$, $\tilde{\bb} = \mathbf{0}$
and $\tilde{\Ub}$ block parameterized as in Eq.~\ref{eq:blockparam.htilde} with $\rank(\tilde{\Ub}_1) = 1$ and $\rank(\tilde{\Ub}_l) = 2$ for all $l \in [2,l]$\footnote{In fact, the result works for any parameterization such that $\rank(\tilde{\Ub}_1) = 1$ while $\rank(\tilde{\Ub}_l) > 1$ for at least one $l \in [2,l]$}.
Suppose there exists $h \in \hRNNconcat{n,L}$ such that $h = \tilde{h}$. Then, we would have equality between the first hidden vectors $\tilde{\hb}_1$ and $\hb_1^{(\|)}$ which implies on one hand, 
\begin{align*}
    \tilde{\hb}_{1,1} = \hb_1^{(1)} &\implies \tilde{\Ub}_1 \xb_1 = \Ub^{(1)} \xb_1 \implies \tilde{\Ub}_1 = \Ub^{(1)} \\
    &\implies \rank(\Ub^{(1)}) = \rank(\tilde{\Ub}_1)=1,
\end{align*}

but on the other hand, 
\begin{align*}
    \tilde{\hb}_{1,2} = \hb_1^{(2)} &\implies \tilde{\Ub}_2 \xb_1 = \Ub^{(2)} \Ub^{(1)} \xb_1 \implies \tilde{\Ub}_2 = \Ub^{(2)} \Ub^{(1)} \\
    &\implies \rank(\Ub^{(2)} \Ub^{(1)}) = \rank(\tilde{\Ub}_2)=2\\
    &\implies \rank( \Ub^{(1)}) \geq 2.
\end{align*}

The existence of $h \in \hRNNconcat{n,L}$ such that $h = \tilde{h}$ leads to a contradiction, proving that $h \not \in \hRNN{nL, 1}$ and $\hRNNconcat{n,L} \not \supset \hRNN{nL, 1}$.
\end{proof}
\subsection{Proof of Theorem~\ref{thm:params}}
\begin{theorem*}
Considering linear RNNs with input dimension $d=1$, let $\params(n,L)$ denote the number of parameters of a $L$-layers RNN with hidden size $n$ \footnote{One can check that $\params(n,L)=(2L-1)n^2+(L+1)n$~(excluding initial hidden states as parameters count, though including them would not change the result).}.

For any depth $L$ and for any hidden size $n\geq 4$, there exists a function $f \in \hRNN{n,L}$ such that, for all $\tilde{L} < L$ and $\tilde{n}$, if $f \in \hRNN{\tilde{n}, \tilde{L}}$ then $\params(\tilde{n}, \tilde{L})>\params(n,L)$.
\end{theorem*}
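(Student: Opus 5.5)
The natural witness is the copy function. Take $f$ to be the function computed by the explicit construction of Lemma~\ref{lemma:fpconstruction} with $p = L(n-1)$ and input dimension $d=1$, so that $\ceil{p/(n-1)} = L$. Then $f\in\hRNN{n,L}$, and composing $f$ with the coordinate projection $w(\hb)=[\hb]_1$ gives $f_p$.

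Now suppose $f\in\hRNN{\tilde n,\tilde L}$ with $\tilde L<L$. Then $f_p = w\circ f \in \linearmaps{\tilde n,1}\circ\hRNN{\tilde n,\tilde L}$, so Lemma~\ref{lemma:fpbound} gives $L(n-1)\le \tilde L(\tilde n-1)$, that is
\[
\tilde n \;\ge\; 1+\frac{L(n-1)}{\tilde L}.
\]
This is the only place the memory argument enters. The rest is counting parameters.

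\textbf{Reducing to one depth.} $\params(\tilde n,\tilde L)=(2\tilde L-1)\tilde n^2+(\tilde L+1)\tilde n$ is increasing in $\tilde n$. So it suffices to show, for every integer $1\le \tilde L\le L-1$, that
\[
\phi(\tilde L):=\params\!\big(1+L(n-1)/\tilde L,\ \tilde L\big) > (2L-1)n^2+(L+1)n,
\]
where $\tilde n$ is treated as a real number; rounding up to an integer only strengthens the inequality. The dominant term of $\phi(\tilde L)$ is $(2\tilde L-1)\tilde n^2 \ge \frac{2\tilde L-1}{\tilde L^2}L^2(n-1)^2$ plus positive lower-order terms. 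Since $x\mapsto (2x-1)/x^2$ is decreasing for $x\ge 1$, the worst case should be $\tilde L=L-1$. I would check this on the full expression, not just the leading term, by expanding $\tilde n^2=1+2L(n-1)/\tilde L+L^2(n-1)^2/\tilde L^2$ and verifying that each term is nonincreasing in $\tilde L$, or at least that their sum is.

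\textbf{The worst case $\tilde L=L-1$.} Compare the leading coefficients: $L^2(2L-3)/(L-1)^2$ on the left against $2L-1$ on the right. Their difference has numerator $L^2(2L-3)-(L-1)^2(2L-1)=2L^2-4L+1$, which is positive for $L\ge 2$. Hence $\phi(L-1)-\params(n,L)$ is a quadratic in $n$ with positive leading coefficient. This is the ``upward parabola'' of the sketch. The main work, and the main obstacle, is then to check that this quadratic is positive for all $n\ge 4$ uniformly in $L\ge 2$. The $(n-1)^2$ versus $n^2$ mismatch costs a linear term, which must be absorbed by the lower-order contributions $2L(n-1)/\tilde L$ and $(\tilde L+1)\tilde n$.

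To do this I would evaluate the difference at $n=4$, where it should be positive. Then I would show its derivative in $n$ is positive for $n\ge 4$, with a bound uniform in $L$, using $2L^2-4L+1\ge 1$ and the explicit lower-order coefficients. The case $L=1$ is vacuous, since no $\tilde L<1$ exists.

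One subtlety should be flagged. Read literally, membership $f\in\hRNN{\tilde n,\tilde L}$ forces $\tilde n=n$, and then Theorem~\ref{thm:deepRNN} already rules out $\tilde L<L$. To make the statement meaningful for arbitrary $\tilde n$, I would read it modulo a linear readout, i.e.\ work with $f_p\in\linearmaps{\tilde n,1}\circ\hRNN{\tilde n,\tilde L}$. This is exactly the setting in which Lemma~\ref{lemma:fpbound} applies.
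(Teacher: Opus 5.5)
Your witness ($f_p$ with $p=L(n-1)$), your use of Lemmas~\ref{lemma:fpconstruction} and~\ref{lemma:fpbound}, and the monotonicity in $\tilde n$ are the same as in the paper's proof. The difference is how you make the final inequality uniform. The paper factors the difference and computes, for each pair $(L,\tilde L)$, the larger root $n_0^+(L,\tilde L)$ in closed form. It shows this root decreases in $L$ (so the worst case is $L=\tilde L+1$) and then in $\tilde L$, which gives $(3+\sqrt{13})/2\approx 3.30$. You instead fix $L$ and push $\tilde L$ up to $L-1$. Your treatment of that endpoint is fine. With $u=n-1$ and $k=L-1$, the difference equals $k^{-2}\big[(2k^2-1)u^2-ku-3k^2\big]$. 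This is $(15k^2-3k-9)/k^2>0$ at $u=3$ and is increasing for $u\ge 1$.

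The weak point is the reduction to $\tilde L=L-1$, which you propose to verify termwise. With $m=L(n-1)$,
\[
\phi(\tilde L)=3\tilde L+5m-\frac{m}{\tilde L}+\Big(\frac{2}{\tilde L}-\frac{1}{\tilde L^2}\Big)m^2 .
\]
The terms $3\tilde L$ and $-m/\tilde L$ increase with $\tilde L$, and the leading-term heuristic does not control them. The sum is not even monotone as a function of a real variable, since $\phi'(1)=3+m>0$. Only the integer step survives:
\[
\phi(k)-\phi(k+1)=\frac{(2k^2-1)m^2-k(k+1)m-3k^2(k+1)^2}{k^2(k+1)^2}.
\]
This is positive only for $m$ large enough. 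For $k=1$ it needs $m>1+\sqrt{13}$; for example, $L=4$, $n=2$ gives $m=4$ and $\phi(1)=35<36=\phi(2)$.

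The inequality does hold in the theorem's range. There $k+1\le L-1$ and $n\ge 4$ force $m\ge 3(k+2)$, which exceeds the positive root (at most $\tfrac{1+\sqrt{13}}{2}(k+1)$). But this is an extra computation that genuinely uses $n\ge 4$, not something the leading coefficient gives you.

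A cleaner option is to skip the reduction in $\tilde L$ altogether:
\[
\tilde L^{2}\big(\phi(\tilde L)-\params(n,L)\big)=(L-\tilde L)\big[\tilde a\,u^2-\tilde L\,u-3\tilde L^2\big],\qquad \tilde a=2L\tilde L-L-\tilde L .
\]
Only $\tilde a$ depends on $L$, and it is increasing in $L$, so $\tilde a\ge 2\tilde L^2-1$. The bracket is increasing for $u\ge 1$, and at $u=3$ it is at least $15\tilde L^2-3\tilde L-9>0$. This is the paper's worst case $L=\tilde L+1$, reached by varying $L$ instead of $\tilde L$.

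Your remark about reading the statement modulo a linear readout is right. It is what the paper's proof does in practice, since it works with $f_p\in\linearmaps{\tilde n,1}\circ\hRNN{\tilde n,\tilde L}$. In the literal reading, though, it is Lemma~\ref{lemma:fpbound} applied to this particular $f$, not Theorem~\ref{thm:deepRNN} itself, that rules out $\tilde L<L$.
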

\begin{proof}
Let $L>1$ and $n>1$. We set $p=L(n-1)$ and show that the function $f_p$ defined in Eq.~\eqref{eq:fp} satisfies the result. From Lemma~\ref{lemma:fpconstruction}, we know that $f_p\in\linearmaps{n,1} \circ \hRNN{n,L}$. Now, let $\tilde L $ be an integer such that $1\leq \tilde L < L$ and  $f_p\in \linearmaps{\tilde n,1} \circ \hRNN{\tilde n,\tilde L}$ for some integer $\tilde n$. Then, from Lemma~\ref{lemma:fpbound}, we must have $p\leq \tilde L (\tilde n - 1)$, i.e., 
\[
L(n-1) \leq \tilde L (\tilde n-1).
\]
It follows that 
\[
\tilde n \geq \frac{L}{\tilde L}(n-1) + 1.
\]
Since $\params(n, L) = (2L-1)n^2+(L+1)n$ is monotonous increasing in $n$ for $n\geq 1, L\geq 1$, a lower bound can be obtained as
\begin{align*}
    \params(\tilde n , \tilde L) 
    \geq \params\left(\frac{L}{\tilde L}(n-1) + 1 , \tilde L\right) = (2\tilde L-1)\left(\frac{L}{\tilde L}\right)^2n^2+bn+c
\end{align*}
where $b$ and $c$ are constants that depend only on $L$ and $\tilde L$. It follows that 
    \begin{equation}\label{eq:poly.params.lower.bound}
\params(\tilde n, \tilde L) - \params(n, L) 
\geq 
\left((2\tilde L-1)\left(\frac{L}{\tilde L}\right)^2-2L + 1\right) n^2 + b^\prime n + c
\end{equation}
where $b^\prime$ does not depend on $n$. 

The r.h.s. of Eq.~\eqref{eq:poly.params.lower.bound} is a second order polynomial in $n$ and its leading coefficient is positive. To see this, this coefficient can be developped and re-arranged as
\begin{equation}\label{eq:a_positive}
a  = \frac{(L-\tilde{L})(2\tilde L L - L - \tilde L)}{\tilde{L}^2} > 0.
\end{equation}
All factors  $\tilde{L}^{-2}$,  $L-\tilde L$ and  $2\tilde LL - (\tilde L + L) $ are positive for $1\leq\tilde L < L$, showing that the leading coefficient is indeed positive. Since the r.h.s. of Eq.~\eqref{eq:poly.params.lower.bound} is a second order polynomial with a strictly positive leading term~(its graph is an upward-opening parabola), there exists an $N$ for which this polynomial is positive for any $n > N$, i.e., $\params(\tilde n, \tilde L) > \params(n, L) $ for all $n > N$.
\end{proof}

\subsubsection{Critical $N$}
It is in fact possible to find the critical value $N$ where the r.h.s. of Eq.~\eqref{eq:poly.params.lower.bound} vanishes, that is we solve $a n^2+b'n+c=0$. The leading coefficient $a$ is shown in Eq.~\eqref{eq:a_positive}, and other coefficients can be found easily. The coefficients we obtain have common factors and the equation can be factorized as follows: $(L-\tilde{L})\tilde{L}^{-2} (\tilde a n^2+ \tilde b n+\tilde c)=0$, where 
\[
\tilde a = 2 L \tilde L - L - \tilde L, \quad 
\tilde b = -2\tilde{a} - \tilde{L}, \quad 
\tilde c = \tilde{a} - \tilde{L}(3 \tilde{L}-1).
\]
It is sufficient to solve $\tilde a n^2+ \tilde b n+\tilde c = 0$. The largest root is given as a function of $L,\tilde L$
\[
    n_0^+(L,\tilde L) = 1 + \frac{\tilde{L}(1+\sqrt{12 \tilde a +1})}{2 \tilde a}.
\]
where $\tilde a$'s dependence in $L$ and $\tilde L$ is not noted explicitly to keep the notation compact.
The maximal $n_0^+(L,\tilde L)$ among pairs $(L,\tilde L)$ respecting $1\leq\tilde L < L$ gives $N$. We observe that the gradient in $L$ is always negative
\[
    \frac{\partial n_0^+}{\partial L} = -\frac{\left(6 \tilde{a} +\sqrt{12 \tilde{a} +1}+1\right) \tilde{L} \left(2 \tilde{L}-1\right)}{2 \tilde{a} ^2 \sqrt{12 \tilde{a} +1}}<0
\]
which is easy to see by noting that $\tilde{a}>0$ and $\tilde{L} \geq 1$. This implies that $n_0^+$ is maximized by setting $L$ to its smallest possible value,  $\tilde L + 1$
\[  
    n_0^+(\tilde L+1,\tilde L)  = 1 + \frac{\tilde{L} \left(1+\sqrt{24 \tilde{L}^2-11}\right)}{4 \tilde{L}^2-2}.
\]
This expression decreases monotonically in $\tilde L$ for $\tilde L \geq 1$. Thus, $n_0^+$ is again maximized by choosing the appropriate boundary, $\tilde{L}=1$. The maximum is  $n_0^+(L=2, \tilde{L}=1)= (3+\sqrt{13})/2 \approx 3.30$, which is rounded up to $4$. Therefore, for any depth $L$ and any hidden size $n \geq 4$, there exists a function computable by an RNN with $L$ layers and $n$ neurons which cannot be computed by a shallower network without increasing the number of parameters.

\subsection{Formalization of "linear BIRNNs compute polynomial transformations"}\label{apx:proplinearBIRNN}
\begin{proposition*}
Let $h\in \hBIRNN{n,L}$ be the function computed by a BIRNN. Then, for any $T$, the function $f : \Rbb^{dT} \to \Rbb^{nT}$ mapping~(concatenations of) sequences of inputs $(\xb_1,\cdots,\xb_T)$ to~(concatenations of) sequences of hidden states $(\hb^{(L)}_1,\cdots, \hb^{(L)}_T)$ is a multivariate polynomial whose order is  at most $T^L$.
\end{proposition*}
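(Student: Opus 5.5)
We prove the statement by a double induction over the layer index $l$ and the time step $t$, mirroring the proof of Proposition~\ref{prop:linearRNNlinear}. The claim to establish is sharper than the statement: for every $l\in[L]$ and $t\in[T]$, each coordinate of $\hb_t^{(l)}$ is a multivariate polynomial in the entries of $(\xb_1,\dots,\xb_T)$, of degree at most $D(l,t)$, for an explicit bound $D(l,t)$ that we then show is at most $T^L$ whenever $t\le T$ and $l\le L$. The initial states $\hb_0^{(l)}$ are constants, i.e.\ polynomials of degree $0$.

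The key identity is the BIRNN recursion from Lemma~\ref{lemma:induction-2RNN} with $\Ub^{(l)},\Vb^{(l)},\bb^{(l)}$ set to zero:
\[
\hb_t^{(l)} = \At^{(l)} \times_1 \hb_{t-1}^{(l)} \times_2 \hb_t^{(l-1)} .
\]
Each coordinate of $\hb_t^{(l)}$ is therefore a fixed linear combination of products $(\hb_{t-1}^{(l)})_i(\hb_t^{(l-1)})_j$. Polynomials are closed under products and linear combinations, and degrees add under products, so we obtain
\[
\deg \hb_t^{(l)} \le \deg \hb_{t-1}^{(l)} + \deg \hb_t^{(l-1)}, \qquad \deg \hb_0^{(l)}=0, \qquad \deg \hb_t^{(0)} = \deg \xb_t = 1 .
\]
Unrolling in $t$ at fixed $l$ gives $D(l,t)\le \sum_{s=1}^{t} D(l-1,s)$ with $D(0,s)=1$. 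Hence $D(1,t)\le t$, $D(2,t)\le t(t+1)/2$, and in general $D(l,t)\le \binom{t+l-1}{l}$. This is a lattice-path count, consistent with the flow-graph picture of Figure~\ref{fig:conservation_computation_capacity}.

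It remains to check that $\binom{t+l-1}{l}\le t^l\le T^L$ for $t\le T$ and $l\le L$. A cruder bound avoids the binomial altogether: by induction, $D(l,t)\le t\cdot\max_{s\le t}D(l-1,s)\le t\cdot t^{l-1}=t^l$. This already suffices, using $T\ge 1$ so that $T^l\le T^L$.

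The main obstacle is bookkeeping rather than mathematics. The induction must be ordered correctly: the outer induction is on $l$, and the inner one is on $t$ using the already-established bounds for layer $l-1$ at every time step. We must also state explicitly that the degree bound is an upper bound, since cancellations or zero tensors can lower the degree, and that the concatenated map $f$ has degree equal to the maximum over $t$ of its components.
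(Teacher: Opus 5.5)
Your argument is correct, but it takes a different route from the paper. The paper inducts on $L$ alone. It writes one layer as the contraction $\Tten\times_1\hb_0^{(1)}\times_2\xb_1\cdots\times_{t+1}\xb_t$, deduces that each layer is a polynomial map of degree at most $T$ in the sequence it receives, and multiplies degrees under composition to get $T^{L-1}\cdot T=T^L$.

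You instead run a double induction on the $(l,t)$ grid with the additive recursion $\deg\hb_t^{(l)}\le\deg\hb_{t-1}^{(l)}+\deg\hb_t^{(l-1)}$. This exploits the fact that each layer is \emph{multilinear} in $(\hb_1^{(l-1)},\dots,\hb_t^{(l-1)})$. The composition argument charges $t\cdot\max_{s}\deg\hb_s^{(l-1)}$, which is essentially your ``cruder'' bound, whereas you charge only $\sum_{s\le t}\deg\hb_s^{(l-1)}$.

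The paper's route is shorter and matches the ``one more polynomial map per layer'' intuition. Yours avoids the explicit tensor formula and yields the sharper bound $\binom{T+L-1}{L}$, which is actually attained. Take the diagonal construction $\At^{(l)}_{ijk}=\delta_{ijk}$, $\hb_0^{(l)}=\mathbf{1}$ from the proof of Theorem~\ref{thm:BIRNN}. There $\hb_t^{(l)}=\hb_{t-1}^{(l)}\odot\hb_t^{(l-1)}$ has monomial coordinates, so your recursion holds with equality. For example, the degree at $t=2$ is $l+1$, against $2^l$ from the paper's bound.

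Hence, for fixed $T$, the largest achievable degree is a polynomial of degree $T-1$ in $L$. The bound $T^L$ is therefore valid but quite loose.
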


\begin{proof}
First, it is easy to check that $f$ is a polynomial since the update function of a linear 2RNN is a polynomial map of order 2:
$$\vec{h}_t^{(l)} = \At^{(l)} \times_1 \hb_{t-1}^{(l)}\times_2 \hb_t^{(l-1)} + \Vb^{(l)} \vec{h}_{t-1}^{(l)} + \Ub^{(l)} \vec{h}_t^{(l-1)} + \bb^{(l)}. $$
Thus, each $\vec{h}_t^{(l)}$ is obtained by composing polynomial maps of the inputs, and $f$ is itself a polynomial map.

We now proceed to show that the order of the polynomial is at most $T^L$ by considering the recurrence on $L$. For sake of simplicity, we assume that only the second order parameters $\At^{(l)}$  of the 2RNN are non-zero, since the polynomial interactions of highest order in $\vec{h}_t^{(l)}$ are the second order ones. 
For $L=1$, for any $t$, we have 
$$\vec{h}_t^{(1)} = \Tten \times_1 \hb^{(1)}_0 \times_2 \xb_1 \times_3 \cdots \times_{t+1} \xb_t$$
where the $(t+2)$th order tensor $\Tten \in \Rbb^{n\times d\times \cdots \times d \times n}$ is defined by 
$$\Tten_{i,j_1,\cdots, j_t, k} = \sum_{r_1=1}^{n}  \sum_{r_2=1}^{n} \cdots \sum_{r_2=1}^{n}  (\hb^{(1)}_0)_{i} \At^{(1)}_{i,j_1,r_1}
\At^{(1)}_{r_1,j_2,r_2}\At^{(1)}_{r_2,j_3,r_3}\cdots\At^{(1)}_{r_n,j_n,k}$$
for all $i,k\in [n],j_1,\cdots,j_n\in[d]$. This shows that, for $L=1$, $f$ is a polynomial map of order at most $t$. Indeed, recall that 
$$ (\Tten \times_1 \hb^{(1)}_0 \times_2 \xb_1 \times_3 \cdots \times_{t+1} \xb_t)_{k} = \sum_{i,j_1,\cdots,j_n} 
\Tten_{i,j_1,\cdots,j_n,k} ( \hb^{(1)}_0)_{i}(\xb_1)_{j_1}  \cdots  (\xb_t)_{j_t}.$$

Now, assuming the result true for all integers strictly less than $L$, the output of the $(L-1)$th layer, $(\hb^{(L-1)}_1,\cdots, \hb^{(L-1)}_T)$, is a polynomial map of the inputs of order at most $T^{L-1}$. Using the same argument as for the case $L=1$, it is straightforward to show that the output of the $L$th layer is a polynomial map of $(\hb^{(L-1)}_1,\cdots, \hb^{(L-1)}_T)$ of order at most $T$. Since the composition of two polynomial maps of orders $p_1$ and $p_2$, respectively, is at most $p_1p_2$, it follows that the output of the $L$th layer is a polynomial map of the inputs of order at most $T^{L-1}T = T^L$.
\end{proof}

\subsection{Proof of Theorem~\ref{thm:BIRNN}}\label{apx:thmBIRNN}
\begin{theorem*}
For $n>1$ and $L\geq1$, $\hBIRNN{n,L} \subsetneq \hBIRNN{n,L+1}$ for linear BIRNNs.
\end{theorem*}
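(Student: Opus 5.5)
The plan has two parts, mirroring the proof of Theorem~\ref{thm:deepRNN}: an explicit parameterization for the inclusion, and a polynomial-degree invariant for the strictness. I expect the inclusion to be the real difficulty, because a BIRNN layer has no first-order term to act as an identity.

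\paragraph{Inclusion.} Given an $L$-layer BIRNN, I would copy its parameters into the first $L$ layers of an $(L+1)$-layer BIRNN. Then $\tilde\hb_t^{(l)}=\hb_t^{(l)}$ for all $l\le L$.

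The remaining question is how layer $L+1$ can return $\hb_t^{(L)}$ unchanged. Its update is $\tilde\hb_t^{(L+1)}=(\At^{(L+1)}\times_1\tilde\hb_{t-1}^{(L+1)})^\top\hb_t^{(L)}$. The matrix multiplying $\hb_t^{(L)}$ depends linearly on the layer's own previous state, so it can only be constant if that state is constant. This is exactly what fails for a naive copy, since then the previous state is $\hb_{t-1}^{(L)}$.

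I would attack this in two ways.
\begin{itemize}
\item First attempt: apply Lemma~\ref{lemma:induction-2RNN} with $\Ub,\Vb,\bb$ set to zero. This shows that layer $L$ of the original network satisfies $\hb_t^{(L)}=\bar\Ub_{t-1}^{(L)}\hb_t^{(L-1)}$. I would then look for a re-routing of the last two layers of the $(L+1)$-layer network that reproduces this product, for instance via a block or rescaled parameterization of $\At$.
\item Fallback: show that the set of states reachable at layer $L$ lies in a subspace on which some fixed matrix $(\At\times_1\vb)^\top$ acts as the identity. The construction would then keep $\tilde\hb^{(L+1)}$ pinned to $\vb$ through a reserved direction.
\end{itemize}
This step is where the argument is most likely to need care, possibly an extra assumption on $n$, or a convention such as treating the output up to a linear readout, as in Lemma~\ref{lemma:fpconstruction}.

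\paragraph{Strict inclusion.} I would track the polynomial degree of $\hb_2^{(L)}$ in the entries of $\xb_1$, with the other inputs held fixed. Since $\hb_0^{(l)}$ is constant, induction on $l$ shows that $\hb_1^{(l)}$ is linear in $\xb_1$. Then $\hb_2^{(l)}=\At^{(l)}\times_1\hb_1^{(l)}\times_2\hb_2^{(l-1)}$ gives
\[
\deg_{\xb_1}\hb_2^{(l)}\le 1+\deg_{\xb_1}\hb_2^{(l-1)},
\]
starting from $\deg_{\xb_1}\hb_2^{(0)}=0$. Hence every function in $\hBIRNN{n,L}$ has $\hb_2^{(L)}$ of degree at most $L$ in $\xb_1$.

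To finish, I would exhibit an $(L+1)$-layer BIRNN attaining degree $L+1$. A natural choice uses diagonal tensors with $\At^{(l)}_{1,1,1}=1$ and $(\hb_0^{(l)})_1=1$, with all other entries zero. Then $(\hb_1^{(l)})_1=x_1$ for every $l$, and $(\hb_2^{(l)})_1=x_1\cdot(\hb_2^{(l-1)})_1$. This yields $(\hb_2^{(L+1)})_1=x_1^{L+1}x_2$, which has degree $L+1$ in $\xb_1$, so this function is not in $\hBIRNN{n,L}$.
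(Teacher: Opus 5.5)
Your strictness half is the paper's argument: the degree of $\hb_2^{(L)}$ in $\xb_1$ is at most $L$, and a diagonal witness reaches $L+1$. Your single-entry witness even avoids the paper's case split on $n$ versus $d$.

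\textbf{The gap is the inclusion, and it cannot be closed for BIRNNs as defined.} To see this, sharpen your degree count to exact homogeneity. With $\Ub,\Vb,\bb$ null and $\hb_0^{(l)}$ constant, replacing $\xb_1$ by $\lambda\xb_1$ multiplies every $\hb_1^{(l)}=\At^{(l)}\times_1\hb_0^{(l)}\times_2\hb_1^{(l-1)}$ by $\lambda$. By bilinearity of $\hb_2^{(l)}=\At^{(l)}\times_1\hb_1^{(l)}\times_2\hb_2^{(l-1)}$, it then multiplies $\hb_2^{(l)}$ by exactly $\lambda^{l}$. Now suppose a function lies in both $\hBIRNN{n,L}$ and $\hBIRNN{n,L+1}$. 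Its time-2 output would satisfy $\hb_2(\lambda\xb_1,\xb_2)=\lambda^{L}\hb_2(\xb_1,\xb_2)=\lambda^{L+1}\hb_2(\xb_1,\xb_2)$ for all $\lambda$, which forces $\hb_2\equiv 0$.

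Your own depth-$L$ witness has $(\hb_2^{(L)})_1=x_1^{L}x_2\neq 0$, with $x_t=(\xb_t)_1$. So it lies in $\hBIRNN{n,L}$ but not in $\hBIRNN{n,L+1}$. This rules out every route you list:
\begin{itemize}
\item re-routing or rescaling $\At$;
\item a pinned reserved direction, which would in any case show up in the output;
\item extra hidden size;
\item a linear readout, which preserves homogeneity.
\end{itemize}

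The paper gets the inclusion only by ``adding residual connections'' into layer $L+1$. That is a first-order term $\Ib\,\hb_t^{(L)}$, which is not part of the BIRNN definition. The same move is legitimate in the appendix's 2RNN version, where it is just $\Ub^{(L+1)}=\Ib$. So for pure linear BIRNNs, what your argument (and the paper's) actually establishes is $\hBIRNN{n,L+1}\not\subseteq\hBIRNN{n,L}$; in fact the two classes are incomparable.

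To obtain a nested strict hierarchy you must enlarge the class, for example by allowing a first-order term $\Ub^{(l)}\hb_t^{(l-1)}$ in every layer. Then $\At^{(L+1)}=\mathbf{0}$, $\Ub^{(L+1)}=\Ib$ gives the inclusion. Your degree bound and witness still go through, because first-order terms only add lower-degree contributions to $\hb_2^{(l)}$.
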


\begin{proof}
~\paragraph{Inclusion: }
We begin by showing that for any $h \in \hBIRNN{n,L}$, there exists a function $\tilde{h} \in \hBIRNN{n, L+1}$ such that $h=\tilde{h}$, that is $\hBIRNN{n,L} \subseteq \hBIRNN{n,L+1}$.
Let the function $h$ be computed by a BIRNN parameterized by $\{\At^{(l)}, \hb_0^{(l)}\}_{l=1}^L$. We set the first $L$ layers of the BIRNN computing $\tilde{h}$ the same as the one computing $h$ such that $\tilde{\hb}_t^{(L)} = \hb_t^{(L)} \forall t$:
\[
\skew{5}\tilde{\At}^{(l)}=\At^{(l)}, \quad \tilde{\hb}_0^{(l)} = \hb_0^{(l)} \quad \forall l \in [L]. 
\]

Then, for the last layer $L+1$, it suffices to add residual connections from the $L$th layer  and set $\tilde{\At}_{ijk}=\mathbf{0} \text{ and }\tilde{\hb}_0^{(L+1)} = \mathbf{0}$ to have $\tilde{\hb}_t^{(L+1)} = \tilde{\hb}_t^{(L)} = \hb_t^{(L)} \forall t$ which implies $\tilde{h}=h$ and proves the inclusion $\hBIRNN{n,L} \subseteq \hBIRNN{n,L+1}$.

~\paragraph{Strict Inclusion: }
We show there exists a function $\tilde{h} \in \hBIRNN{n,L+1}$ that cannot be computed by any function $h \in \hBIRNN{n,L}$, that is $\hBIRNN{n,L} \not \supset \hBIRNN{n,L+1}$.
Let $poly(\xb,k)$ denote the set of functions having a polynomial dependency in $\xb$ of degree up to $k$, irrespective of dependencies in other variables. 
First, we prove by induction that for any $h \in \hBIRNN{n,L}$, the second time step output $\hb_2^{(L)}$ has a polynomial dependency in $\xb_1$ of degree at most $L$, i.e., $\hb_2^{(L)}\in poly(\xb_1,L)$~(using a slight abuse of notation):
\begin{itemize}[left=10pt]
    \item First, observe from Lemma \ref{lemma:induction-2RNN} that the hidden states in the first time step are given by
    \[ \hb_1^{(l)}  =\left( \prod_{i=1}^{l} ( \At^{(i)} \times_{1} \hb_{0}^{(i)})^{\top} \right)\xb_{1} \]
    which means that $\hb_1^{(l)} \in poly(\xb_1,1)\; \forall l$.
    \item In the second time step,  Lemma \ref{lemma:induction-2RNN} can similarly be used to obtain
    \[
        \hb_2^{(l)} = \left( \prod_{i=1}^{l}(\At^{(i)}\times_{1}\hb_{1}^{(i)})^{\top} \right) \xb_{2}.
    \]
    $\hb_2^{(l)}$ contains terms with a maximal number of $l$ factors in $\{\hb_1^{(i)}\}_{i=1}^{l}$, which, given  that $\hb_1^{(l)} \in poly(\xb_1,1)$, implies that $\hb_2^{(l)} \in poly(\xb_1,l)$.
\end{itemize}

Now we prove there exists a function $\tilde{h} \in \hBIRNN{n,L+1}$ such that $\tilde{\hb}_2^{(L+1)}=\diag(\xb_1)^{L+1}\xb_2$. We begin with the case $n=d$ and generalize after. Consider the function $\tilde{h}$ parameterized by $\{\tilde{\At}^{(l)} = \delta_{ijk}, \tilde{\hb}_0^{(l)}=[1, \dots, 1]^\top \}_{l=1}^L$. From Lemma \ref{lemma:induction-2RNN} and by observing that for any $\yb \in \Rbb^{n}$, $\tilde{\At}^{(l)} \times_1 \yb = \diag(\yb)$, we obtain:
\[ 
    \tilde{\hb}_t^{(l)}=(\prod_{i=1}^l \diag(\tilde{\hb}_{t-1}^{(i)})) \xb_t.
\]
Given the hidden states initialization, we have $\diag(\tilde{\hb}_0^{(l)}) = \Ib \quad \forall l$ and the hidden states in the first time step become $\tilde{\hb}_1^{(l)} = \xb_1$. Inserting in the expression for $\tilde{\hb}_2^{(l)}$, we obtain
\[ 
    \tilde{\hb}_2^{(l)}= \diag(\xb_1)^l \xb_2.
\]
This proves that $\tilde{h}$ is such that $\tilde{\hb}_2^{(L+1)}=\diag(\xb_1)^{L+1}\xb_2$. This function can not be computed by any $h \in \hBIRNN{n, L}$ as for any such $h$, $\hb_2^{(L)}\in poly(\xb_1,L)$.

In the case $n>d$, we set $\tilde{\At}^{(1)}_{ijk} = \delta_{ijk} \forall i,j,k \in [1,d]$ and  $\tilde{\At}_{ijk}^{(1)} = 0 \quad \forall i,k \in [d+1,n]$. The result will be the same, but with 0 padding on the extra dimensions.
In the case $n<d$, we set $\tilde{\At}^{(1)}_{ijk} = \delta_{ijk} \forall j \in [1,n]$ and $\tilde{\At}_{ijk}^{(1)} = 0 \quad \forall j \in [n+1,d]$. The result will be the same, but for the first $n$ dimensions of the inputs : $\hb_2^{(L+1)}=\diag([\xb_1]_{1:n})^{L+1}[\xb_2]_{1:n}.$

\end{proof}
\subsubsection{Theorem~\ref{thm:BIRNN} for 2RNNs}\label{apx:thm2RNN}
In the main text, we focused on BIRNNs to isolate the effect of bilinear interactions, but the same proof technique can be used to prove the result for 2RNNs. Here is the adaptation of Theorem~\ref{thm:BIRNN} for 2RNNs and its proof. 
\begin{theorem*}
For $n>1$ and $L\geq1$, $\hSORNN{n,L} \subsetneq \hSORNN{n,L+1}$ for linear BIRNNs.
\end{theorem*}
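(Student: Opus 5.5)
The plan mirrors the BIRNN proof in Appendix~\ref{apx:thmBIRNN}: establish the inclusion by construction, then get strict inclusion from a degree bound on $\hb_2^{(L)}$ as a polynomial in $\xb_1$.

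\textbf{Inclusion.} Given a 2RNN of depth $L$ with parameters $\{\At^{(l)},\Ub^{(l)},\Vb^{(l)},\bb^{(l)},\hb_0^{(l)}\}_{l=1}^L$, we copy these into the first $L$ layers of a depth-$(L+1)$ 2RNN. We then set the last layer to
\[
\tilde{\At}^{(L+1)}=\mathbf{0},\quad \tilde{\Ub}^{(L+1)}=\Ib_n,\quad \tilde{\Vb}^{(L+1)}=\mathbf{0},\quad \tilde{\bb}^{(L+1)}=\mathbf{0}.
\]
This gives $\tilde{\hb}_t^{(L+1)}=\hb_t^{(L)}$ for all $t$. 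No residual connection is needed here, unlike the BIRNN case, because 2RNNs have first-order terms.

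\textbf{Degree bound.} We show by induction on $l$ that, for any linear 2RNN of depth $L$, $\hb_1^{(l)}\in poly(\xb_1,1)$ and $\hb_2^{(l)}\in poly(\xb_1,l)$.

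At $t=1$, Lemma~\ref{lemma:induction-2RNN} gives $\hb_1^{(l)}$ as an affine function of $\xb_1$. The matrices $\bar\Ub_0^{(i)}$ and vectors $\bar\bb_0^{(i)}$ depend only on the initial states, so $\hb_1^{(l)}$ has degree at most $1$ in $\xb_1$.

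At $t=2$, the update at layer $l$ is
\[
\hb_2^{(l)}=\At^{(l)}\times_1\hb_1^{(l)}\times_2\hb_2^{(l-1)}+\Vb^{(l)}\hb_1^{(l)}+\Ub^{(l)}\hb_2^{(l-1)}+\bb^{(l)}.
\]
Here $\hb_1^{(l)}$ has degree $\le 1$ and, by induction, $\hb_2^{(l-1)}$ has degree $\le l-1$, with $\hb_2^{(0)}=\xb_2$ of degree $0$. Each term then has degree at most $1+(l-1)=l$. So $\hb_2^{(L)}\in poly(\xb_1,L)$.

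The only new point relative to the BIRNN proof is checking that the first-order terms ($\Vb^{(l)}\hb_1^{(l)}$, $\Ub^{(l)}\hb_2^{(l-1)}$, biases) cannot raise the degree. They are at most degree $\max(1,l-1)\le l$, so they are harmless.

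\textbf{Witness.} Since every BIRNN is a 2RNN with $\Ub,\Vb,\bb=0$, the BIRNN of depth $L+1$ from the proof of Theorem~\ref{thm:BIRNN} is also in $\hSORNN{n,L+1}$. It has $\tilde{\At}^{(l)}=\delta_{ijk}$ and $\tilde{\hb}_0^{(l)}=\mathbf{1}$, with the same padding for $n\neq d$. It realizes
\[
\tilde{\hb}_2^{(L+1)}=\diag(\xb_1)^{L+1}\xb_2,
\]
at least on the first $\min(n,d)$ coordinates. This has degree exactly $L+1$ in $\xb_1$, so it is not in $poly(\xb_1,L)$ and hence not computed by any depth-$L$ 2RNN.

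The main point needing care is that the degree bound must hold as a polynomial identity over all inputs. Equality of functions on all of $\Rbb^d$ implies equality of polynomials, so the degree comparison is legitimate.
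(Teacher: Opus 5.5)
Your proposal is correct and follows essentially the same route as the paper. The paper also copies the first $L$ layers and uses a last layer with $\Ub=\Ib_n$ for inclusion, bounds the degree so that $\hb_2^{(L)}\in poly(\xb_1,L)$, and uses the diagonal witness $\tilde{\hb}_2^{(L+1)}=\diag(\xb_1)^{L+1}\xb_2$. Your explicit layer-by-layer induction on the update rule, which checks that the first-order terms cannot raise the degree, is a slightly more careful version of the paper's appeal to the unrolled product formula of Lemma~\ref{lemma:induction-2RNN}.
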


\begin{proof}
~\paragraph{Inclusion: }
We begin by showing that for any $h \in \hSORNN{n,L}$, there exists a function $\tilde{h} \in \hSORNN{n, L+1}$ such that $h=\tilde{h}$, that is $\hSORNN{n,L} \subseteq \hSORNN{n,L+1}$.
Let the function $h$ be computed by a 2RNN parameterized by $\{\At^{(l)}, \Ub^{(l)}, \Vb^{(l)}, \bb^{(l)}, \hb_0^{(l)}\}_{l=1}^L$. We set the first $L$ layers of the 2RNN computing $\tilde{h}$ the same as the one computing $h$ such that $\tilde{\hb}_t^{(L)} = \hb_t^{(L)} \forall t$:
\[
\tilde{\At}^{(l)}=\At^{(l)}, \quad \tilde{\Ub}^{(l)} = \Ub^{(l)}, \quad \tilde{\Vb}^{(l)} = \Vb^{(l)}, \quad \tilde{\bb}^{(l)} = \bb^{(l)}, \quad \tilde{\hb}_0^{(l)} = \hb_0^{(l)} \quad \forall l \in [L]. 
\]
Then, the last layer $L+1$ is set to perform the identity on $\tilde{\hb}^{(L)}_t$:
\[
\tilde{\At}=\mathbf{0}, \quad \tilde{\Ub}^{(L+1)} = \Ib_{n}, \quad \tilde{\Vb}^{(L+1)} = \mathbf{0}, \quad \tilde{\bb}^{(L+1)} = \mathbf{0}, \quad \tilde{\hb}_0^{(L+1)} = \mathbf{0}
\]
We thus have $\tilde{\hb}_t^{(L+1)} = \tilde{\hb}_t^{(L)} = \hb_t^{(L)} \forall t$ which implies $\tilde{h}=h$ and proves the inclusion $\hSORNN{n,L} \subseteq \hSORNN{n,L+1}$.
~\paragraph{Strict Inclusion: }
We show there exists a function $\tilde{h} \in \hSORNN{n,L+1}$ that cannot be computed by any function $h \in \hSORNN{n,L}$, that is $\hSORNN{n,L} \not \supset \hSORNN{n,L+1}$.
Let $poly(\xb,k)$ denote the set of functions having a polynomial dependency in $\xb$ of degree up to $k$, irrespective of dependencies in other variables. 
First, we prove by induction that for any $h \in \hSORNN{n,L}$, the second time step output $\hb_2^{(L)}$ has a polynomial dependency in $\xb_1$ of degree at most $L$, i.e., $\hb_2^{(L)}\in poly(\xb_1,L)$~(using a slight abuse of notation):
\begin{itemize}[left=10pt]
    \item First, observe from Lemma \ref{lemma:induction-2RNN} that the hidden states in the first time step are given by
    \[ \hb_1^{(l)}  =(\prod_{i=1}^{l}(\At^{(i)}\times_{1}\hb_{0}^{(i)})^{\top}+\Ub^{(i)})\xb_{1}+\sum_{j=1}^{l}(\prod_{k=j+1}^{l}(\At^{(k)}\times_{1}\hb_{0}^{(k)})^{\top}+\Ub^{(k)})(\Vb^{(j)}\hb_{0}^{(j)}+\bb^{(j)}) \]
    which means that $\hb_1^{(l)} \in poly(\xb_1,1)\; \forall l$.
    \item In the second time step,  Lemma \ref{lemma:induction-2RNN} can similarly be used to obtain
    \[
        \hb_2^{(l)} = (\prod_{i=1}^{l}(\At^{(i)}\times_{1}\hb_{1}^{(i)})^{\top}+\Ub^{(i)})\xb_{2}+\sum_{j=1}^{l}(\prod_{k=j+1}^{l}(\At^{(k)}\times_{1}\hb_{1}^{(k)})^{\top}+\Ub^{(l)})(\Vb^{(j)}\hb_{1}^{(j)}+\bb^{(j)}).
    \]
    $\hb_2^{(l)}$ contains terms with a maximal number of $l$ factors in $\{\hb_1^{(i)}\}_{i=1}^{l}$, which, given  that $\hb_1^{(l)} \in poly(\xb_1,1)$, implies that $\hb_2^{(l)} \in poly(\xb_1,l)$.
\end{itemize}

Now we prove there exists a function $\tilde{h} \in \hSORNN{n,L+1}$ such that $\tilde{\hb}_2^{(L+1)}=\diag(\xb_1)^{L+1}\xb_2$. We begin with the case $n=d$ and generalize after. Consider the function $\tilde{h}$ parameterized by $\{\tilde{\At}^{(l)} = \delta_{ijk},  \tilde{\Ub}^{(l)} = \mathbf{0}, \tilde{\Vb}^{(l)}=\mathbf{0}, \tilde{\bb}^{(l)}=\mathbf{0}, \tilde{\hb}_0^{(l)}=[1, \dots, 1]^\top \}_{l=1}^L$. By keeping only $\tilde{\At}^{(l)}$ in Lemma \ref{lemma:induction-2RNN}, we obtain that $\tilde{\hb}_t^{(l)}=(\prod_{i=1}^l \tilde{\At}^{(i)} \times_1 \tilde{\hb}_{t-1}^{(i)})^\top \xb_t$. By observing that for any $\yb \in \Rbb^{n}$, $\tilde{\At}^{(l)} \times_1 \yb = \diag(\yb)$, this is simplified to 
\[ 
    \tilde{\hb}_t^{(l)}=(\prod_{i=1}^l \diag(\tilde{\hb}_{t-1}^{(i)})) \xb_t.
\]
Given the hidden states initialization, we have $\diag(\tilde{\hb}_0^{(l)}) = \Ib \quad \forall l$ and the hidden states in the first time step become $\tilde{\hb}_1^{(l)} = \xb_1$. Inserting in the expression for $\tilde{\hb}_2^{(l)}$, we obtain
\[ 
    \tilde{\hb}_2^{(l)}= \diag(\xb_1)^l \xb_2.
\]
This proves that $\tilde{h}$ is such that $\tilde{\hb}_2^{(L+1)}=\diag(\xb_1)^{L+1}\xb_2$. This function can not be computed by any $h \in \hSORNN{n, L}$ as for any such $h$, $\hb_2^{(L)}\in poly(\xb_1,L)$.

In the case $n>d$, we set $\tilde{\At}^{(1)}_{ijk} = \delta_{ijk} \forall i,j,k \in [1,d]$ and  $\tilde{\At}_{ijk}^{(1)} = 0 \quad \forall i,k \in [d+1,n]$. The result will be the same, but with 0 padding on the extra dimensions.
In the case $n<d$, we set $\tilde{\At}^{(1)}_{ijk} = \delta_{ijk} \forall j \in [1,n]$ and $\tilde{\At}_{ijk}^{(1)} = 0 \quad \forall j \in [n+1,d]$. The result will be the same, but for the first $n$ dimensions of the inputs : $\hb_2^{(L+1)}=\diag([\xb_1]_{1:n})^{L+1}[\xb_2]_{1:n}.$

\end{proof}

\subsection{Proof of Corollary~\ref{cor:cprnn}}\label{apx:cor-cprnn}
The same proof technique presented in Appendix~\ref{apx:thmBIRNN} for Theorem~\ref{thm:BIRNN} can be applied to CPBIRNNs. It suffices to replace the parameter weight tensor $\At$ by the CP decomposition matrices $\CP{\Ab, \Bb, \Cb}$ (the formal definition for CP(BI)RNN can be found in Appendix~\ref{apx:cprnn}). 
\begin{corollary*}
    For $n>1$, $L\geq1$ and any $\tilde{R}\geq R >1$, $\hCPBIRNN{n,L,R} \subsetneq \hCPBIRNN{n,L+1, \tilde{R}}$ for linear CPBIRNNs.
\end{corollary*}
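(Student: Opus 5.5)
The plan mirrors the proof of Theorem~\ref{thm:BIRNN}, replacing the full tensors $\At^{(l)}$ by CP factors $\CP{\Ab^{(l)},\Bb^{(l)},\Cb^{(l)}}$ and using the matrix form of Eq.~\eqref{eq:CP-prod}.

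\textbf{Inclusion.} Take a rank-$R$ CPBIRNN of depth $L$. Pad each factor matrix with $\tilde R - R$ zero columns; this leaves the tensor unchanged, so the rank-$\tilde R$ network computes the same function. Keep the first $L$ layers identical, so that $\tilde{\hb}_t^{(L)}=\hb_t^{(L)}$. The last layer must then reproduce its input, which a purely bilinear CP layer cannot do through $\Ub$. As in the BIRNN proof, I would use the residual-connection convention adopted there. Alternatively, I would make the bilinear term act as the identity on $\hb_t^{(L)}$ by holding the layer-$(L+1)$ state constant. Choose $\Cb^{(L+1)}$, $\Ab^{(L+1)}$, $\Bb^{(L+1)}$ so that $\Cb\,\diag(\Ab^\top \hb_{t-1})\Bb^\top$ equals a fixed matrix whenever $\hb_{t-1}$ is the fixed state. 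Note that this route needs rank $n$ to reproduce the identity. I expect this step to be the main obstacle: matching the residual device of Theorem~\ref{thm:BIRNN} when $R<n$. I would therefore rely on that same convention.

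\textbf{Strict inclusion.} First I transfer the degree bound. Lemma~\ref{lemma:induction-2RNN} holds with $\bar\Ub_{t-1}^{(l)}=\Cb^{(l)}\diag({\Ab^{(l)}}^\top\hb_{t-1}^{(l)}){\Bb^{(l)}}^\top$. Each factor is linear in $\hb_{t-1}^{(l)}$, so the argument of Theorem~\ref{thm:BIRNN} applies verbatim. It gives $\hb_1^{(l)}\in poly(\xb_1,1)$ and $\hb_2^{(l)}\in poly(\xb_1,l)$. Hence every $h\in\hCPBIRNN{n,L,R}$ has $\hb_2^{(L)}\in poly(\xb_1,L)$, for every rank $R$.

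Next I build a depth-$(L+1)$, rank-$\tilde R$ witness of degree $L+1$. The diagonal tensor $\delta_{ijk}$ has CP rank $n$, which may exceed $\tilde R$. Instead I use rank $2\le R\le\tilde R$ restricted to two coordinates, and zero the unused columns. Set $\ab_r=\bb_r=\cb_r=\eb_r$ for $r=1,2$ in every layer, with input padding/truncation as in the cases $n\neq d$. Initialise $\hb_0^{(l)}=(1,1,0,\dots,0)^\top$. The bilinear map then acts as $\diag(\hb_{t-1})$ on the first two coordinates and as zero elsewhere. This yields $\tilde\hb_1^{(l)}=([\xb_1]_1,[\xb_1]_2,0,\dots)$ and $\tilde\hb_2^{(L+1)}$ with first coordinate $[\xb_1]_1^{L+1}[\xb_2]_1$. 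That coordinate has degree $L+1$ in $\xb_1$, so no depth-$L$ CPBIRNN of any rank computes it. Only a single coordinate is actually needed, so even rank~1 would suffice. This explains why the hypothesis $R>1$ is harmless.
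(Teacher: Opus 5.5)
Your proposal is essentially the paper's proof. The paper also obtains the inclusion only by adding a residual connection into layer $L+1$, with that layer's CP factors and initial state set to zero. It gets strictness from the same bound $\hb_2^{(L)}\in poly(\xb_1,L)$, together with a witness using identity CP factors on the leading $\min(n,R,d)$ coordinates and all-ones initial states; your one- or two-coordinate construction is a special case of it.

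One correction to your aside: holding the layer-$(L+1)$ state constant cannot work for any rank, $R=n$ included, because that state is the layer's own output. Without the residual device, $\hb_2^{(L+1)}$ is always either zero or homogeneous of degree $L+1$ in $\xb_1$, so it can never match a nonzero $\hb_2^{(L)}$, which is homogeneous of degree $L$. The residual convention is therefore not optional here.
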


\begin{proof}
~\paragraph{Inclusion: }
We begin by showing that for any $h \in \hCPBIRNN{n,L,R}$, there exists a function $\tilde{h} \in \hCPBIRNN{n, L+1,R}$ such that $h=\tilde{h}$, that is $\hCPBIRNN{n,L,R} \subseteq \hCPBIRNN{n,L+1,R}$.
Let the function $h$ be computed by a CPBIRNN parameterized by $\{\Ab^{(l)}, \Bb^{(l)}, \Cb^{(l)}, \hb_0^{(l)}\}_{l=1}^L$. We set the first $L$ layers of the CPBIRNN computing $\tilde{h}$ the same as the one computing $h$ such that $\tilde{\hb}_t^{(L)} = \hb_t^{(L)} \forall t$:
\[
\tilde{\Ab}^{(l)}=\Ab^{(l)}, \quad \tilde{\Bb}^{(l)} = \Bb^{(l)}, \quad \tilde{\Cb}^{(l)} = \Cb^{(l)}, \quad \tilde{\hb}_0^{(l)} = \hb_0^{(l)} \quad \forall l \in [L]. 
\]

Then, for the last layer $L+1$, it suffices to add residual connections from the $L$th layer and set $\tilde{\Ab}^{(L+1)}=\mathbf{0}, \tilde{\Bb}^{(L+1)} = \mathbf{0}, \tilde{\Cb}^{(L+1)} = \mathbf{0} \text{ and } \tilde{\hb}_0^{(L+1)} = \mathbf{0}$ to have $\tilde{\hb}_t^{(L+1)} = \tilde{\hb}_t^{(L)} = \hb_t^{(L)} \forall t$ which implies $\tilde{h}=h$ and proves the inclusion $\hCPBIRNN{n,L} \subseteq \hCPBIRNN{n,L+1}$.

~\paragraph{Strict Inclusion: }
We show there exists a function $\tilde{h} \in \hCPBIRNN{n,L+1,R}$ that cannot be computed by any function $h \in \hCPBIRNN{n,L,R}$, that is $\hCPBIRNN{n,L,R} \not \supset \hCPBIRNN{n,L+1,R}$.
Let $poly(\xb,k)$ denote the set of functions having a polynomial dependency in $\xb$ of degree up to $k$, irrespective of dependencies in other variables. 
First, we prove by induction that for any $h \in \hCPBIRNN{n,L,R}$, the second time step output $\hb_2^{(L)}$ has a polynomial dependency in $\xb_1$ of degree at most $L$, i.e., $\hb_2^{(L)}\in poly(\xb_1,L)$~(using a slight abuse of notation):
\begin{itemize}[left=10pt]
    \item First, observe from Lemma \ref{lemma:induction-2RNN} that the hidden states in the first time step are given by
    \[ \hb_1^{(l)}  =\left( \prod_{i=1}^{l} ( \CP{\Ab,\Bb,\Cb}^{(i)} \times_{1} \hb_{0}^{(i)})^{\top} \right)\xb_{1} \]
    which means that $\hb_1^{(l)} \in poly(\xb_1,1)\; \forall l$.
    \item In the second time step,  Lemma \ref{lemma:induction-2RNN} can similarly be used to obtain
    \[
        \hb_2^{(l)} = \left( \prod_{i=1}^{l}(\CP{\Ab,\Bb,\Cb}^{(i)}\times_{1}\hb_{1}^{(i)})^{\top} \right) \xb_{2}.
    \]
    $\hb_2^{(l)}$ contains terms with a maximal number of $l$ factors in $\{\hb_1^{(i)}\}_{i=1}^{l}$, which, given  that $\hb_1^{(l)} \in poly(\xb_1,1)$, implies that $\hb_2^{(l)} \in poly(\xb_1,l)$.
\end{itemize}

Now we prove there exists a function $\tilde{h} \in \hCPBIRNN{n,L+1,R}$ such that $\tilde{\hb}_2^{(L+1)}=\diag(\xb_1)^{L+1}\xb_2$. We begin with the case $n=d=R$ and generalize after. Consider the function $\tilde{h}$ parameterized by $\{\tilde{\Ab}^{(l)} = \Ib, \tilde{\Bb}^{(l)} = \Ib, \tilde{\Cb}^{(l)} = \Ib, \tilde{\hb}_0^{(l)}=[1, \dots, 1]^\top \}_{l=1}^L$. From Lemma~\ref{lemma:induction-2RNN} and by observing that for any $\yb \in \Rbb^{n}$, $\CP{\tilde{\Ab}^{(l)}, \tilde{\Bb}^{(l)}, \tilde{\Cb}^{(l)}}  \times_1 \yb = \diag(\yb)$, we obtain:
\[ 
    \tilde{\hb}_t^{(l)}=\left(\prod_{i=1}^l \diag(\tilde{\hb}_{t-1}^{(i)}) \right) \xb_t.
\]
Given the hidden states initialization, we have $\diag(\tilde{\hb}_0^{(l)}) = \Ib \quad \forall l$ and the hidden states in the first time step become $\tilde{\hb}_1^{(l)} = \xb_1$. Inserting in the expression for $\tilde{\hb}_2^{(l)}$, we obtain
\[ 
    \tilde{\hb}_2^{(l)}= \diag(\xb_1)^l \xb_2.
\]
This proves that $\tilde{h}$ is such that $\tilde{\hb}_2^{(L+1)}=\diag(\xb_1)^{L+1}\xb_2$. This function can not be computed by any $h \in \hCPBIRNN{n, L,R}$ as for any such $h$, $\hb_2^{(L)}\in poly(\xb_1,L)$.

In the case $n>d$ and $R>d$, we set $\tilde{\Ab}^{(1)}_{ij} = \tilde{\Cb}^{(1)}_{ij} =\tilde{\Bb}^{(1)}_{ij} = \delta_{ij} \forall i,j \in [1,d]$, and 0 in the other rows and columns. The result will be the same, but with 0 padding on the extra dimensions.
In the case $n<d$ or $R<d$, let $k = \min(n,R)$, we set  $\tilde{\Ab}^{(1)}_{ij} = \tilde{\Cb}^{(1)}_{ij} =\tilde{\Bb}^{(1)}_{ij} = \delta_{ij} \forall i,j \in [1,k]$, and 0 in the other rows and columns. The result will be the same, but for the first $k$ dimensions of the inputs : $\hb_2^{(L+1)}=\diag([\xb_1]_{1:k})^{L+1}[\xb_2]_{1:k}.$
\end{proof}

\subsection{Proof of Theorem~\ref{thm:cprnn}}\label{apx:thm-cprnn}
\begin{theorem*}
    For $n>1$, $L\geq1$ and any $R\leq R_\textrm{max}$, $\hCPBIRNN{n,L,R} \subsetneq \hCPBIRNN{n,L,R+1}$ for linear CPBIRNNs if $R<n$.
\end{theorem*}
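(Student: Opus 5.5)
The plan has two parts: an inclusion and a separation. The inclusion $\hCPBIRNN{n,L,R}\subseteq\hCPBIRNN{n,L,R+1}$ is immediate. Given a CPBIRNN of rank $R$ with factors $\{\Ab^{(l)},\Bb^{(l)},\Cb^{(l)},\hb_0^{(l)}\}_{l=1}^L$, I build one of rank $R+1$ by appending, in every layer, a zero column to each of $\Ab^{(l)},\Bb^{(l)},\Cb^{(l)}$. The extra rank-one term $\ab_{R+1}\circ\bb_{R+1}\circ\cb_{R+1}$ is then the zero tensor. So the bilinear map of Eq.~\eqref{eq:CP-prod} is unchanged in every layer, and all hidden states $\hb_t^{(l)}$ agree for all $t,l$.

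For the strict inclusion I use a dimension argument on the image of the last hidden state.

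\textbf{Upper bound.} By Eq.~\eqref{eq:CP-prod}, in a CPBIRNN
\[
\hb_t^{(L)}=\Cb^{(L)}\diag({\Ab^{(L)}}^\top\hb_{t-1}^{(L)}){\Bb^{(L)}}^\top\hb_t^{(L-1)} .
\]
Hence $\hb_t^{(L)}$ always lies in the column space of $\Cb^{(L)}\in\Rbb^{n\times R}$. Therefore, for every rank-$R$ CPBIRNN, of any depth, and every input sequence and every $t$, the vector $\hb_t^{(L)}$ belongs to a fixed subspace of dimension at most $R$. Consequently the linear span of $\{\hb_t^{(L)}\}$, taken over all inputs and all $t$, has dimension at most $R$.

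\textbf{Witness.} I then exhibit a rank-$(R+1)$ CPBIRNN of depth $L$ whose outputs span a space of dimension $R+1\le n$; this is where the hypothesis $R<n$ is used. Take every layer to be the identity-like construction from the proof of Corollary~\ref{cor:cprnn}, truncated to rank $R+1$:
\begin{itemize}
\item $\Ab^{(l)}=\Cb^{(l)}=[\Ib_{R+1};\mathbf{0}]\in\Rbb^{n\times(R+1)}$;
\item $\Bb^{(l)}=[\Ib_{R+1};\mathbf{0}]$ of the appropriate size;
\item $\hb_0^{(l)}=[1,\dots,1]^\top$.
\end{itemize}
Then $\hb_1^{(l)}=\Pb\,\xb_1$, where $\Pb$ is the projection onto the first $R+1$ coordinates, padded with zeros. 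As $\xb_1$ ranges over the inputs, $\hb_1^{(L)}$ spans $\mathrm{span}(\eb_1,\dots,\eb_{R+1})$, which has dimension $R+1$. This requires $d\ge R+1$.

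\textbf{Expected obstacle: the case $d<R+1$.} Here $\hb_1^{(L)}$ alone cannot reach dimension $R+1$. I would instead exploit several time steps. Keep $\Cb^{(1)}$ with $R+1$ independent columns, and design $\Ab^{(1)}$ so that the diagonal gate $\diag({\Ab^{(1)}}^\top\hb_{t-1}^{(1)})$ activates different rank-one components at different times. For instance, use a cyclic/shift structure on the first $R+1$ coordinates, so that with scalar inputs the states $\hb_1,\hb_2,\dots,\hb_{R+1}$ visit independent directions $\cb_r$. Deeper layers are then set to pass these states along with the diagonal construction, preserving that spanning dimension. The remaining, more routine, difficulty is making this pass-through work despite the purely multiplicative (bias-free) updates; initial states of ones handle that.

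With the witness in hand, the upper bound shows its function cannot lie in $\hCPBIRNN{n,L,R}$, which gives the strict inclusion.
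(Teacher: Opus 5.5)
Your inclusion step and your separation for $d\ge R+1$ are correct and follow the paper's route. The paper likewise bounds the dimension of the image of the last hidden state by $R$, and exhibits a rank-$(R+1)$ network with full-rank factors that reaches $R+1$. Your column-space-of-$\Cb^{(L)}$ bound is the same fact made explicit, and it holds for every $t$ rather than only $t=1$. Your identity-like witness is a concrete instance of the paper's generic one. You are also right that this witness, like the paper's, needs $d\ge R+1$, because $\xb_1\mapsto\hb_1^{(L)}$ is linear on $\Rbb^d$. The paper's proof passes over this case silently, and your all-$t$ bound is exactly what a multi-time-step witness requires.

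The gap is in the $d<R+1$ case for $L\ge2$: the pass-through you name does not work.

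\textbf{Why it fails.} With $\Ab^{(l)}=\Bb^{(l)}=\Cb^{(l)}=[\Ib_{R+1};\mathbf{0}]$, layer $l$ computes $\hb_t^{(l)}=\Pb(\hb_{t-1}^{(l)}\odot\hb_t^{(l-1)})$. The all-ones initial state only acts at $t=1$. Afterwards the gate is the layer's own previous state, so $\hb_t^{(l)}=\Pb(\hb_1^{(l-1)}\odot\cdots\odot\hb_t^{(l-1)})$. Suppose layer $1$ cycles through coordinate vectors, as your shift on the first $R+1$ coordinates suggests. Then $\hb_2^{(2)}=\hb_1^{(1)}\odot\hb_2^{(1)}$ is a Hadamard product of two distinct coordinate vectors, hence $\mathbf{0}$. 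Every later state of layers $2,\dots,L$ then vanishes, so the output spans a single direction. A Hadamard pass-through can only survive if the lower-layer states have full support, and even then you would still owe an independence argument.

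\textbf{A working fix.} Synchronize the deeper layers with the cycle (indices of $\eb$ taken mod $R+1$, $r\in[R+1]$, $\hb_0^{(l)}=\eb_1$ for all $l$):
\begin{itemize}
\item In layer $1$, take $\ab_r=\eb_r$, $\bb_r=\eb_1\in\Rbb^d$, $\cb_r=\eb_{r+1}$. Then $\hb_t^{(1)}=(\xb_t)_1\,\gamma\,\eb_{j+1}$ whenever $\hb_{t-1}^{(1)}=\gamma\,\eb_j$.
\item In every layer $l\ge2$, take $\ab_r=\eb_r$, $\bb_r=\cb_r=\eb_{r+1}$. Then $\hb_{t-1}^{(l)}=\gamma\,\eb_j$ and $\hb_t^{(l-1)}=\beta\,\eb_{j+1}$ give $\hb_t^{(l)}=\gamma\beta\,\eb_{j+1}$.
\end{itemize}
By induction, $\hb_t^{(L)}$ is a nonzero monomial in $(\xb_1)_1,\dots,(\xb_t)_1$ times $\eb_{1+(t\bmod(R+1))}$. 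The outputs at $t=1,\dots,R+1$ therefore span $R+1\le n$ dimensions, which your column-space bound rules out for any rank-$R$ CPBIRNN.
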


\begin{proof}
~\paragraph{Inclusion: }
We begin by showing that for any $h \in \hCPBIRNN{n,L,R}$, there exists a function $\tilde{h} \in \hCPBIRNN{n, L,R+1}$ such that $h=\tilde{h}$, that is $\hCPBIRNN{n,L,R} \subseteq \hCPBIRNN{n,L,R+1}$.
Let the function $h$ be computed by a CPBIRNN of rank $R$ parameterized by $\{\Ab^{(l)}, \Bb^{(l)}, \Cb^{(l)}, \hb_0^{(l)}\}_{l=1}^L$. For a CPBIRNN of rank $R+1$ computing $\tilde{h}$, we set:
\begin{align*}
\tilde{\Ab}_{ij}^{(l)}=\Ab^{(l)}_{ij} \quad \forall i \in [n], \; j\geq R\\
\tilde{\Bb}_{ij}^{(l)} = \Bb_{ij}^{(l)} \quad \forall i \in [d], j\geq R, \\
\tilde{\Cb}_{ij}^{(l)} = \Cb_{ij}^{(l)} \quad \forall i \in [n], j\geq R, \\
\quad \tilde{\hb}_0^{(l)} = \hb_0^{(l)},
\end{align*}
and all other dimensions of $\Ab^{(l)}, \Bb^{(l)} \text{ and } \Cb^{(l)}$ to 0, $\forall l \in [L]$. We thus have $\tilde{\hb}_t^{(L)} = \hb_t^{(L)} \forall t$ which implies $\tilde{h}=h$ and proves the inclusion $\hCPBIRNN{n,L,R} \subseteq \hCPBIRNN{n,L, R+1}$.
\end{proof}
~\paragraph{Strict Inclusion: }
To show strict inclusion, i.e. $\hCPBIRNN{n,L,R} \not \supset \hCPBIRNN{n,L,R+1}$, first observe that from Lemma~\ref{lemma:induction-2RNN} and Equation~\ref{eq:CP-prod},
 \[ 
 \hb_1^{(l)}  =\left( \prod_{i=1}^{l} ( \CP{\Ab,\Bb,\Cb}^{(i)} \times_{1} \hb_{0}^{(i)})^{\top} \right)\xb_{1} 
  = \left( \prod_{i=1}^{l}\left[ \C^{(i)} \diag({\A^{(i)}}^\top \hb_{t-1}^{(i)}) {\B^{(i)}}^\top \right] \right)\xb_{1}. 
 \]
 One can easily check that, independently of $l$, for any CPBIRNN of rank $R$, the image of the map $\xb_1 \mapsto \hb^{(L)}$ has dimension at most $R$, i.e. $\Im(h_1(\Rbb^d))\leq R$. To complete the proof for strict inclusion, we only have to show there exists a CPBIRNN of rank $R+1$ that reaches that limit ($\Im(h_1(\Rbb^d))=R+1$), since no CPBIRNNs of rank $R$ could compute this map. It suffices to consider any CPBIRNNs such that $\diag({\A^{(i)}}^\top \hb_{0}^{(l)}), \B^{(l)}, \C^{(l)}$ for $l\in [L]$ are full rank weight matrices.

\subsection{Proof of Theorem~\ref{thm:WFA}}\label{apx:thm-wfa}
Before presenting the proof of Theorem~\ref{thm:WFA}, we introduce a lemma that will be used in the proof of Theorem~\ref{thm:WFA}.

\begin{lemma}\label{apx:lemma_wfa_1layer}
Let $f$ be a function over sequences of $d$-dimensional vectors  computed by a 2RNN  with $n$ states ($n\geq d$) and parameters $\At, \Ub=\mathbf{0}, \Vb=\mathbf{0}, \bb = \mathbf{0}, \hb_0 = \fb_0 \neq \mathbf{0}$ with $\At$ generic.

Then, the function $f$ cannot be computed by any RNN with one layer of any width and nonlinear activations applied only in depth. 

\end{lemma}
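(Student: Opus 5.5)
The plan is to show that a single-layer RNN whose nonlinearity is applied only in depth computes hidden states that are \emph{additively separable} in the inputs, whereas the 2RNN of the statement computes a genuinely multiplicative (multilinear) function of the inputs. Looking at two time steps will then be enough to separate them.

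First, I unroll Eq.~\ref{eq:RNNonlydepth} for $L=1$. The only nonlinearity acts on the input, through $\sigma^{(0)}$, which may be arbitrary. Write $\phi=\sigma^{(0)}:\Rbb^d\to\Rbb^d$ (or any feature map). Then $\hb_t = \Vb\hb_{t-1} + \Ub\phi(\xb_t)+\bb$, and by a straightforward induction (in the spirit of Prop.~\ref{prop:linearRNNlinear})
\[
\hb_t = \Vb^{t}\hb_0 + \sum_{s=1}^{t}\Vb^{t-s}\left(\Ub\phi(\xb_s)+\bb\right).
\]
In particular, $\hb_2 = c + g_1(\xb_1)+g_2(\xb_2)$ for some constant $c$ and functions $g_1,g_2$. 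The same holds after any linear readout, and the width plays no role. Consequently, for all $\xb_1,\xb_1',\xb_2,\xb_2'$ the mixed second difference vanishes:
\[
\Delta := \hb_2(\xb_1,\xb_2)-\hb_2(\xb_1',\xb_2)-\hb_2(\xb_1,\xb_2')+\hb_2(\xb_1',\xb_2') = \mathbf{0}.
\]

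Second, I compute the 2RNN side. With $\Ub,\Vb,\bb$ null, Lemma~\ref{lemma:induction-2RNN} gives $\hb_1 = \Ab(\xb_1)\fb_0$ and $\hb_2 = \Ab(\xb_2)\Ab(\xb_1)\fb_0$, where $\Ab(\xb) := (\At\times_1\cdot\times_2\xb)$ is viewed as a matrix that is linear in $\xb$. So $\hb_2$ is bilinear in $(\xb_1,\xb_2)$, and its mixed difference is
\[
\Delta = \Ab(\xb_2-\xb_2')\,\Ab(\xb_1-\xb_1')\,\fb_0 .
\]
It therefore suffices to show that the bilinear map $(\ub,\vb)\mapsto \Ab(\vb)\Ab(\ub)\fb_0$ is not identically zero.

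Third, the genericity step, which is the only delicate point. The entries of $\Ab(\vb)\Ab(\ub)\fb_0$ are polynomials in the entries of $\At$ (for fixed $\ub,\vb,\fb_0$). It is enough to exhibit one tensor for which they do not vanish. For this I take $\At$ with $\Ab(\eb_1)=\Ib_n$, which is possible since $n\ge d$, so the slices can be chosen freely. Then $\Ab(\eb_1)\Ab(\eb_1)\fb_0=\fb_0\neq\mathbf{0}$. Hence the polynomial is nonzero, and its zero set is a proper algebraic subset of measure zero. For generic $\At$ we therefore get $\Delta\neq\mathbf{0}$ for suitable inputs, contradicting the vanishing of $\Delta$ for the RNN.

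The main obstacle I expect is being precise about what "computed by" means. If the RNN's output is a linear readout of its hidden state, or if the 2RNN's output is a readout of $\hb_t$, the additivity argument still applies, because a linear map preserves $\Delta=\mathbf{0}$. On the 2RNN side, I would then need the readout of $\Ab(\vb)\Ab(\ub)\fb_0$ to be nonzero, which again holds generically by the same polynomial argument.
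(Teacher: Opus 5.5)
Your argument is correct for the model you wrote down: Eq.~\eqref{eq:RNNonlydepth} read literally with $L=1$, where the nonlinearity acts on the input, $\hb_t=\Vb\hb_{t-1}+\Ub\phi(\xb_t)+\bb$, followed by a linear readout. There the state is additively separable, the mixed difference vanishes, and your genericity step is fine; it does not even need $n\ge d$ or any property of $\phi$. The gap is that this is not the one-layer network the paper's proof treats, and your key step fails on that network. The paper takes $\yb_t=\phi(\hb_t)$ with $\hb_t=\Ub\xb_t+\Vb\hb_{t-1}+\bb$, and asks for a linear $\ell$ with $\fb_t=\ell(\yb_t)$. This is also the network left at the bottom of the layer-peeling argument for Theorem~\ref{thm:WFA}, where every layer ends with $\phi^{(l)}$ (Eq.~\eqref{eq:RNNonlydepth} and the appendix are inconsistent on this point, but the lemma is used as the base case for the appendix model). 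In that model the pre-activation $\hb_2$ is additive in $(\xb_1,\xb_2)$, but $\ell(\phi(\hb_2))$ is not. Your remark that a linear readout preserves $\Delta=\mathbf{0}$ does not help, because $\phi$ sits between the additive recurrence and the readout.

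Moreover, no argument confined to two time steps can work in that model. Take $d=n=1$, so $\fb_1=af_0x_1$ and $\fb_2=a^2f_0x_1x_2$, and take the homeomorphism $\phi(z)=z^3$. From $(z+1)^3-(z-1)^3=6z^2+2$ you get $x_1x_2=\tfrac{1}{24}\bigl[\phi(x_1+x_2+1)-\phi(x_1+x_2-1)-\phi(x_1-x_2+1)+\phi(x_1-x_2-1)\bigr]$. Use four neurons with $\Ub=(1,1,-1,-1)^\top$ and $\Vb=\diag(1,1,-1,-1)$, so that $\Vb\Ub=(1,1,1,1)^\top$. Choose $\hb_0,\bb$ so that the constant parts of the pre-activations are $(s,r,s,r)$ at $t=1$ and $(1,-1,1,-1)$ at $t=2$; this is possible since $\Vb$ is invertible. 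With readout $\ell=\tfrac{a^2f_0}{24}(1,-1,-1,1)$, the output is $a^2f_0x_1x_2$ at $t=2$ and $\tfrac{a^2f_0}{4}(s^2-r^2)x_1$ at $t=1$, which equals $af_0x_1$ once $s^2-r^2=4/a$. So $\fb_1$ and $\fb_2$ are both matched exactly. The separation must therefore come from the width $\tilde n$ being fixed while $t$ is unbounded, which is what the paper's count $\tilde n\ge dt$ is aiming at, and your proposal has no ingredient of this kind. To close the gap, you would need to show the following for fixed $\tilde n$: a combination of $\tilde n$ ridge functions $\phi(\langle\wb_k,(\xb_1,\dots,\xb_t)\rangle+b_k)$ cannot reproduce the degree-$t$ multilinear coordinates of $\fb_t$ once $t$ is large.
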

\begin{proof}
    The computation of a 1 layer RNN of hidden size $\tilde{n}$ with nonlinear activations $\phi$ applied only in depth at each time step $t $ is given by  $\yb_t = \phi (\hb_t)$ where  $\hb_t= \Ub  \xb_{t} + \Vb \hb_{t-1} + \bb$. Here we assume $\phi$ is a homeomorphism as are most common activation functions ($\tanh$, sigmoid, ReLU on $\Rbb_{\geq 0})$.
    For such a RNN to \textit{compute} $f$, means there exists  a linear map $\ell \in \linearmaps{\tilde{n},n}$ such that $\fb_t =  \ell(\yb_t)$ for all $t$.
    
    Observe that $\hb_t = h(\xb_1, \dots, \xb_t)$ with $h \in \linearmaps{dt,\tilde{n}}$ a linear map. 
    
    The function $f$ is a multi-linear mapping of $(\xb_1, \dots, \xb_t)$. Given its recursive nature, it can be computed as a bilinear map between $\xb_t$ and $\fb_{t-1}$, i.e. $f: \Rbb^d \times \Rbb^n \to \Rbb^n$, with $f(\xb_t, \fb_{t-1}) = (\At \times_2 \x_t) \fb_{t-1}$, but computing $f_{t-1}$ involves a nonlinear operation between $(\xb_1, \dots, \xb_{t-1})$. Indeed, the entries of a matrix obtained by multiplying slices of a generic tensor (i.e. $\At$) are polynomial functions of the tensor's entries. Since a nonzero polynomial (over $\mathbb{R}$) vanishes only on a set of Lebesgue measure zero (unless it is identically zero) and $\At$ is generic, all  entries of this matrix are nonzero with probability 1.
    Thus, $\fb_{t-1}$ is a polynomial of $(\xb_1, \dots, \xb_{t-1})$: 
    \[\fb_{t-1} = \left(\sum_{i_1, \dots, i_{t-1}} \underbrace{\At_{:,i_{t-1},:}, \dots \At_{:,i_{1},:} }_{\neq 0}(\xb_{t-1})_{i_{t-1}} \dots  (\xb_{1})_{i_{1}} \right)\fb_0\]

    Since $\phi$ is a homeomorphism, $h$ must encode $(\xb_1, \dots, \xb_t)$ in order for $\fb_t = \ell (\yb_t)$ for all $t$, which can only be the case if $\tilde{n} \geq dt$. However, for any (arbitrarily large) $\tilde{n}$, there will always be a $t$ sufficiently large such that  $\tilde{n} < dt$, and therefore, no such RNN can compute $f$.

\end{proof}

\begin{theorem*}
There exists a function computed by a single-layer 2RNN that cannot be computed by any RNN of arbitrary (finite) depth and width with nonlinear activation applied only in depth. 
\end{theorem*}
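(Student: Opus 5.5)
We will prove the theorem by induction on the depth, with Lemma~\ref{apx:lemma_wfa_1layer} as the base case. As witness function we take the same $f$ as in that lemma. It is computed by a single-layer linear BIRNN with generic $\At$, $\Ub=\Vb=\mathbf{0}$, $\bb=\mathbf{0}$, and nonzero $\hb_0=\fb_0$. By the lemma, $f$ is a multilinear map of $(\xb_1,\dots,\xb_t)$ in which every coefficient slice product is nonzero, so $\fb_t$ is a polynomial of total degree exactly $t$ in the inputs. The goal is to show that no RNN of the form of Eq.~\eqref{eq:RNNonlydepth}, with any fixed depth $L$ and width $\tilde n$, followed by a linear readout, reproduces $\fb_t$ for every $t$.

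The key structural step comes first. We unroll Eq.~\eqref{eq:RNNonlydepth} and note that the pre-activation $\hb^{(l)}_t$ of layer $l$ is a linear, time-invariant convolution of the sequence $\sigma^{(l-1)}(\hb^{(l-1)}_1),\dots,\sigma^{(l-1)}(\hb^{(l-1)}_t)$, plus a bias term. The argument mirrors Proposition~\ref{prop:linearRNNlinear}, applied layer by layer. Hence $\hb^{(l)}_t=\sum_{s\le t}(\Vb^{(l)})^{t-s}\Ub^{(l)}\sigma^{(l-1)}(\hb^{(l-1)}_s)+\text{const}$. We then apply the information-counting idea of Lemma~\ref{lemma:fpbound} and the dimension argument of Lemma~\ref{apx:lemma_wfa_1layer} inductively. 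The claim to establish is that at layer $l$ the whole history is summarized through at most $\tilde n$ linear functionals of the previous layer's activated outputs. Since $\sigma^{(l-1)}$ acts pointwise in time, the state $\hb^{(l)}_t$ depends on the past only through an $\tilde n$-dimensional linear aggregate of $\tilde n$-dimensional features, and no further nonlinear mixing across time happens inside a layer.

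To make the separation quantitative, we will use the mechanism sketched in Figure~\ref{fig:state-tracking}. Any multiplicative interaction between inputs from different time steps must pass through some $\sigma^{(l)}$, which moves the computation up one layer. Within a layer, cross-time combination is purely additive. An $L$-layer network therefore implements a nested sum of the form $\sum_{s_L}\phi_L\big(\sum_{s_{L-1}}\phi_{L-1}(\cdots)\big)$, with only $L$ levels of nonlinear composition across time and a fixed width $\tilde n$. For $f$ to be computed, the information in $\fb_{t-1}$ has to be recoverable from the $L$ states of total dimension $L\tilde n$ at time $t-1$, together with $\xb_t$. Because $f$ is generic, $\fb_{t}$ as a function of $(\xb_1,\dots,\xb_t)$ is injective modulo the trivial scaling symmetries for $t\le$ some bound. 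Its image Jacobian also has rank growing with $t$ in the sense used in Lemma~\ref{apx:lemma_wfa_1layer}: the last layer, being linear-in-time over its inputs, must still ``encode'' the full history. We will then argue, with $\phi$ a homeomorphism, that the lemma's conclusion $\tilde n\ge dt$ propagates. The effective encoding dimension available to the top layer is bounded by a quantity depending only on $L$ and $\tilde n$, so taking $t$ larger than that bound gives a contradiction.

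The main obstacle is the inductive step. A lower layer may present already-nonlinear features of past inputs to the top layer, so the top layer's linear aggregation is no longer an aggregation of raw inputs, and the clean argument ``$h$ linear, so it must store everything'' breaks. I would first try to show that each layer's output at time $t$ is a function of a fixed-dimensional linear statistic of the activated outputs of the layer below. Composing these statistics, the full network state at time $t$ then lies in the image of a map of dimension at most $L\tilde n$. We would then invoke the same encoding requirement as in the lemma, namely that $\fb_t$ for generic $\At$ must determine enough of the history to force dimension growing linearly in $t$. If that encoding claim proves too strong, the fallback is a degree-counting argument: restrict to one-hot-like or scalar-scaled inputs and compare the number of independent cross-time multiplicative terms achievable, which grows with $L$ but not with $t$, against the $t$ nested products required by $f$.
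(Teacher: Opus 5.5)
Your outer skeleton (same witness $f$, induction on depth with Lemma~\ref{apx:lemma_wfa_1layer} as the base case) matches the paper, but the inductive step you propose cannot work. You aim to show that the whole network state at time $t$ lies in the image of a map of dimension at most $L\tilde n$. That bound holds automatically and contradicts nothing, because the ``encoding requirement'' you want to carry over from the lemma is not a property of $f$ itself. The future of $f$ depends on the past only through $\fb_t\in\Rbb^n$, so an $n$-dimensional state already carries all the information needed; the 2RNN itself shows this.

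Your supporting claims are also false for large $t$. The map $\fb_t:\Rbb^{dt}\to\Rbb^n$ cannot be injective once $dt>n$. Its Jacobian is $n\times dt$, so its rank never exceeds $n$ and cannot grow with $t$.

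The lemma obtains $\tilde n\geq dt$ only because a one-layer pre-activation is a \emph{linear} function of the raw inputs. Then $\fb_t$ must be constant along the kernel of a linear map $\Rbb^{dt}\to\Rbb^{\tilde n}$, and a generic multilinear $\fb_t$ has no such invariant directions. Once lower layers feed nonlinear features upward, the top state is a nonlinear statistic of the inputs, and that kernel argument no longer applies. This is exactly the obstacle you name, and ``composing the statistics'' does not remove it.

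The missing idea is the paper's descent step. If $\hb^{(L)}_t$ supports computing $\fb_t$, then there are two cases:
\begin{itemize}
\item it encodes the raw history $(\xb_1,\dots,\xb_t)$, which is impossible for $dt>\tilde n$; or
\item it encodes $(\xb_t,\dots,\xb_{t-i+1},\fb_{t-i})$ for some $i$.
\end{itemize}
In the second case, $\hb^{(L)}_t$ is a linear aggregate of $\yb^{(L-1)}_1,\dots,\yb^{(L-1)}_t$, and cross-time mixing inside a layer is purely additive. So the multiplicative summary $\fb_{t-i}$ must already be carried by a single lower output $\yb^{(L-1)}_{t-j}$. After changing $\phi^{(L-1)}$, the first $L-1$ layers then compute $f$. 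Iterating brings you down to one layer, where the lemma gives the contradiction.

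Your degree-counting fallback does not rescue the argument either. For non-polynomial activations, cross-time products of unbounded length are available at constant depth. Pass $x_t$ up, apply $\log$, accumulate with $\Vb=1$ in the next layer, then apply $\exp$; this yields $\prod_{s\le t}x_s$ for positive scalar inputs. Both $\log$ and $\exp$ are homeomorphisms onto their images, so this stays within the activations the paper allows. The number of cross-time multiplicative terms is therefore not bounded in terms of $L$, and a degree count cannot separate the two classes.
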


\begin{proof}
    We consider a 1 layer 2RNN of hidden size $n$ with parameters $\At, \Ub=\mathbf{0}, \Vb=\mathbf{0}, \bb = \mathbf{0}, \hb_0 = \fb_0$ satisfying the conditions of Lemma~\ref{apx:lemma_wfa_1layer}, computing the function $f: (\Rbb^d)^n \to \Rbb^n$   mapping $(\xb_1, \dots, \xb_T) \mapsto (\fb_1, \dots, \fb_T)$ with the computation at each time step being $\fb_t = (\At \times_2 \x_t) \fb_{t-1}$. By unrolling the computation, we obtain $\fb_t = (\At \times_2 \x_t)(\At \times_2 \x_{t-1})\dots (\At \times_2 \x_1) \fb_0$. The function $f$ can be viewed as a multi-linear mapping of $(\xb_1, \dots, \xb_T)$, and given its recursive nature, it can also be computed as a bilinear map between $\xb_T$ and $\fb_{T-1}$, i.e. $f: \Rbb^d \times \Rbb^n \to \Rbb^n$, with $f(\xb_T, \fb_{T-1}) = (\At \times_2 \x_T) \fb_{T-1}$. More generally, we can compute $\fb_T = (\At \times_2 \x_T) \dots (\At \times_2 \x_{T-i+1}) \fb_{T-i}$, so $f : ((\Rbb^d)^i \times \Rbb^n)\to \Rbb^n$ is a multi-linear map of $(\xb_T,..., \xb_{T-i+1}, \fb_{T-i})$.

    Now consider a RNN of hidden size $\tilde{n}$, depth $L$ and nonlinear activations $\phi^{(l)}$ applied only in depth that computes at each time step $t \in [T]$ and layer $l \in [L]$:  $\yb_t^{(l)} = \phi^{(l)} (\hb_t^{(l)})$ where  $\hb_t^{(l)}= \Ub^{(l)}  \yb_{t}^{(l-1)} + \Vb^{(l)} \hb_{t-1}^{(l)} + \bb^{(l)}$. We assume $\phi^{(l)}$'s are homeomorphisms (as are the common activations $\tanh$, sigmoid and ReLU on $\Rbb_{\geq 0}$). 

    We will suppose this RNN is capable of computing $f$ for any sequence length $T$ and show that it leads to a contradiction. 

    We write an output $\yb_t^{(L)}$ that would achieve the computation as $\bar{\fb}_t \in \Rbb^{\tilde{n}}$ and suppose that there exists a linear map $\ell \in \linearmaps{\tilde{n},n}$ to complete the computation $\fb_t =  \ell(\bar{\fb}_t)$ for all $t \in [T]$.

    Assume $\yb_t^{(L)} = \phi^{(L)}( \hb_t^{(L)}) =  \bar{\fb}_t \; \forall t$. Then, either (1)  $\hb_t^{(L)}$ encodes the full sequence $(\xb_1, \dots, \xb_t)$ or (2) $\hb_t^{(L)}$ encodes the 2RNN state from some previous time step along with all subsequent inputs, i.e., $(\xb_{t}, \dots, \xb_{t-i+1}, \fb_{t-i})$ for some $i \in [t-1] $.
    In case (1), since $\phi^{(l)}$ is a homeomorphism at all layers $l$ and the dimension of the space of all possible input sequences of length $t$, $(\xb_1, \dots, \xb_t)$, is $dt$, if  $\hb_t^{(L)}$ encodes $(\xb_1, \dots, \xb_t)$, it is necessary that $\hb_t^{(L)}$ lies in a space of dimension at least $dt$.

    However, we can always find $t$ such that the dimension of $\hb_t^{(L)}$, $\tilde n < dt$, contradicting the dimension condition required for  $\hb_t^{(L)}$ to encode $(\xb_1, \dots, \xb_t)$. We thus have to turn to case (2):  $\hb_t^{(L)}$ must encode $(\xb_{t}, \dots, \xb_{t-i+1}, \fb_{t-i})$.
 
    Recall that $\hb_t^{(L)}$ is a linear map of $(\yb_1^{(L-1)}, \dots,\yb_t^{(L-1)} )$. The information $\fb_{t-i}$ is thus encoded in (at least) one of the inputs $(\yb_{t-i}^{(L-1)}, \dots,\yb_t^{(L-1)} )$. Without loss of generality, let $\yb_{t-j}^{(L-1)}$ with $j \leq i$ encode $\fb_{t-i}$. We have that $\yb_{t-j}^{(L-1)}$ encodes all the information of $(\xb_{t-j}, \dots \xb_{t-i}, \fb_{t-i})$. This implies that one can find $\tilde{\phi}^{(L-1)}$ such that $\yb_{t-j}^{(L-1)} = \bar{\fb}_{t-j}$, which in turn implies that a RNN could compute $\bar{\fb}_t$ with $L-1$ layers. The same argument can then be used to show that if it can be computed with $L-1$ layers, it can be done with $L-2$, and so on, until 1 layer. 
    
    However, by Lemma~\ref{apx:lemma_wfa_1layer}, the function $f$ cannot be computed by any 1-layer RNN with activation only in depth: a contradiction.  
\end{proof}

\section{Experimental details}
\subsection{Copy, copy-sinus and sinus tasks (synthetic experiments)}
Given inputs $\xb_t \in \mathbb{R}^5$ sampled from $\mathcal{N}(0,1)$, the copy task presented in Section~\ref{sec:copy} consists of predicting $\xb_{t-p}$ at each time step, with a lag set to $p=8$.
Similarly, the copy-sinus and sinus tasks consist in predicting $\sin(\omega \xb_{t-p})$, with frequency $\omega=3$ and lags $p=4$ and 0, respectively. 
The training sets contain 10,000 sequences of length 16; the validation and test sets each contain 200 sequences. 
Models were trained with a batch size of 128, learning rates between 0.001 and 0.002 and early stopping on the validation set with a patience of 400. 
For the copy task, we studied a linear RNNs, while for the copy-sinus and sinus tasks, we used a RNN with $\tanh$ activation applied only in depth. 
Weights were initialized from a random uniform distribution, $\mathcal{U}[-\frac{1}{\sqrt{n}},\frac{1}{\sqrt{n}}]$. Figures~\ref{fig:copy_real} and~\ref{fig:copy_sin} show the average values over 3–5 random seeds, with the standard deviation indicated by the shaded area.
All experiments were run using one GPU~(RTX8000, L40S or V100) with 32GB of memory. Generating one point took less than 30 minutes for the shortest experiments~(copy) and less than 1.5 hours for the longest~(copy-sinus).

\subsection{Parity (synthetic experiments)}
To produce inputs for the parity task, we take the signs of $\xb_t \in \mathbb{R}^5$ sampled from $\mathcal{N}(0,1)$, resulting in vectors of filled with $-1$s and 1s. The training set contains 10,000 sequences of length 20; the validation and test sets each contain 2000 sequences. The targets consist in element-wise products of the input sequences: $\yb_t = \xb_1... \xb_t$. 
Models were trained with a batch size of 128, a learning rate 0.001~(0.0005 for few runs) and early stopping with a patience of 400 on the validation set. 
Figure~\ref{fig:parity} displays the mean values computed over 3–5 random seeds, with the shaded region representing the standard deviation.
We studied RNNs with $\tanh$ activation applied either in depth or in recurrence, with weights were initialized from a uniform distribution, $\mathcal{U}[-\frac{1}{\sqrt{n}},\frac{1}{\sqrt{n}}]$. 
Experiments were run on a single GPU~(RTX8000, L40S or V100) with 32GB of memory, and it took between 10 minutes and 2 hours for each point. 

\subsection{Language modeling experiments - Tiny Shakespeare}
Language modeling experiments were conducted on the Tiny Shakespeare dataset~\cite{Karpathy2015} at the character level, using an embedding size of 107. The dataset was split into training, validation and test sets with a ratio of 0.8, 0.1 and 0.1, respectively. Models were trained on sequences of length 64, with early stopping based on validation performance and a patience of 200 epochs. 
If the training loss plateaued before showing any learning~(i.e., decreasing a reasonable amount), optimization was restarted. In Figure~\ref{fig:shakespeare}, we present the average values over 3–5 random seeds, with the standard deviation shown as a shaded area. All experiments were run on a single GPU (RTX8000, L40S or V100) with 32GB of memory, and training times ranged from 40 minutes to 6 hours. 

RNNs and CPRNNs were initialized with weights drawn from a uniform distribution, $\mathcal{U}[-\frac{1}{\sqrt{n}},\frac{1}{\sqrt{n}}]$, and had $\tanh$ activation in the recurrent connections. 
For RNNs, a batch size of 128 and a learning rate of 0.001 were set across all depths and hidden sizes.
For CPRNNs, two optimization regimes were adopted to ensure stability: one with a higher learning rate (0.0005 - 0.001) and a batch size of 128, and another with a lower learning~(0.0001) and batch size of 32. 
The former was used for all hidden sizes at $L=1$, as well as for $n=[64, 128, 256]$ at $L=2$ and $L=4$. The latter was required for the larger models with $n=[512,1024]$ at $L=2$ and $L=4$.
Similarly, the rank was set to twice the hidden size for all configurations at $L=1$, except for $n=1024$, and for $n=[64, 128]$ at $L=2$ and $L=4$. In all other cases, the rank was set equal to the hidden size. 
S4 models used the diagonal-plus-low-rank kernel~\cite{gu2022parameterization}, GeLU activation, input normalization, and a dropout rate of 0.1. These models were trained with a learning rate of 0.0001 and a batch size of 32.

\subsection{Long Range Arena Benchmark on S4}
All experiments on the Long Range Arena benchmark~\cite{tay2021long} were conducted using S4 models based on the official implementation from the original S4 paper~\cite{gu2021efficiently}. We used an RTX8000 GPU with 48GB of memory. Each run for the shortest experiment~(ListOps) took between 5 and 10 hours, while for Retrieval and Images it ranged between 15 to 35 hours. The longest, Pathfinder, required between 30 and 60 hours. 

All parameters, except for the depth $L$ and number of features $H$, were kept the same as the original settings for each tasks.  
The depth used in the original paper was $L=6$, for our purposes, we varied the depth from 2 to 8 and adjusted the number of features $H$ to maintain a constant parameter count~(see Table~\ref{tab:s4-lra-apx}). 

\begin{table}[h!]
    \caption{S4 parameters for Long Range Arena experiments: depth $L$, number of features $H$, and the resulting number of parameters. State dimension $N$ is fixed (4 for Retrieval and ListOps, 64 for Images and Pathfinder). Rows corresponding to $L=6$ reflect configurations from~\cite{gu2021efficiently}.}
    \label{tab:s4-lra-apx}
    \centering
    \begin{tabular}{llcccc}
        \toprule
        $L$ & Parameter & Retrieval & Images & ListOps & Pathfinder \\
        \midrule
        \multirow{2}{*}{6} 
            & $H$ & 256 & 512 & 256 & 256 \\
            & $\params$ & 808k & 3.6M & 808k & 1.3M \\
        \midrule
        \multirow{2}{*}{2} 
            & $H$ & 446 & 922 & 446 & 484 \\
            & $\params$ & 808k & 3.6M & 808k & 1.3M \\
        \midrule
        \multirow{2}{*}{3} 
            & $H$ & 364 & 745 & 364 & 385 \\
            & $\params$ & 810k & 3.6M & 810k & 1.3M \\
        \midrule
        \multirow{2}{*}{8} 
            & $H$ & 221 & 440 & 221 & 220 \\
            & $\params$ & 806k & 3.6M & 806k & 1.3M \\
        \bottomrule
    \end{tabular}

\end{table}